\documentclass{article}
\usepackage[T1]{fontenc}
\usepackage[utf8]{inputenc}

\usepackage{geometry}
\usepackage{natbib}

\usepackage[unicode=true,pdfusetitle,
bookmarks=true,bookmarksnumbered=false,bookmarksopen=false,
 breaklinks=true,pdfborder={0 0 0},pdfborderstyle={},backref=false,colorlinks=true]{hyperref}
\hypersetup{citecolor=blue} 

\usepackage[utf8]{inputenc}
\usepackage{graphicx}
\usepackage{amsmath}
\usepackage{amssymb}
\usepackage{amsthm}
\usepackage{amsfonts}
\usepackage{mathrsfs}
\usepackage{mathtools}
\usepackage{appendix}
\usepackage{booktabs}
\usepackage{soul}
\usepackage{hyperref}
\usepackage{enumitem}
\usepackage{float}
\usepackage{array}
\usepackage{multirow}
\usepackage[algo2e,ruled]{algorithm2e}

\usepackage{todonotes}
\usepackage[showdeletions]{color-edits}
\addauthor[Matthew]{mz}{red}
\definecolor{myForestGreen}{RGB}{34, 139, 34}
\addauthor[Yudong]{yc}{myForestGreen}
\addauthor[Guy]{gz}{blue}

\usetikzlibrary{backgrounds}
\usetikzlibrary{arrows.meta}

\theoremstyle{plain}
\newtheorem{theorem}{\protect\theoremname}[section]
\theoremstyle{plain}
\newtheorem{lemma}[theorem]{\protect\lemmaname}
\theoremstyle{remark}

\theoremstyle{plain}
\newtheorem{corollary}[theorem]{\protect\corollaryname}
\theoremstyle{plain}

\theoremstyle{definition}
\newtheorem{definition}[theorem]{\protect\definitionname}
\theoremstyle{plain}

\providecommand{\corollaryname}{Corollary}
\providecommand{\lemmaname}{Lemma}
\providecommand{\remarkname}{Remark}
\providecommand{\theoremname}{Theorem}
\providecommand{\conjecturename}{Conjecture}
\providecommand{\definitionname}{Definition}
\providecommand{\propositionname}{Proposition}

\newcommand{\argmax}{\text{argmax}}
\newcommand{\R}{\mathbb{R}}

\renewcommand{\P}{\mathbb{P}}
\newcommand{\E}{\mathbb{E}}
\newcommand{\V}{\mathbb{V}}
\newcommand{\var}{\mathrm{Var}}
\newcommand{\ind}{\mathbb{I}}
\newcommand{\wt}{\widetilde}
\newcommand{\bigo}{O}
\newcommand{\tbigo}{\widetilde{O}}
\newcommand{\zero}{\mathbf{0}}

\newcommand{\St}{\mathcal{S}}
\newcommand{\A}{\mathcal{A}}
\newcommand{\pistar}{\pi^\star}
\newcommand{\vstar}{V^\star}
\newcommand{\vtilstar}{\widetilde{V}^\star}
\newcommand{\vpi}{V^\pi}
\newcommand{\hstar}{h^\star}
\newcommand{\rhostar}{\rho^\star}
\newcommand{\rhopi}{\rho^\pi}
\newcommand{\qstar}{Q^\star}
\newcommand{\qhatstar}{\widehat{Q}^\star}

\newcommand{\reg}{\mathrm{Regret}}

\newcommand{\clipH}{\mathrm{Clip}_H}
\newcommand{\imax}{i_{\mathrm{max}}}

\newcommand{\nleave}{N_{\mathrm{leave}}}
\newcommand{\nstay}{N_{\mathrm{stay}}}

\newcommand{\itersk}{\mathrm{iters}_k}

\newcommand{\stay}{\texttt{stay}}
\newcommand{\leave}{\texttt{leave}}

\usepackage{scalerel,stackengine}
\newcommand\cig[1]{\scalerel*[5.5pt]{\Big#1}{
  \ensurestackMath{\addstackgap[1.5pt]{\big#1}}}}

\global\long\def\spannorm#1{\left\Vert #1\right\Vert _{\textnormal{sp}}}
\global\long\def\bspannorm#1{\bigl\Vert #1\bigr\Vert _{\textnormal{sp}}}

\global\long\def\tspannorm#1{\Vert #1\Vert _{\textnormal{sp}}}

\newcommand{\perstepv}{\V_\gamma^\star}
\newcommand{\varstar}{\mathrm{Var}_\gamma^\star}
\newcommand{\vardiff}{\mathrm{Var}^\mathrm{diff}}

\newcommand{\tmodel}{\mathscr{T}_\mathrm{model}}
\newcommand{\tmart}{\mathscr{T}_\mathrm{mart}}
\newcommand{\tind}{\mathscr{T}_\mathrm{ind}}

\global\long\def\varsa#1{\mathbb{V}_{s,a}\left( #1\right)}
\global\long\def\varstat#1{\mathbb{V}_{s_t,a_t}\left( #1\right)}

\title{Optimal Variance-Dependent Regret Bounds for Infinite-Horizon
MDPs}

\usepackage{authblk}
\author{Guy Zamir}
\author{Matthew Zurek}
\author{Yudong Chen}
\affil{Department of Computer Sciences, University of Wisconsin--Madison\\\vspace{0.8em}
\texttt{gzamir@wisc.edu}\quad\texttt{matthew.zurek@wisc.edu}\quad\texttt{yudongchen@cs.wisc.edu}}

\date{}

\begin{document}

\maketitle

\begin{abstract}
    Online reinforcement learning in infinite-horizon Markov decision processes (MDPs) remains less theoretically and algorithmically developed than its episodic counterpart, with many algorithms suffering from high ``burn-in'' costs and failing to adapt to benign instance-specific complexity. In this work, we address these shortcomings for two infinite-horizon objectives: the classical average-reward regret and the $\gamma$-regret. We develop a single tractable UCB-style algorithm applicable to both settings, which achieves the first optimal variance-dependent regret guarantees. Our regret bounds in both settings take the form $\tbigo( \sqrt{SA\,\text{Var}} + \text{lower-order terms})$, where $S,A$ are the state and action space sizes, and $\text{Var}$ captures cumulative transition variance. This implies minimax-optimal average-reward and $\gamma$-regret bounds in the worst case but also adapts to easier problem instances, for example yielding nearly constant regret in deterministic MDPs. Furthermore, our algorithm enjoys significantly improved lower-order terms for the average-reward setting. With prior knowledge of the optimal bias span $\tspannorm{\hstar}$, our algorithm obtains lower-order terms scaling as $\tspannorm{\hstar}S^2 A$, which we prove is optimal in both $\tspannorm{\hstar}$ and $A$. 
    Without prior knowledge, we prove that no algorithm can have lower-order terms smaller than $\tspannorm{\hstar}^2SA$, and we provide a prior-free algorithm whose lower-order terms scale as $\tspannorm{\hstar}^2S^3A$, nearly matching this lower bound. Taken together, these results completely characterize the optimal dependence on $\spannorm{\hstar}$ in both leading and lower-order terms, and reveal a fundamental gap in what is achievable with and without prior knowledge.
\end{abstract}

\section{Introduction}

We study online reinforcement learning (RL) in tabular Markov decision processes (MDPs), where an agent interacts with an unknown environment and aims to maximize cumulative reward. We specifically consider infinite-horizon continuing settings, where the environment does not contain a built-in reset mechanism. Despite its practical relevance and foundational significance, online infinite-horizon RL is much less well understood theoretically than the finite-horizon episodic setting. In this work we study two particular performance measures for infinite-horizon problems.

The most classical performance objective is the average-reward regret $\sum_t (\rhostar-r_t) $ introduced by the seminal work \citet{auer_near-optimal_2008}, which measures the instantaneous reward $r_t$ of the agent against the optimal gain $\rhostar$, which is the best long-term average reward per timestep of any policy. The reset-less nature of infinite-horizon online RL requires additional structural assumptions to permit sublinear regret bounds, such as for the MDP to be communicating, and these also ensure that $\rhostar$ can be defined independent of an initial state. 
\citet{auer_near-optimal_2008} proved a regret lower bound of $\Omega(\sqrt{DSAT})$, where $D$ is the MDP diameter, $S$ and $A$ are the numbers of states and actions, and $T$ is the time horizon.
Extensive research effort has since been dedicated to matching this regret lower bound and relaxing the communicativity assumptions by replacing $D$ with $\tspannorm{\hstar}$, the maximum gap in cumulative rewards between any two starting states, which is smaller than $D$ and finite even in non-communicating MDPs (e.g., \citealt{bartlett_regal_2012, fruit_efficient_2018, fruit_near_2019, talebi2018variance, zhang_ucbavg_2023}), culminating in the minimax-optimal algorithms of \citet{zhang_regret_2019, boone_achieving_2024}.

Another performance measure for infinite-horizon online RL is the $\gamma$-regret $\sum_t ((1-\gamma)V_\gamma^\star(s_t) - r_t) $ introduced by \citet{liu_regret_2021}, where $V_\gamma^\star(s_t)$ denotes the $\gamma$-discounted optimal value function at the state $s_t$ encountered by the agent. 
While $\gamma$-regret may appear to be a weaker regret notion due to its comparison with the agent's own trajectory rather than that of an optimal policy, this feature has the advantage of enabling sublinear regret bounds without the structural assumptions needed for the average-reward setting. Furthermore, when such communicativity assumptions do hold, we show the $\gamma$-regret can actually be used to control the average-reward regret in an optimal manner, which is the approach we take in this paper (see Lemma~\ref{lem:avg_to_disc_reduction}). Recent work has developed algorithms achieving the minimax-optimal $\gamma$-regret, $\tbigo(\sqrt{SAT/(1-\gamma)})$ \citep{he_nearly_2021, ji_regret-optimal_2023}.

Despite all of the aforementioned algorithmic progress on the average-reward and the $\gamma$-regret settings, there are  several significant limitations of all existing work in both settings relative to episodic online RL.
First, existing minimax-optimal algorithms incur a large burn-in cost, meaning that they only attain the optimal regret rate when $T$ is very large.
For example, the only computationally efficient and minimax-optimal algorithm for the average-reward regret, PMEVI-DT \citep{boone_achieving_2024}, has a regret bound of $\tbigo\cig( \sqrt{\tspannorm{\hstar}SAT} + \tspannorm{\hstar}S^{\frac{5}{2}}A^{\frac{3}{2}}T^{\frac{9}{20}}\cig)$, and thus it only matches the optimal rate for $T\geq \tspannorm{\hstar}^{10}S^{40}A^{20}$.
This contrasts the episodic setting, where significant effort has been expended towards remedying such issues (e.g., \citealt{zhang_is_2021, zhou_sharp_2023, zhang_settling_2024}).
Second, prior work for infinite-horizon settings fails to adapt to easier problem instances such as low-variance or deterministic MDPs, where substantially smaller regret should be possible.
In episodic RL, this gap has been addressed through variance-dependent regret bounds, which interpolate between stochastic and deterministic environments and can be optimal in both regimes \citep{zhou_sharp_2023}.
However, no optimal variance-dependent regret guarantees have been established for either infinite-horizon setting that we consider.

\begin{table}[t]
\label{fig:reg_table}
\centering
\resizebox{\textwidth}{!}{\begin{tabular}{c | c c c c c c}
     Algorithm & $\widetilde{O}(\cdot)$ Regret  & \makebox[1cm]{Priorless?} & \makebox[2cm]{Burn-In Cost} & \makebox[1.2cm]{Tractable?} & Type \\
     \hline
     UCRL2 \citep{auer_near-optimal_2008} & $DS\sqrt{AT}$  & \checkmark & N/A & \checkmark & EVI\\
     REGAL \citep{bartlett_regal_2012} & $\tspannorm{\hstar}S\sqrt{AT}$  & $\times$ & N/A & $\times$ & EVI \\
     SCAL \citep{fruit_efficient_2018} & $\tspannorm{\hstar}S\sqrt{AT}$  & $\times$ & N/A & \checkmark & EVI \\
     KL-UCRL \citep{talebi2018variance} & $\sqrt{ST\sum_{s,a}\V^\star_{s,a}} + D\sqrt{T}$  & \checkmark & N/A & \checkmark & EVI \\
     EBF \citep{zhang_regret_2019} & $\sqrt{\tspannorm{\hstar}SAT}$  & $\times$ & (*) & $\times$ & EVI \\
     UCB-AVG \citep{zhang_ucbavg_2023} & $S^5A^2\tspannorm{\hstar}\sqrt{T}$ & $\times$ & N/A & \checkmark & UCB \\
     PMEVI-DT \citep{boone_achieving_2024} & $\sqrt{\tspannorm{\hstar}SAT}$  & \checkmark & $\tspannorm{\hstar}^{10} S^{40} A^{20}$ & \checkmark & EVI \\
     $\gamma$-UCB-CVI \citep{hong_reinforcement_2025} & $\tspannorm{\hstar}S\sqrt{AT}$ & $\times$ & N/A & \checkmark & UCB \\
     \hline
     Corollary \ref{cor:regret_prior} & $\sqrt{\mathrm{Var}^\star_{1-1/T}SA} + \Gamma \tspannorm{\hstar}SA$  & $\times$ & $\tspannorm{\hstar}S^3A$ & \checkmark & UCB \\
    Corollary \ref{cor:regret_no_prior}  & $\sqrt{\tspannorm{\hstar}SAT}$  & \checkmark & $\spannorm{\hstar}^2S^3A$ & \checkmark & UCB \\
     \hline
     \hline
     \textbf{Lower Bound} \citep{auer_near-optimal_2008} & \multicolumn{4}{c}{$\sqrt{DSAT}$, implies $\sqrt{\tspannorm{h^\star}SAT}$}
\end{tabular}}
\caption{\textbf{Comparison of algorithms and regret bounds for average-reward MDPs.} 
Here $\tspannorm{\hstar}$ is the span of the optimal bias function and $D$ is the diameter, which satisfy $\tspannorm{\hstar} \leq D$. We always have $\Gamma \leq S$, and $\Gamma=1$ in deterministic MDPs. 
$\mathrm{Var}^\star_{1-1/T}$ and $\V^\star_{sa}$ are instance-dependent variance parameters, which in particular are $0$ for deterministic MDPs. 
Also $\mathrm{Var}^\star_{1-1/T} \leq \widetilde{O}(\tspannorm{h^\star}T + \tspannorm{\hstar}^2)$. 
Only the leading terms as $T \to \infty$ of the regret bound are shown. 
The burn-in cost is defined as the smallest $T$ for which the algorithm achieves a minimax-optimal regret of $\widetilde{O}(\sqrt{\tspannorm{h^\star}SAT})$, or N/A if this does not occur. 
Priorless means that an algorithm does not require prior knowledge  about the value of $\tspannorm{h^\star}$. (*) The burn-in cost of EBF \citep{zhang_regret_2019} is $\tspannorm{\hstar}^6S^{4}A^4+\tspannorm{\hstar}^6 S^6 A^2+ \tspannorm{\hstar}^3 S^{12} A^3$.
}
\end{table}

\subsection{Contributions}

In this paper, we establish the first optimal variance-dependent regret guarantees for infinite-horizon MDPs. 
Our main contribution is a single tractable algorithm that, in both the average-reward and $\gamma$-regret settings, attains a regret bound of the form
\begin{equation*} 
    \tbigo\big( \sqrt{\text{Var}_\gamma\ SA} + \text{lower-order terms} \big).
\end{equation*}
We handle both of these infinite-horizon settings in a unified way by treating $\gamma = 1-\frac{1}{T}$ as a tuning parameter in the average-reward case.
Here $\text{Var}_\gamma$ is a cumulative variance term that measures the stochasticity of the transition dynamics along the learner's trajectory.
When the MDP is deterministic, we have $\text{Var}_\gamma=0$, and thus the resulting regret is independent of $T$ up to logarithmic factors. We also show $\text{Var}_\gamma \leq  \tbigo(\tspannorm{\vstar_\gamma}T + \tspannorm{\vstar_\gamma}^2)$, so by using that $\tspannorm{\vstar_\gamma} \leq 2\tspannorm{\hstar}$ in weakly communicating MDPs \citep{wei_model-free_2020}, or simply that $\tspannorm{\vstar_\gamma} \leq \frac{1}{1-\gamma}$, we obtain minimax-optimal regret bounds in both the average-reward and $\gamma$-regret settings, respectively.

Focusing on the average-reward setting, another main contribution is that we significantly improve the lower-order terms relative to prior work.
When given prior knowledge of the bias span $\tspannorm{\hstar}$, our (variance-dependent and minimax-optimal) result contains lower-order terms scaling as $\tspannorm{\hstar}S^2A$, and we show via matching lower bounds that this dependence on $\tspannorm{\hstar}$ and $A$ is unimprovable.
Additionally, without prior knowledge of $\tspannorm{\hstar}$, we obtain lower-order terms scaling as $\tspannorm{\hstar}^2S^3A$.
We also show a surprising hardness result that no algorithm without prior knowledge of $\spannorm{\hstar}$ can obtain lower-order terms better than $\tspannorm{\hstar}^2SA$. 
Taken together, our results nearly completely characterize the optimal dependence on $\tspannorm{\hstar}$ in both the leading and lower-order terms and reveal a fundamental separation between what is achievable with and without prior knowledge.

To obtain our improved bounds for $\gamma$-regret, we develop a model-based, upper-confidence-bound(UCB)-based algorithm called Fully Optimizing Clipped UCB Solver (FOCUS).
We improve upon previous UCB-based algorithms for $\gamma$-regret by incorporating a sharp Bernstein-style bonus and span-clipping into our empirical Bellman operator.
Crucially, instead of performing a single step of value iteration at each update, FOCUS repeatedly applies the empirical Bellman operator until convergence.
This design ensures that the Q-estimates fully exploit the collected data at each update and is essential for obtaining variance-dependent bounds. 
Finally, FOCUS is the first UCB-style algorithm to achieve minimax-optimal regret guarantees for the online average-reward setting, contrasting previous optimal algorithms which instead depend on extended value iteration (EVI).

\subsection{Related Work}
Here we discuss related work, and we also refer to Appendix \ref{sec:related_work} for more discussion.
\paragraph{Online Average-Reward}
The average-reward setting is classical and well-studied for online infinite-horizon RL.
See Table \ref{fig:reg_table} for a comparison of important prior work; however, given the extensive history of this problem, this table is non-exhaustive.
The seminal work of \cite{auer_near-optimal_2008} introduces UCRL2 and establishes a regret bound of $\tbigo(DS\sqrt{AT})$, along with a lower bound of $\Omega(\sqrt{DSAT})$.
\cite{bartlett_regal_2012} establish regret bounds that depend on $\tspannorm{\hstar}$ instead of $D$, but their algorithm REGAL is computationally intractable algorithm and requires prior knowledge of $\tspannorm{\hstar}$.
The computationally effcient algorithm SCAL by \cite{fruit_efficient_2018} utilizes the span-clipping technique to match the bound of REGAL.
The EBF algorithm of \cite{zhang_regret_2019} is the first to achieve minimax optimality of $\tbigo(\sqrt{\tspannorm{\hstar}SAT})$, but it is intractable and requires prior knowledge.
\cite{boone_achieving_2024} resolves these issues by developing PMEVI-DT, which is tractable, minimax optimal and prior-knowledge-free.
\cite{talebi2018variance} obtain variance-aware guarantees for the KL-UCRL algorithm, but their bounds remain suboptimal in the worst-case and suffer $\sqrt{T}$ dependence even on deterministic MDPs.
All aforementioned algorithms are based on EVI. UCB-based algorithms, which like ours also employ discounting to approximate the average-reward problem, include
\citet{wei_model-free_2020, zhang_ucbavg_2023, hong_reinforcement_2025}.

\paragraph{$\gamma$-regret}
The $\gamma$-regret notion is introduced by \cite{liu_regret_2021}, who prove the first upper bound for this framework with their Double Q-learning algorithm.
The subsequent work of \cite{he_nearly_2021} proposes UCBVI-$\gamma$, a model-based algorithm with Bernstein-style bonuses, and achieves a $\gamma$-regret bound that is minimax-optimal in the leading term.
\cite{ji_regret-optimal_2023} develop Q-SlowSwitch-Adv, a model-free algorithm that matches the optimal leading term while improving lower-order dependence on $SA$.
More recently, \cite{ma2026eubrl} take a Bayesian approach with the EUBRL algorithm, which leverages epistemic uncertainty for directed exploration and achieves state-of-the-art lower-order dependence on $\frac{1}{1-\gamma}$.
We note that the definition of $\gamma$-regret in our paper matches that of \cite{liu_regret_2021}, whereas \cite{he_nearly_2021}, \cite{ji_regret-optimal_2023}, and \cite{ma2026eubrl} use slightly different definitions.
While there is not an immediate translation between these definitions, they are closely related and algorithms with good guarantees for one should have good guarantees for the others.
For further discussion on this issue, see Appendix A.2 of \cite{he_nearly_2021} and Appendix A of \cite{ji_regret-optimal_2023}.

\section{Preliminaries}

\paragraph{MDP Basics}

A Markov Decision Process is a tuple $(\St, \A, P, r, \mu_0)$\footnote{Sometimes we consider classes of MDPs with the same $\St,\A,r$, and $\mu_0$. In this case we simply denote the MDP by its transition kernel $P$.}, where $\St$ is the state space, $\A$ is the action space, $P \colon \St \times \A \to \Delta(\St)$ is the transition kernel with $\Delta(\St)$ denoting the probability simplex over $\St$, $r\colon \St \times \A \to [0,1]$ is the reward function, and $\mu_0\in \Delta(\St)$ is the initial state distribution.
We assume $\St$ and $\A$ are finite sets with $S \coloneq |\St|$ and $A \coloneq |\A|$.
A (stationary) policy is a mapping $\pi : \St \to \Delta(\A)$.
When $\pi$ is a deterministic policy, we treat $\pi$ as a mapping $\St \to \A$.
Let $\Pi$ be the set of all stationary deterministic policies.
An initial state distribution $\mu \in \Delta(\St)$ and a policy $\pi$ induce a distribution over trajectories $(s_0, a_0, s_1,a_1,\dots)$, where $s_0 \sim \mu$, $a_t \sim \pi(s_t)$, and $s_{t+1} \sim P(\cdot|s_t,a_t)$.
We let $\mathbb{E}^\pi_{s}$ denote the expectation with respect to this distribution when $\mu$ satisfies $\mu(s)=1$.

For a policy $\pi$ and discount factor $\gamma \in (0,1)$, the discounted value function $V^\pi_\gamma \in [ 0, \frac{1}{1-\gamma} ]^\St$ is defined by $\vpi_\gamma(s) = \mathbb{E}^\pi_s[ \sum_{t=0}^\infty \gamma^t r_t ]$, where $r_t = r(s_t,a_t)$.
Also define the optimal value function $\vstar_\gamma \in [ 0, \frac{1}{1-\gamma}]^\St$ by $\vstar_\gamma(s) = \sup_{\pi\in\Pi} \vpi_\gamma(s)$.
We often write $P_{s,a}$ to denote the row vector such that $P_{s,a}(s') = P(s'|s,a)$.
Then for any $V\in\R^\St$ we have $P_{s,a}V=\mathbb{E}_{s'\sim P(\cdot|s,a)}[V(s')]$.
The gain of a policy $\pi$, $\rho^\pi \in [0,1]^\St$, is defined by $\rhopi(s) = \lim_{T \to \infty} \frac{1}{T}\E^\pi_s[\sum_{t=0}^{T-1} r_t]$.
We define the optimal gain $\rhostar = \sup_{\pi \in \Pi} \rho^\pi$.
The bias function of a policy $\pi$, $h^\pi \in \R^{\St}$, is $h^\pi(s) = \text{C-lim}_{T \to \infty} \E_s^\pi [\sum_{t=0}^{T-1} (r_t - \rho^\pi(s_t))]$. 
The optimal bias function is $\hstar \coloneq h^{\pistar}$, where $\pistar$ is a Blackwell-optimal policy, which satisfies $V_\gamma^{\pistar} = V_\gamma^\star$ for all sufficiently large $\gamma$.

The diameter of an MDP is $D = \max_{s_1\neq s_2} \inf_{\pi\in\Pi} \mathbb{E}_{s_1}^\pi[\eta_{s_2}]$, where $\eta_s$ denotes the hitting time of a state $s\in\St$.
An MDP is (strongly) communicating if its diameter is finite; that is, any state is reachable from any other state under some policy.
An MDP is weakly communicating if the states can be partitioned into two disjoint subsets $\St = \St_1 \cup \St_2$ such that all states in $\St_1$ are transient under all policies, and within $\St_2$ any state is reachable from any other state under some policy.
In weakly communicating MDPs, the optimal gain $\rhostar$ is constant and thus, with a slight abuse of notation, treated as a scalar.
We say an MDP is deterministic if the transition probabilities $P(\cdot|s,a)$ are one-hot vectors, i.e., from each state-action pair the agent will transit to a certain state with probability 1.
We also define $\Gamma \coloneq \max_{(s,a)\in\St\times\A} | \mathrm{supp}(P(\cdot|s,a)) |$.

\paragraph{Online RL and Regrets}
The learner interacts with the MDP for $T$ steps, starting from a state $s_1\sim \mu_0$. 
At each step $t = 1, \dots, T$, the learner at state $s_t$ chooses an action $a_t$ and observes the next state $s_{t+1} \sim P(\cdot | s_t,a_t)$.
The learner aims to maximize the total reward $\sum_{t=1}^T r(s_t,a_t)$ it receives.
We consider two different notions of regret, which measure the disparity between the learner's reward and that of an optimal policy.
Given a discount factor $\gamma \in (0,1)$, define
$
\reg_\gamma(T) \coloneq \sum_{t=1}^T \left( (1-\gamma) \vstar_\gamma(s_t) - r(s_t,a_t) \right).
$
When the  MDP is weakly communicating, further define
$
\reg(T) \coloneq \sum_{t=1}^T \big( \rho^\star - r(s_t,a_t) \big).
$
Throughout the paper, $\gamma$-regret or discounted setting refers to $\reg_\gamma(T)$, and regret or average-reward setting refers to $\reg(T)$.

To be precise, the regret and $\gamma$-regret are both functions of the underlying MDP and the trajectory $s_1,a_1,\dots,s_T,a_T$.
The trajectory is random, with a distribution determined by the MDP, the learning algorithm, and the time horizon $T$.
Oftentimes we only explicitly write $T$ as a parameter in regret because the MDP and learning algorithm are usually clear from context.
In situations where the underlying MDP $P$ and learning algorithm $\texttt{Alg}$ are not obvious, we write $\reg(T, P, \texttt{Alg})$.

\paragraph{Burn-In Cost}

We say that an algorithm achieves a burn-in cost of $f(\tspannorm{\hstar},S,A)$ if for all MDPs $M$ and any $T\geq f(\tspannorm{\hstar_M}, S_M, A_M)$, the regret can be bounded (with high probability) by $\tbigo\Big(\sqrt{\tspannorm{\hstar_M}S_M A_M T}\Big)$, where $\hstar_M, S_M$, and $A_M$ are the optimal bias function, state space size, and action space size, respectively, of $M$.
We use lower-order terms to refer to any of the additive terms in the regret bound besides the leading term, which is typically $\sqrt{\smash[b]{\text{Var}^\star_{1-1/T}SA}}$ or $\sqrt{\tspannorm{\hstar}SAT}$.
Note that the lower-order terms may sometimes dominate the regret, including when the MDP is deterministic so that $\text{Var}_{1-1/T} = 0$.

To illustrate the difference between burn-in cost and lower-order terms, consider an algorithm that obtains a regret bound of $\tbigo(\sqrt{\tspannorm{\hstar}SAT} + \tspannorm{\hstar}S^2A)$.
Then the algorithm has a lower-order term of $\tspannorm{\hstar}S^2A$ and achieves a burn-in cost of $\tspannorm{\hstar}S^3A$.
There is an conversion --- albeit a lossy one --- between the two notions.
If an algorithm has a burn-in cost of $f$, then its regret can be bounded by $\tbigo(\sqrt{\tspannorm{\hstar}SAT} + f(\tspannorm{\hstar},S,A))$, because for large $T$ the leading term dominates, and for small $T$ the regret is at most $T<f(\tspannorm{\hstar},S,A)$.
If an algorithm achieves a regret of $\tbigo(\sqrt{\tspannorm{\hstar}SAT} + f(\tspannorm{\hstar},S,A))$, then the algorithm has a burn-in cost of $\frac{f(\tspannorm{\hstar},S,A)^2}{\tspannorm{\hstar}SA}$, since for $T$ larger than this quantity the leading term dominates.

\paragraph{Additional Notation}

Let $[m]$ denote $\{1,\dots,m\}$ for any positive integer $m$.
Let $\zero, \mathbf{1}$ be the all-zero and all-one vectors.
For $x\in\R^\St$, define the span semi-norm $\tspannorm{x} \coloneq \max_{s\in\St} x(s) - \min_{s\in\St} x(s)$.
For $x,y\in\R^n$, we define $\V(x,y)\coloneq \sum_{i=1}^n x_i y_i^2 - \left(\sum_{i=1}^n x_iy_i\right)^2$. Observe that when $x$ is a probability vector, $\V(x,y)$ is the variance of a random variable that takes value $y_i$ with probability $x_i$.
For $H\geq 0$, we define the clipping operator $\clipH : \R^\St \to \R^\St$ by $(\clipH(V))(s) = \min\{ V(s), \min_{s'\in\St} V(s') + H \}$.
We also define the maximum operator $M \colon \R^{\St\times\A} \to \R^\St$ by $(MQ)(s) = \max_{a\in\A} Q(s,a)$.
The notation $O(\cdot)$ hides constant factors.
The notation $\tbigo(\cdot)$ hides constant factors and possible $\text{polylog}$ factors of $S,A,T$, and $\frac{1}{\delta}$.

\section{Main Results}

In this section, we first describe our algorithm and discuss improvements over prior work.
We then define our variance-dependent term $\varstar$ and relate it to other relevant quantities.
Next, we provide our main bound for $\gamma$-regret and implications for the discounted setting.
We then reduce the average-reward setting to the discounted setting and derive optimal bounds on the regret with and without prior knowledge.
Finally, we state lower bounds showing that our lower-order dependence on $\spannorm{\hstar}$ is optimal in both the prior knowledge and prior-free settings.

\subsection{Algorithm} \label{sec:algorithm}

\begin{algorithm2e}[h]
\caption{Fully Optimizing Clipped UCB Solver (FOCUS)} 
\label{alg:oovi}
\SetAlgoLined
\DontPrintSemicolon
\LinesNumbered
\KwIn{run time $T\geq 1$, discount factor $\gamma\in (0,1)$, failure probability $\delta\in (0,1)$, span clipping parameter $H\geq 1$}

$k \gets 1$, $U \gets \log\left( \frac{1}{\delta'} \right)$, where $\delta' = \delta / \left( 9 S^2 A T \right)$\;
$N(s,a) \gets 0, N(s,a,s') \gets 0$ for all $(s,a,s')\in\St\times\A\times\St$\;
$\widehat{Q}_1(s,a) \gets \frac{1}{1-\gamma}$ for all $(s,a)\in\St\times\A$ \;
Observe $s_1$\;

\For{$t=1,\dots,T$}{
    Take action $a_t \in \argmax_{a\in\A} \widehat{Q}_k(s_t,a)$ and observe $s_{t+1}$ \;
    $N(s_t,a_t) \gets N(s_t,a_t)+1, N(s_t,a_t,s_{t+1}) \gets N(s_t,a_t,s_{t+1})+1$ \;

    \If{$N(s_t,a_t) = 2^i$ for some integer $i\geq 0$} {
        $k \gets k+1$, $\varepsilon_k \gets \frac{1}{t(1-\gamma)}$ \;
        $N_k(s,a) \gets N(s,a), N_k(s,a,s') \gets N(s,a,s')$ for all $(s,a,s')$ \;
        $\widehat{P}^k_{s,a,s'} \gets \frac{N_k(s,a,s')}{N_k(s,a)}$ for all $(s,a,s')$ such that $N_k(s,a) > 0$ \;
        $\widehat{P}^k_{s,a,s'} \gets \frac{1}{S}$ for all $(s,a,s')$ such that $N_k(s,a) = 0$ \;
        Compute $\widehat{Q}_k = \widehat{\mathcal{T}}_k^{(m)}(\zero)$ where $m = \left\lceil \frac{1}{1-\gamma} \log \frac{1+ 32HU}{\varepsilon_k(1-\gamma)} \right\rceil$
    }
}
\end{algorithm2e}

We present our algorithm, Fully Optimizing Clipped UCB Solver (FOCUS), in Algorithm~\ref{alg:oovi}.
The algorithm uses a model-based approach: it keeps track of state-action visitation counts and uses these to maintain an empirical estimate of the transition kernel.
The algorithm runs through episodes, with a new episode starting when the number of visits to a state-action pair doubles.
At the start of the $k$th episode, the algorithm updates the empirical transition kernel $\widehat{P}^k$, then uses a clipped optimistic value iteration procedure to compute $\widehat{Q}_k$, which is an optimistic estimate of $Q^\star$.
At each time step $t$ through the remainder of the episode, the algorithm observes state $s_t\in\St$ and takes a greedy action $a_t \in \argmax_{a\in\A} \widehat{Q}_k(s_t,a)$. 

FOCUS uses the following clipped optimistic empirical Bellman operators $\widehat{\mathcal{T}}_k$.  Fixing the episode number $k$,
for any $Q\in \R^{\St\times\A}$, define
\begin{align*}
    \widehat{\mathcal{T}}_k(Q)(s,a) \coloneq r(s,a) + \gamma \widehat{P}^k_{s,a} \big(\text{Clip}_H (MQ)\big) + \gamma b\big(s,a,\text{Clip}_H (MQ)\big)
\end{align*}
where
$
	b(s,a,V) \coloneq \max\Big\{4\sqrt{\frac{\V\left( \widehat{P}^k_{s,a}, V \right)U}{\max\left\{ N_k(s,a),1 \right\}}}, 32 \frac{HU}{\max\left\{ N_k(s,a),1 \right\}} \Big\}
$
for  $V\in\R^\St$, and $U, H, \gamma$ are defined within Algorithm~\ref{alg:oovi}.
We now discuss the key features of $\widehat{\mathcal{T}}_k$.
First, it uses span-clipping, which ensures that all value estimates have span bounded by a parameter $H$.
Second, the operator incorporates a sharp, Bernstein-style bonus, similar to that of the MVP algorithm \citep{zhang_is_2021}.
Lastly, rather than performing a one-step value iteration as in UCBVI-$\gamma$ \citep{he_nearly_2021} or $\gamma$-UCB-CVI \citep{hong_reinforcement_2025}, FOCUS fully solves for the fixed point of $\widehat{\mathcal{T}}_k$ by iteratively applying it until convergence.
This final ingredient is essential for obtaining a bound tight enough to perform our average-to-discount reduction, for reasons that we further discuss in Section~\ref{sec:technical_highlights}.

\paragraph{Computational Complexity}

We note that FOCUS is computationally tractable because we design the empirical bellman operators $\widehat{\mathcal{T}}_k$ to be $\gamma$-contractions.
Consequently, the iterates converge geometrically to the fixed point of the operator and only $\tbigo(\frac{1}{1-\gamma})$ iterations suffice.
Additionally, although the doubling trick helps replace a factor of $T$ with $SA\log(T)$ in the computational complexity, its main purpose is to remove a factor of $\frac{1}{1-\gamma}$ from the regret bound for a technical reason that we explain further in Section~\ref{sec:technical_highlights}.

We now consider the computational complexity of FOCUS.
By the doubling rule, there are at most $O(SA\log T)$ episodes.
At the start of each episode, we update our empirical counts and transition kernel, which takes $\bigo\left(S^2A\right)$ time.
We then run $\smash{O(\frac{1}{1-\gamma}\log T)}$ steps of value iteration, and since each step of value iteration takes $\bigo (S^2A )$ time, this takes a total of $\smash{O(\frac{S^2A}{1-\gamma}\log T )}$ time.
So across all episodes, the total run time of FOCUS is $\smash{O\big(\frac{S^3A^2}{1-\gamma}(\log T)^2\big)}$.
For the average-reward setting where we set $\gamma = 1 - \frac{1}{T}$, the run time is $O(S^3A^2T)$.

\subsection{Variance Parameters}

Next, we introduce the main variance-dependent term that will later appear in our regret bounds.
\begin{definition}
    Let $\gamma \in (0,1)$.
    For a particular MDP with transition kernel $P$ and trajectory $\{s_t,a_t\}_{t\in[T]}$, define the cumulative variance as 
    \[
    \varstar
    \coloneq 
    \sum\nolimits_{t=1}^T \V\big(P_{s_t,a_t}, \vstar_\gamma\big).
    \]
\end{definition}
One can interpret $\varstar$ as a measure of stochasticity in an MDP.
When the transition probabilities $P(\cdot | s,a)$ are more concentrated on states of similar value, $\varstar$ will generally be smaller.
If the MDP is completely deterministic, it is easy to see that $\varstar = 0$ for any trajectory.
For a more detailed discussion on related variance parameters (in the episodic setting), see \cite{zhou_sharp_2023}.

Existing variance-dependent guarantees for infinite-horizon MDPs (i.e., \citealt{talebi2018variance}) involve terms similar to $T \perstepv $, where $\perstepv \coloneq \max_{s,a} \V(P_{s,a}, \vstar_\gamma)$ is the maximum per-step variance.
Our bounds will instead depend on $\varstar$, which is a random but much sharper quantity.
It is easy to see that $\varstar \leq T \perstepv$.
Furthermore, while $T \perstepv$ can be as large as $T \tspannorm{\vstar_\gamma}^2$, the following lemma shows that $\varstar$ scales with $T \tspannorm{\vstar_\gamma}$. Thus this lemma can be used to derive minimax-optimal span-based bounds from our variance-dependent bounds in both the discounted and average-reward settings, extending a successful approach from offline settings (e.g., \cite{zurek_span-based_2025}) to the variance measure $\varstar$ relevant for online learning.
We defer the proof to Appendix~\ref{sec:var_star_bound_proof}.
\begin{lemma} \label{lem:var_star_bound}
	For $\delta\in(0,1)$, we have with probability $1-\delta$ that
    $
    \varstar \leq O\Big(\tspannorm{\vstar_\gamma}T + \tspannorm{\vstar_\gamma}^2\log(T/\delta)\Big).
    $
\end{lemma}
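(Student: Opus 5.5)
We write $V = \vstar_\gamma$ and $H^\star = \tspannorm{V}$, and shift $V$ by a constant so that $V \in [0, H^\star]^\St$; variances are unchanged by this shift. The plan is a telescoping identity for the variance, followed by a martingale bound on the resulting fluctuation terms. The identity is
\[
\V(P_{s,a},V) = P_{s,a}V^2 - (P_{s,a}V)^2 .
\]
Summing along the trajectory gives
\begin{align*}
\varstar
&= \sum_{t=1}^T \bigl(P_{s_t,a_t}V^2 - V(s_{t+1})^2\bigr) + \sum_{t=1}^T \bigl(V(s_{t+1})^2 - V(s_t)^2\bigr) \\
&\quad + \sum_{t=1}^T \bigl(V(s_t)^2 - (P_{s_t,a_t}V)^2\bigr).
\end{align*}
The middle sum telescopes to at most $(H^\star)^2$. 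The first sum is a martingale difference sequence with respect to the natural filtration, since $a_t$ is chosen before $s_{t+1}$ is drawn.

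For the third sum, factor each term as
\[
\bigl(V(s_t) - P_{s_t,a_t}V\bigr)\bigl(V(s_t) + P_{s_t,a_t}V\bigr).
\]
The second factor lies in $[0, 2H^\star]$. For the first factor, the Bellman optimality equation gives
\[
V(s_t) = \max_a \bigl(r(s_t,a) + \gamma P_{s_t,a}V\bigr).
\]
After recentering, this still holds with an additive constant $c(1-\gamma)$ coming from the shift. Hence
\[
V(s_t) - P_{s_t,a_t}V \le 1 + (1-\gamma)\bigl(\text{const}\bigr) + \bigl(V(s_t) - \max_a(\cdot)\bigr).
\]
This is awkward, because the agent's action $a_t$ need not be optimal, so the first factor may be negative or large. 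The fix is to take the positive part. The cross term is bounded above by
\[
2H^\star \bigl(V(s_t) - P_{s_t,a_t}V\bigr)_+ \le 2H^\star\bigl(V(s_t) - P_{s_t,a_t}V\bigr) + 4H^\star \bigl(P_{s_t,a_t}V - V(s_t)\bigr)_+ .
\]
That does not obviously help either.

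A cleaner route is to use $P_{s_t,a_t}V$ as the pivot with unshifted values. For any $x, y \in [0, H^\star]$ we have $x^2 - y^2 \le 2H^\star (x-y)_+$. Applying this with $x = V(s_t)$ and $y = P_{s_t,a_t}V$ gives a bound of $2H^\star \sum_t \bigl(V(s_t) - P_{s_t,a_t}V\bigr)_+$. Then split
\[
V(s_t) - P_{s_t,a_t}V = \bigl(V(s_t) - V(s_{t+1})\bigr) + \bigl(V(s_{t+1}) - P_{s_t,a_t}V\bigr).
\]
Positive parts, however, break the telescoping. So I expect the main obstacle to be controlling $\sum_t \bigl(V(s_t) - P_{s_t,a_t}V\bigr)$ without the positive part while still handling the quadratic form.

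The approach I would actually commit to avoids all of this. Choose the centering $c = \min_s V(s)$ and use the per-step bound
\[
\V(P_{s,a},V) \le P_{s,a}(V-c)^2 - (V(s)-c)^2 + \bigl((V(s)-c)^2 - (P_{s,a}V - c)^2\bigr),
\]
and bound $(V(s)-c)^2 - (P_{s,a}V-c)^2 \le 2H^\star |V(s) - P_{s,a}V|$. This yields the trivial bound $2(H^\star)^2$ per step, so it is not useful as stated.

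Instead, I will bound the third sum directly by $2H^\star \sum_t \bigl(V(s_t) - P_{s_t,a_t}V\bigr)$ when that difference is nonnegative. When it is negative, the term $x^2 - y^2$ is itself negative and can be dropped, since it only helps the upper bound. So the third sum is at most $2H^\star \sum_t \bigl(V(s_t) - P_{s_t,a_t}V\bigr)_+$.

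It remains to bound $\sum_t \bigl(V(s_t) - P_{s_t,a_t}V\bigr)_+$. Here I am unsure. I would try comparing against $\gamma P_{s_t,a_t}V$ plus reward. This gives $V(s_t) - \gamma P_{s_t,a_t}V \le 1 + \bigl(V(s_t) - r - \gamma P V\bigr)$, where the last term is the suboptimality gap of $a_t$, which is nonnegative. That gap is what makes the bound fail for arbitrary trajectories. I would therefore check whether the lemma implicitly relies on cancellation: sum without positive parts, and handle the negative-difference terms using $x^2 - y^2 = (x-y)(x+y)$ with $x + y \ge 0$ and a uniform factor.

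Finally, the martingale term is handled by Freedman's inequality. Its conditional variance is at most $(H^\star)^2 \cdot \varstar$-type quantities, since $\mathrm{Var}(V^2) \le 4(H^\star)^2 \V(P,V)$. This gives a deviation of
\[
O\Bigl(\sqrt{(H^\star)^2 \varstar \log(T/\delta)} + (H^\star)^2 \log(T/\delta)\Bigr).
\]
Solving the resulting quadratic inequality in $\varstar$ yields the stated bound $O\bigl(H^\star T + (H^\star)^2 \log(T/\delta)\bigr)$.
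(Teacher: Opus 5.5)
There is a genuine gap: you never bound the third sum $\sum_t \bigl(V(s_t) - P_{s_t,a_t}V\bigr)_+$, so the final quadratic-inequality step has nothing to close. The missing idea is a uniform \emph{lower} bound on this difference, not an upper bound. For every action $a$ we have $\vstar_\gamma(s) \ge Q^\star(s,a) = r(s,a) + \gamma P_{s,a}\vstar_\gamma$. Hence
$\vstar_\gamma(s) - P_{s,a}\vstar_\gamma \ge r(s,a) - (1-\gamma)P_{s,a}\vstar_\gamma \ge -1$,
using $\vstar_\gamma \le \frac{1}{1-\gamma}$. Since $V(s) - P_{s,a}V$ is invariant under constant shifts, this transfers to your recentered $V$.

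Write $x_t = V(s_t) - P_{s_t,a_t}V \ge -1$. Then $(x_t)_+ \le x_t + 1$, and the sum without positive parts telescopes:
$\sum_t x_t = V(s_1) - V(s_{T+1}) + \sum_t \bigl(V(s_{t+1}) - P_{s_t,a_t}V\bigr)$.
The martingale increments here have conditional variances summing exactly to $\varstar$. So Freedman gives
$\sum_t x_t \le H^\star + O\bigl(\sqrt{\varstar \log(T/\delta)} + H^\star \log(T/\delta)\bigr)$.
The third sum is therefore at most $2H^\star T + O\bigl(H^\star\sqrt{\varstar\log(T/\delta)} + (H^\star)^2\log(T/\delta)\bigr)$, and this is where the $H^\star T$ term comes from. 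Combined with your treatment of the first and middle sums, the quadratic inequality then closes as you describe.

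Your worry about the suboptimality gap of $a_t$ is misplaced. The gap only makes $x_t$ larger. Large positive $x_t$ are harmless because they enter only through the telescoping sum $\sum_t x_t$; only the negative parts need per-step control, and those are at most $1$ in magnitude.

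In fact, the inequality you discarded, $2H^\star (x_t)_+ \le 2H^\star x_t + 4H^\star (-x_t)_+$, is exactly the right route once you note $(-x_t)_+ \le 1$. Consequently the bound holds for an arbitrary adaptively chosen action sequence, which is what the lemma needs. It does not fail for non-optimal trajectories.
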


\subsection{Main Results for Discounted Setting} \label{discounted_main_results}

We now present our main theorem on the performance of FOCUS in the discounted setting.
\begin{theorem}[Variance-Dependent $\gamma$-Regret Bound] \label{thm:var_dependent_gamreg}
	Let $T\geq 1, \gamma \in (0,1), \delta\in(0,1)$.
	For any $H \geq \tspannorm{\vstar_\gamma}$, Algorithm~\ref{alg:oovi} with input $(T,\gamma,\delta,H)$ achieves, with probability at least $1-\delta$,
	\[
	    \reg_\gamma(T) \leq \tbigo\Big( \sqrt{SA \varstar} + \Gamma HSA \Big).
	\]
\end{theorem}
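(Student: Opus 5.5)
Let $k(t)$ denote the episode containing step $t$, $\widehat V_k = \clipH(M\widehat Q_k)$, and $Q^\star_\gamma$ the optimal discounted $Q$-function, so that $\vstar_\gamma = MQ^\star_\gamma$. The plan is a model-based UCB analysis built on a per-step regret decomposition. Two properties of $\widehat Q_k$ carry the argument: optimism, and a Bellman equation that holds exactly up to the $\varepsilon_k$ convergence error.

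\textbf{Step 1: good events and optimism.} Fix three high-probability events. $\tmodel$ is a Bernstein/empirical-Bernstein bound on $(\widehat P^k_{s,a}-P_{s,a})\vstar_\gamma$, taken with a union bound over $(s,a)$ and over counts. $\tind$ is a bound on $(\widehat P^k_{s,a}-P_{s,a})V$ uniform in $V$, of the form
\[
\sqrt{\V(P_{s,a},V)\,\Gamma U/N} \;+\; \Gamma\tspannorm{V}U/N ,
\]
obtained per next-state coordinate; this is where the factor $\Gamma$ enters. $\tmart$ is Freedman's inequality for the martingale terms.

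On $\tmodel$ I would prove $\widehat Q_k \ge Q^\star_\gamma$ by induction over the value-iteration iterates. Since $H\ge\tspannorm{\vstar_\gamma}$, clipping preserves $\clipH(MQ)\ge \vstar_\gamma$ whenever $Q\ge Q^\star_\gamma$. Monotonicity of $V\mapsto \widehat P V + b(s,a,V)$ then follows as in MVP: the constants $4$ and $32$ are chosen exactly so that this map is monotone for $V$ with span at most $H$. Finite-iteration error is absorbed because $m$ is chosen so that $\widehat Q_k$ is within $\varepsilon_k$ of the fixed point; this contributes at most $\sum_t \varepsilon_{k(t)} = \tbigo(1)$ after scaling.

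\textbf{Step 2: the decomposition.} Using $a_t=\argmax_a \widehat Q_k(s_t,a)$ and optimism,
\[
(1-\gamma)\vstar_\gamma(s_t) - r_t \;\le\; \widehat V_k(s_t) - \gamma\vstar_\gamma(s_t) - r_t ,
\]
up to a clipping issue: clipping only lowers $\widehat V_k$ while keeping it at least $\vstar_\gamma$. The fixed-point equation gives
\[
\widehat Q_k(s_t,a_t) - r_t = \gamma\widehat P^k_{s_t,a_t}\widehat V_k + \gamma b_t .
\]
Writing $\Delta_k = \widehat V_k - \vstar_\gamma \ge 0$, the per-step regret is at most
\[
\gamma\bigl[(\widehat P^k-P)_{s_t,a_t}\vstar_\gamma + (\widehat P^k-P)_{s_t,a_t}\Delta_k + b_t + P_{s_t,a_t}\Delta_k\bigr] - \gamma\vstar_\gamma(s_t) + \widehat V_k(s_t) - \widehat V_k(s_t) .
\]
Telescoping $P_{s_t,a_t}\Delta_k$ against $\Delta_k(s_{t+1})$ leaves a martingale term plus $\sum_t \bigl(\Delta_{k(t)}(s_{t+1}) - \Delta_{k(t+1)}(s_{t+1})\bigr)$. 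This is the crucial telescoping-across-episodes term. Because the same $\widehat Q_k$ is used for the whole episode and the number of episodes is $O(SA\log T)$, each switch costs at most $\tspannorm{\Delta}\lesssim H$, for a total of $\tbigo(HSA)$. This is where the doubling trick removes the $1/(1-\gamma)$ factor. The remaining terms $(1-\gamma)\Delta_k(s_t)$ appear with the favorable sign and can be dropped.

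\textbf{Step 3: summing the terms, and the main obstacle.} Bonuses and model errors are bounded via Cauchy--Schwarz:
\[
\sum_t \sqrt{\V(\widehat P^k_{s_t,a_t},\widehat V_k)U/N_k} \;\le\; \sqrt{SA\log T\cdot \sum_t \V(\widehat P^k_{s_t,a_t},\widehat V_k)} ,
\]
and the $HU/N$ terms give $\tbigo(\Gamma HSA)$. The main obstacle is converting $\sum_t \V(\widehat P^k,\widehat V_k)$ into $\varstar$. First replace $\widehat P^k$ by $P$, at a lower-order cost under $\tind$. Then use
\[
\V(P,\widehat V_k) \le 2\V(P,\vstar_\gamma) + 2\V(P,\Delta_k),
\]
and bound $\sum_t\V(P_{s_t,a_t},\Delta_k)$ by the standard law-of-total-variance trick: $\V(P,\Delta)\le P\Delta^2 - (P\Delta)^2$, telescoped with $\Delta(s_{t+1})^2$. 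The resulting terms are controlled by $H\sum_t(\Delta(s_t)-P\Delta)$, which by Step 2 is at most $H$ times the regret-like sum $R$ plus $\tbigo(H^2SA)$.

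This produces a self-bounding inequality of the form
\[
R \le \tbigo\Bigl(\sqrt{SA(\varstar + HR + H^2SA)} + \Gamma HSA\Bigr),
\]
and solving this quadratic in $\sqrt{R}$ gives the claimed bound. The step I would check most carefully is the telescoping of $\Delta^2$ across episode switches and under clipping. I would handle it with the same doubling argument as in Step 2, and I would use that span-clipping keeps $\tspannorm{\Delta_k}\le 2H$ so that all the variance terms are $H$-scaled rather than $(1-\gamma)^{-1}$-scaled.
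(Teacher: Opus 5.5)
Your architecture matches the paper's proof:

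\begin{itemize}
\item optimism for the clipped MVP-type operator, via monotonicity and contraction;
\item a split into model error plus bonus ($\tmodel$), a martingale term ($\tmart$) and an index-shift term ($\tind$);
\item conversion of the bonus variances into $\varstar$ and $\vardiff$;
\item a total-variance telescoping for $\vardiff$ whose drift part is at most $H$ times the model-error sum;
\item a self-bounding inequality.
\end{itemize}

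However, Step 2 as written fails exactly where the factor $1/(1-\gamma)$ has to be avoided.

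\textbf{The per-step decomposition.} Your first inequality replaces $\vstar_\gamma(s_t)$ by $\widehat V_k(s_t)$, which discards $\Delta_k(s_t)$. After that, $\gamma\sum_t P_{s_t,a_t}\Delta_k$ has nothing to telescope against. Your display also omits $+\gamma P_{s_t,a_t}\vstar_\gamma$, and the term $\widehat V_k(s_t)-\widehat V_k(s_t)$ is simply $0$. You must keep $-\widehat V_k(s_t)$ from the Bellman inequality, which gives
\[
(1-\gamma)\vstar_\gamma(s_t)-r_t\le\gamma\bigl(\widehat P^k_{s_t,a_t}\widehat V_k-\widehat V_k(s_t)\bigr)+\gamma b_k(s_t,a_t,\widehat V_k)-(1-\gamma)\Delta_k(s_t),
\]
with $-(1-\gamma)\Delta_k(s_t)\le(1-\gamma)\varepsilon_k$. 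This is the paper's per-step bound, and routing it through $P\widehat V_k$ and $\widehat V_k(s_{t+1})$ gives $\tmodel,\tmart,\tind$.

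\textbf{The Bellman step must be exact and one-sided.} You need $\widehat Q_k\le\widehat{\mathcal T}_k\widehat Q_k$, which holds because the iterates $\widehat{\mathcal T}_k^{(j)}\zero$ are nondecreasing. Using the fixed-point equation ``up to $\varepsilon_k$'' instead adds an unscaled error of order $\varepsilon_k$ per step. With $\varepsilon_k=1/(t_k(1-\gamma))$, the sum $\sum_t\varepsilon_{k_t}$ is at least of order $1/(1-\gamma)$; only $(1-\gamma)\sum_t\varepsilon_{k_t}\le\log(1+T)$ is small. Likewise, optimism only gives $\widehat Q_k\ge\qstar-\varepsilon_k$, since the iterates start at $\zero$, below the fixed point. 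This is harmless only because it enters scaled by $1-\gamma$.

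\textbf{The switch-cost justification is false.} After your index shift, a switch term is
\[
\Delta_k(s_{t_{k+1}})-\Delta_{k+1}(s_{t_{k+1}})=\widehat V_k(s_{t_{k+1}})-\widehat V_{k+1}(s_{t_{k+1}}).
\]
This compares two different estimates at the same state, so it is controlled only by $\|\widehat V_k-\widehat V_{k+1}\|_\infty$, not by a span. The same holds for the boundary term $\Delta_{k_T}(s_{T+1})-\Delta_1(s_1)$. The estimates can drift over a range of order $HU/(1-\gamma)$ during the run, because early on unvisited pairs collect the bonus $32HU$ at every Bellman iteration. Bounding switches one at a time therefore reproduces exactly the $S/(1-\gamma)$-type term of one-step methods.

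The correct argument groups the sum by episode, where a single function is evaluated at two states:
\[
\sum_{t=t_k}^{t_{k+1}-1}\bigl(\widehat V_k(s_{t+1})-\widehat V_k(s_t)\bigr)=\widehat V_k(s_{t_{k+1}})-\widehat V_k(s_{t_k})\le H,
\]
so $\tind\le mH\le HSAU$. The same within-episode grouping, applied to $\tilde\Delta_k=\Delta_k-\min_s\Delta_k(s)\in[0,2H]^{\St}$, handles the squared telescoping in Step 3.

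\textbf{The self-bounding variable in Step 3.} The quantity that must be controlled is $\sum_t\bigl(\tilde\Delta_k(s_t)^2-(P_{s_t,a_t}\tilde\Delta_k)^2\bigr)\le 4H\sum_t\max\{\Delta_k(s_t)-P_{s_t,a_t}\Delta_k,0\}$. Using $\vstar_\gamma(s_t)\ge r_t+\gamma P_{s_t,a_t}\vstar_\gamma$, each summand is bounded by $\gamma(\widehat P^k-P)_{s_t,a_t}\widehat V_k+\gamma b_k(s_t,a_t,\widehat V_k)+(1-\gamma)\varepsilon_k$. So your self-bounding variable $R$ must be this model-error-plus-bonus sum (the paper's $\tmodel$), not the regret.

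With these repairs your argument becomes the paper's proof.
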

We provide a complete proof in Appendix~\ref{sec:var_dependent_gamreg_proof}.
Theorem~\ref{thm:var_dependent_gamreg} establishes the first variance-dependent $\gamma$-regret bound.
Unlike prior works that fail to exploit easier environments and thereby necessarily scale with $\sqrt{T}$, the leading term of our bound depends on $\varstar$, which as previously mentioned captures the stochasticity of transition dynamics in the MDP.
Consequently, our bound interpolates between stochastic and deterministic environments and is significantly sharper in the latter case.

To illustrate these improvements, we consider implications when one has prior knowledge of the span $\tspannorm{\vstar_\gamma}$.
In this case, one can set $H=\tspannorm{\vstar_\gamma}$ to obtain a regret bound of $\tbigo\big( \sqrt{\smash[b]{SA \varstar}} + \Gamma \tspannorm{\vstar_\gamma}SA \big)$. When the MDP is deterministic, the $\gamma$-regret is $T$-independent up to logarithmic factors, scaling as $\tbigo\big(\tspannorm{\vstar_\gamma}SA\big)$.
For stochastic MDPs, the leading term $\tbigo\big( \sqrt{\smash[b]{\tspannorm{\vstar_\gamma}SAT}}\big)$ matches the minimax lower bound for $\gamma$-regret.
Although this rate was previously achieved by \cite{he_nearly_2021} and \cite{ji_regret-optimal_2023}, their bounds depend on $\frac{1}{1-\gamma}$ in both the leading term and the lower-order terms. 
Now, $\tspannorm{\vstar_\gamma}$ can be as large as $\frac{1}{1-\gamma}$, but it has the potential to be bounded independently of $\gamma$, such as in weakly communicating MDPs where $\tspannorm{\vstar_\gamma} \leq 2\tspannorm{\hstar}$ \citep{wei_model-free_2020}.

We remark that while Theorem~\ref{thm:var_dependent_gamreg} is the the first explicit span-dependent bound for $\gamma$-regret, the analysis in \cite{hong_reinforcement_2025} can be slightly modified to yield a span-dependent bound of $\tbigo\big( \tspannorm{\vstar_\gamma}S \sqrt{AT} + \frac{S}{1-\gamma} \big)$ (see Appendix~\ref{sec:gamma_regret_ucbcvi_details} for details).
However, this result is not minimax optimal and still has a lower-order dependence on $\frac{1}{1-\gamma}$.
Their algorithm also requires prior knowledge of the span.
Theorem~\ref{thm:var_dependent_gamreg}, on the other hand, shows that we can attain a span-based bound without prior knowledge.
In particular, by setting $H = \frac{1}{1-\gamma}$, we attain a bound of $\tbigo\big( \sqrt{\smash[b]{\tspannorm{\vstar_\gamma}SAT}} + \frac{S^2A}{1-\gamma} \big)$.
We can even remove the lower-order dependence on $\frac{1}{1-\gamma}$ by setting $H=\sqrt{T/(S^3A)}$, similar to Corollary~\ref{cor:regret_no_prior} below.

\subsection{Main Results for Average-Reward Setting} \label{sec:average_main_results}

In this section we use our results from the discounted setting with a properly tuned $\gamma$ to approximate the average-reward setting.
That is, by setting the discount factor $\gamma$ to be large enough, the $\gamma$-regret bound achieved by FOCUS implies an optimal variance-dependent regret.
Our reduction hinges on the following lemma.
The proof follows in a straightforward manner from standard average-to-discounted reduction techniques \citep{wei_model-free_2020, zurek_span-agnostic_2025}, and we defer the complete proof to Section~\ref{sec:avg_to_disc_reduction_proof}.
\begin{lemma} \label{lem:avg_to_disc_reduction}
    Suppose the MDP is weakly communicating.
    For any $T\geq 1$ and $\gamma\in(0,1)$, it holds that 
    $\reg(T) \leq (1-\gamma)\bspannorm{\vstar_\gamma}T + \reg_\gamma(T).$
\end{lemma}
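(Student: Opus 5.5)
The plan is to compare $\rhostar$ with $(1-\gamma)\vstar_\gamma(s_t)$ pointwise and then sum over $t$. Write
\[
\reg(T) = \sum_{t=1}^T \big(\rhostar - (1-\gamma)\vstar_\gamma(s_t)\big) + \reg_\gamma(T).
\]
It then suffices to show that, for every state $s$,
\[
\rhostar - (1-\gamma)\vstar_\gamma(s) \le (1-\gamma)\bspannorm{\vstar_\gamma}.
\]
Summing this over the $T$ time steps gives exactly the claimed bound. Note that the lemma is deterministic: it holds for any trajectory.

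The key step is the inequality $\rhostar \le (1-\gamma)\max_{s'} \vstar_\gamma(s')$. Once we have it,
\[
\rhostar - (1-\gamma)\vstar_\gamma(s) \le (1-\gamma)\big(\max_{s'}\vstar_\gamma(s') - \vstar_\gamma(s)\big) \le (1-\gamma)\bspannorm{\vstar_\gamma}.
\]

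To prove $\rhostar \le (1-\gamma)\max_{s'}\vstar_\gamma(s')$, I would use the Bellman optimality equation. Set $c = \max_{s'}\vstar_\gamma(s')$. For every $(s,a)$ we have $r(s,a) + \gamma P_{s,a}\vstar_\gamma \le \vstar_\gamma(s)$, and hence
\[
r(s,a) \le \vstar_\gamma(s) - P_{s,a}\vstar_\gamma + (1-\gamma)P_{s,a}\vstar_\gamma \le \vstar_\gamma(s) - P_{s,a}\vstar_\gamma + (1-\gamma)c.
\]
Fix any stationary policy $\pi$ and any start state, and sum this inequality along the trajectory for $n$ steps in expectation. The terms $\vstar_\gamma(s_t) - \E[\vstar_\gamma(s_{t+1})\mid s_t,a_t]$ telescope, leaving at most $\vstar_\gamma(s_0) \le c$. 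Therefore
\[
\tfrac1n\E^\pi\Big[\sum_{t<n} r_t\Big] \le \tfrac{c}{n} + (1-\gamma)c.
\]
Letting $n\to\infty$ gives $\rho^\pi(s_0)\le (1-\gamma)c$ for every $\pi\in\Pi$ and every $s_0$, so $\rhostar \le (1-\gamma)c$. Weak communication is used only so that $\rhostar$ is a scalar, which is what makes $\reg(T)$ well defined.

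I do not expect a real obstacle here. The only care needed is to run the telescoping argument with the correct sign (upper bound the rewards) and to pass to the limit defining the gain.
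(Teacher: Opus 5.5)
Your proof is correct and uses the same decomposition as the paper: split $\rhostar - r(s_t,a_t)$ into $\rhostar - (1-\gamma)\vstar_\gamma(s_t)$ plus the $\gamma$-regret summand, then bound the first piece pointwise. The only difference is how that pointwise bound is obtained.

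\textbf{Paper.} It invokes Lemma~\ref{lem:avg_to_disc} (Lemma 6 of Zurek et al.), the two-sided estimate $\|\rhostar - (1-\gamma)\vstar_\gamma\|_\infty \le (1-\gamma)\spannorm{\vstar_\gamma}$.

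\textbf{Your proof.} You prove only the direction that is needed, $\rhostar \le (1-\gamma)\max_{s'}\vstar_\gamma(s')$. You do this by telescoping the Bellman optimality inequality along an arbitrary stationary policy, using $\vstar_\gamma \ge 0$ to drop the boundary term.

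\textbf{Comparison.} The cited lemma gives the stronger symmetric statement. That would also yield the reverse comparison $\reg_\gamma(T) \le \reg(T) + (1-\gamma)\spannorm{\vstar_\gamma}T$, which is not needed here. Your argument is self-contained and elementary. It also shows that the inequality $\rho^\pi(s_0) \le (1-\gamma)\max_{s'}\vstar_\gamma(s')$ holds in any finite MDP, so weak communication enters only to make $\rhostar$ a scalar.
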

Lemma~\ref{lem:avg_to_disc_reduction} decomposes regret into $\gamma$-regret and an approximation error term $(1-\gamma)\tspannorm{\vstar_\gamma}T$.
By choosing $\gamma$ close to 1, this term becomes negligible, and the regret is bounded by $\gamma$-regret. 
This observation reinforces the notion that a span-based $\gamma$-regret bound is crucial for deriving optimal guarantees in the average-reward setting.
Indeed, when $\gamma$ is close to 1, applying Lemma~\ref{lem:avg_to_disc_reduction}  to a $\gamma$-regret bound that scales with $\frac{1}{1-\gamma}$ instead of the span would imply a vacuous regret bound.
We can now state our main theorem for the average-reward setting.
It follows by combining Theorem~\ref{thm:var_dependent_gamreg} and Lemma~\ref{lem:avg_to_disc_reduction}, taking $\gamma = 1 - \frac{1}{T}$, and using the fact that $\tspannorm{\vstar_\gamma} \leq 2\spannorm{\hstar}$ for all $\gamma\in(0,1)$.

\begin{theorem}[Variance-Dependent Regret Bound] \label{thm:var_dependent_reg}
	Suppose the MDP is weakly communicating.
    Let $T\geq 1$ and $\delta\in(0,1)$.
	For any $H \geq 2\tspannorm{\hstar}$, Algorithm~\ref{alg:oovi} with input $(T,\gamma = 1-\frac{1}{T},\delta,H)$ achieves, with probability at least $1-\delta$,
	\[
	    \reg(T) \leq \tbigo\Big( \sqrt{\mathrm{Var}^\star_{1-\frac{1}{T}} SA} + \Gamma HSA \Big).
	\]
\end{theorem}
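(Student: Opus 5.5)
The plan is to combine the $\gamma$-regret guarantee of Theorem~\ref{thm:var_dependent_gamreg} with the reduction of Lemma~\ref{lem:avg_to_disc_reduction}, instantiated at $\gamma = 1-\frac{1}{T}$. First, I will verify that the hypothesis $H \geq \tspannorm{\vstar_\gamma}$ of Theorem~\ref{thm:var_dependent_gamreg} holds. In a weakly communicating MDP we have $\tspannorm{\vstar_\gamma} \leq 2\tspannorm{\hstar}$ for every $\gamma\in(0,1)$ \citep{wei_model-free_2020}. Since the statement assumes $H \geq 2\tspannorm{\hstar}$, we get $H \geq \tspannorm{\vstar_\gamma}$ for $\gamma = 1-\frac1T$. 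Theorem~\ref{thm:var_dependent_gamreg} then yields, with probability at least $1-\delta$,
\[
\reg_{1-1/T}(T) \leq \tbigo\Big(\sqrt{SA\,\mathrm{Var}^\star_{1-\frac1T}} + \Gamma H S A\Big).
\]
Because $\gamma = 1-\frac1T$, any $\frac{1}{1-\gamma}$ factors hidden in logarithms (for instance through $m$ or $\varepsilon_k$) become $\log T$ and are absorbed by $\tbigo$.

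Next, I will apply Lemma~\ref{lem:avg_to_disc_reduction}, which holds deterministically for every trajectory, with $\gamma = 1-\frac1T$:
\[
\reg(T) \leq (1-\gamma)\tspannorm{\vstar_\gamma}T + \reg_\gamma(T) = \tspannorm{\vstar_\gamma} + \reg_\gamma(T) \leq 2\tspannorm{\hstar} + \reg_\gamma(T).
\]
The approximation term is at most $2\tspannorm{\hstar} \leq H \leq \Gamma H S A$, since $\Gamma, S, A \geq 1$. It is therefore absorbed into the lower-order term. Combining the two displays on the same probability-$(1-\delta)$ event gives the claimed bound.

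I do not expect a real obstacle here; all the substantive work lives in Theorem~\ref{thm:var_dependent_gamreg}. The only points needing care are the following. The cited span bound must be used in the form valid for all $\gamma$, not only as $\gamma \to 1$. The variance term must be evaluated at exactly the $\gamma$ passed to the algorithm, so that it matches $\mathrm{Var}^\star_{1-1/T}$. The lemma must be applied pathwise, so that no extra union bound is needed.
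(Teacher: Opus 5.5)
Your proposal is correct and matches the paper's argument: check $H \geq 2\tspannorm{\hstar} \geq \tspannorm{\vstar_\gamma}$, apply Theorem~\ref{thm:var_dependent_gamreg} at $\gamma = 1-\frac{1}{T}$, and use Lemma~\ref{lem:avg_to_disc_reduction}, whose approximation term $(1-\gamma)\tspannorm{\vstar_\gamma}T = \tspannorm{\vstar_\gamma} \leq H$ is absorbed into $\Gamma HSA$. Your worry about hidden $\frac{1}{1-\gamma}$ factors is in fact moot, since the log factor $U = \log(9S^2AT/\delta)$ in the $\gamma$-regret bound does not depend on $\gamma$ at all.
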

Theorem~\ref{thm:var_dependent_reg} establishes the first minimax optimal variance-dependent regret bound for the average-reward setting.
Similar to our $\gamma$-regret result, the leading term depends on $\mathrm{Var}^\star_{1-\frac{1}{T}}$, so the regret adapts to the stochasticity of the environment.
We remark that \cite{talebi2018variance} provide a variance-dependent regret bound, but it only implies a suboptimal $\tbigo( DS\sqrt{AT})$ regret bound in the worst case.
Furthermore, their bound includes a lower order term of $\tbigo(D\sqrt{T})$, which means it cannot be optimal for deterministic MDPs.
Thus, as demonstrated by the following corollary, we have the first regret guarantee that is simultaneously optimal for stochastic and deterministic MDPs.

\begin{corollary}[Regret Bound with Prior Knowledge] \label{cor:regret_prior}
    Suppose the MDP is weakly communicating.
	Let $T\geq 1$ and $\delta\in(0,1)$.
    Algorithm~\ref{alg:oovi} with input $(T, \gamma=1-\frac{1}{T}, \delta,$ $H=2\spannorm{h^\star})$ satisfies
	\[
		\reg(T) \leq \tbigo\Big( \sqrt{\mathrm{Var}^\star_{1-\frac{1}{T}}SA} + \Gamma \spannorm{\hstar}SA \Big)
	\]
    with probability at least $1-\delta$.
    Consequently, Lemma~\ref{lem:var_star_bound} implies that with probability at least $1-2\delta$,
    \[
		\reg(T) \leq \tbigo\left( \sqrt{\spannorm{\hstar}SAT} + \spannorm{\hstar}S^2A \right).
	\]
    It follows that with probability at least $1-2\delta$,
    \[
        \reg(T) \leq \tbigo\left(\sqrt{\spannorm{\hstar}SAT}\right),
    \]
    provided that $T \geq \spannorm{\hstar}S^3 A$.
\end{corollary}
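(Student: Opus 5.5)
The corollary follows from three ingredients already stated: Theorem~\ref{thm:var_dependent_reg}, Lemma~\ref{lem:var_star_bound}, and the bound $\tspannorm{\vstar_\gamma}\le 2\tspannorm{\hstar}$ for weakly communicating MDPs. I will chain them in order.

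\textbf{Step 1 (variance-dependent bound).} Set $H = 2\tspannorm{\hstar}$. This satisfies the hypothesis $H \ge 2\tspannorm{\hstar}$ of Theorem~\ref{thm:var_dependent_reg}. Applying that theorem with $\gamma = 1-\frac1T$ gives, with probability at least $1-\delta$,
\[
\reg(T) \le \tbigo\Big(\sqrt{\mathrm{Var}^\star_{1-\frac1T}SA} + 2\Gamma\tspannorm{\hstar}SA\Big).
\]
Absorbing the constant $2$ yields the first display. If $\tspannorm{\hstar}=0$ we would need $H\ge 1$ as the algorithm requires. In that case I would take $H=\max\{1,2\tspannorm{\hstar}\}$; this only adds a term $\Gamma SA$, which I expect is absorbed into the $\tbigo$ conventions or is harmless.

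\textbf{Step 2 (span-based bound).} Apply Lemma~\ref{lem:var_star_bound} with $\gamma = 1-\frac1T$. With probability $1-\delta$,
\[
\mathrm{Var}^\star_{1-\frac1T} \le O\big(\tspannorm{\vstar_\gamma}T + \tspannorm{\vstar_\gamma}^2\log(T/\delta)\big) \le O\big(\tspannorm{\hstar}T + \tspannorm{\hstar}^2\log(T/\delta)\big),
\]
where the second inequality uses $\tspannorm{\vstar_\gamma}\le 2\tspannorm{\hstar}$. A union bound with Step 1 gives probability at least $1-2\delta$. By $\sqrt{a+b}\le\sqrt a+\sqrt b$,
\[
\sqrt{\mathrm{Var}^\star SA} \le \tbigo\big(\sqrt{\tspannorm{\hstar}SAT} + \tspannorm{\hstar}\sqrt{SA}\big).
\]
The second term is dominated by $\tspannorm{\hstar}SA$. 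Finally, $\Gamma\le S$ turns $\Gamma\tspannorm{\hstar}SA$ into $\tspannorm{\hstar}S^2A$, which gives the second display.

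\textbf{Step 3 (burn-in).} If $T\ge \tspannorm{\hstar}S^3A$, then $\tspannorm{\hstar}S^2A = \sqrt{\tspannorm{\hstar}SA}\,\sqrt{\tspannorm{\hstar}S^3A} \le \sqrt{\tspannorm{\hstar}SAT}$. So the lower-order term is absorbed and the third display follows.

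No step is a real obstacle; the argument is bookkeeping. The only points that need care are the union bound over the two high-probability events and the degenerate $H\ge 1$ edge case.
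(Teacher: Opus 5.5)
Your proposal is correct and follows the same chain the paper uses: Theorem~\ref{thm:var_dependent_reg} with $H=2\tspannorm{\hstar}$, then Lemma~\ref{lem:var_star_bound} together with $\tspannorm{\vstar_\gamma}\le 2\tspannorm{\hstar}$ and a union bound, then $\Gamma\le S$, and finally the conversion of a lower-order term $f$ into a burn-in of $f^2/(\tspannorm{\hstar}SA)$ from the preliminaries (the paper's proof of Corollary~\ref{cor:regret_no_prior} runs the same steps). Your $H\ge 1$ remark catches a point the paper glosses over, but the extra $\Gamma SA$ from taking $H=\max\{1,2\tspannorm{\hstar}\}$ is not literally absorbed by $\tbigo(\Gamma\tspannorm{\hstar}SA)$ when $\tspannorm{\hstar}<1/2$, so strictly the bound should carry $\max\{1,\tspannorm{\hstar}\}$, analogous to the $+1$ in Corollary~\ref{cor:regret_no_prior}.
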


Corollary~\ref{cor:regret_prior} shows the optimal bounds that FOCUS attains across different regimes.
When the underlying MDP is deterministic, the regret scales as $\tbigo\big(\tspannorm{\hstar}SA\big)$, which is optimal and $T$-independent up to logarithmic factors.
For stochastic MDPs, the leading term is minimax optimal, while the lower-order term is significantly smaller than those incurred by existing algorithms.
We later show that this $\tspannorm{\hstar}S^2A$ lower order term is nearly optimal --- it could be improved at most by a factor of $S$ to $\tspannorm{\hstar}SA$.
Note that Corollary~\ref{cor:regret_prior} applies Theorem~\ref{thm:var_dependent_reg} with span bound $H=2\spannorm{h^\star}$, which requires prior knowledge of $\spannorm{h^\star}$.
We next consider the performance of FOCUS when we do not have prior knowledge of $\tspannorm{\hstar}$.

\begin{corollary}[Regret Bound without Prior Knowledge] \label{cor:regret_no_prior}
    Suppose the MDP is weakly communicating.
    Let $T\geq 1$ and $\delta\in(0,1)$.
    Algorithm~\ref{alg:oovi} with input $(T, \gamma=1-\frac{1}{T}, \delta, H = \sqrt{T/(S^3A)})$ satisfies, with probability at least $1-\delta$,
    \[
		\reg(T) \leq \tbigo\Big( \sqrt{\mathrm{Var}^\star_{1-\frac{1}{T}}SA} + \sqrt{SAT} \Big),
	\]
    provided that $T\geq\spannorm{\hstar}^2 S^3 A$.
    Consequently, Lemma~\ref{lem:var_star_bound} implies that with probability at least $1-2\delta$,
    \[
        \reg(T) \leq \tbigo\left(\sqrt{\left(\spannorm{\hstar}+1\right)SAT}\right)
    \]
    provided that $T \geq \spannorm{\hstar}^2 S^3 A$.
    It follows that with probability at least $1-2\delta$, we have 
    \[
		\reg(T) \leq \tbigo\Big( \sqrt{\left(\spannorm{\hstar}+1\right)SAT} + \spannorm{\hstar}^2 S^3 A \Big).
	\]
\end{corollary}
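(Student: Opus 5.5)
The plan is to apply Theorem~\ref{thm:var_dependent_reg} with $H=\sqrt{T/(S^3A)}$, and then handle the remaining terms with Lemma~\ref{lem:var_star_bound} and a case split on $T$.

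\textbf{First claim.} Theorem~\ref{thm:var_dependent_reg} only requires $H\geq 2\spannorm{\hstar}$. For $H=\sqrt{T/(S^3A)}$ this is equivalent to $T\geq 4\spannorm{\hstar}^2S^3A$. This matches the hypothesis $T\geq \spannorm{\hstar}^2S^3A$ up to the constant $4$, which is absorbed into $\tbigo$. If needed, one can either run the algorithm with $H=2\sqrt{T/(S^3A)}$ or note that only constants change. The theorem then gives, with probability $1-\delta$,
\[
\reg(T)\leq \tbigo\Big(\sqrt{\mathrm{Var}^\star_{1-\frac1T}SA}+\Gamma HSA\Big).
\]
Using $\Gamma\leq S$, the second term is at most
\[
S^2A\sqrt{T/(S^3A)}=\sqrt{SAT}.
\]
This is the first displayed bound.

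\textbf{Second claim.} Apply Lemma~\ref{lem:var_star_bound} with $\gamma=1-\frac1T$ and take a union bound. With probability $1-2\delta$,
\[
\mathrm{Var}^\star_{1-\frac1T}\leq \tbigo\big(\spannorm{\vstar_\gamma}T+\spannorm{\vstar_\gamma}^2\big)\leq \tbigo\big(\spannorm{\hstar}T+\spannorm{\hstar}^2\big),
\]
using $\spannorm{\vstar_\gamma}\leq 2\spannorm{\hstar}$. Since $T\geq \spannorm{\hstar}^2S^3A\geq \spannorm{\hstar}$ (when $\spannorm{\hstar}\geq 1$; otherwise $\spannorm{\hstar}^2\leq \spannorm{\hstar}T$ trivially, as $T\geq 1$), we have $\spannorm{\hstar}^2\leq \spannorm{\hstar}T$. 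Hence
\[
\sqrt{\mathrm{Var}^\star SA}\leq \tbigo\big(\sqrt{\spannorm{\hstar}SAT}\big),
\]
and adding $\sqrt{SAT}$ gives $\tbigo\big(\sqrt{(\spannorm{\hstar}+1)SAT}\big)$.

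\textbf{Third claim.} Split on $T$.
\begin{itemize}
\item If $T\geq \spannorm{\hstar}^2S^3A$ (more precisely $4\spannorm{\hstar}^2S^3A$), the previous bound applies directly.
\item Otherwise, rewards lie in $[0,1]$ and $\rhostar\leq 1$, so trivially $\reg(T)\leq T<4\spannorm{\hstar}^2S^3A$.
\end{itemize}
In both cases $\reg(T)\leq \tbigo\big(\sqrt{(\spannorm{\hstar}+1)SAT}+\spannorm{\hstar}^2S^3A\big)$.

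The main obstacle is bookkeeping rather than mathematics. One point is the constant gap between $T\geq\spannorm{\hstar}^2S^3A$ and the requirement $H\geq 2\spannorm{\hstar}$, which the small-$T$ case of the third claim also covers. The other is that the $\spannorm{\hstar}<1$ regime needs the extra $+1$ inside the square root, which is supplied by the $\sqrt{SAT}$ term.
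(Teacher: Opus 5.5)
Your proposal is correct and matches the paper's proof step for step: you apply Theorem~\ref{thm:var_dependent_reg} with $\Gamma\le S$ so that $\Gamma HSA\le\sqrt{SAT}$, control $\mathrm{Var}^\star_{1-\frac{1}{T}}$ via Lemma~\ref{lem:var_star_bound}, $\spannorm{\vstar_\gamma}\le 2\spannorm{\hstar}$ and a union bound, absorb the $\spannorm{\hstar}^2$ term, and case-split on $T$ using $\reg(T)\le T$. The only difference is inessential: you absorb via $\spannorm{\hstar}^2\le\spannorm{\hstar}T$, while the paper uses $\spannorm{\hstar}\sqrt{SA}\le\sqrt{SAT}$. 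The factor-$4$ mismatch you flag between $T\ge\spannorm{\hstar}^2S^3A$ and $H\ge 2\spannorm{\hstar}$ is glossed over in the same way by the paper; strictly it is a constant in the hypothesis on $T$, which the $\tbigo$ in the conclusion cannot absorb, but your case split already covers it for the final bound.
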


We show how the last two regret bounds follow from the first in Appendix~\ref{sec:regret_no_prior_proof}.
Corollary~\ref{cor:regret_no_prior} shows that our algorithm achieves significantly improved burn-in cost compared to that of previous work on priorless algorithms.
Indeed, PMEVI-DT attains minimax optimality for $T\geq \tspannorm{\hstar}^{10} S^{40} A^{20}$, whereas our algorithm attains minimax optimality for $T \geq \tspannorm{\hstar}^2 S^3 A.$
Furthermore, we prove a matching lower bound in Theorem~\ref{thm:no_prior_knowledge_H2_lb} showing that the lower-order dependence on the bias span cannot be improved beyond $\tspannorm{\hstar}^2$.
We remark that our choice of $H$ is optimized for the more interesting scenario that $\tspannorm{\hstar}\geq 1$ and leads to lower-order terms scaling quadratically in $\tspannorm{\hstar}$. If one cares about the possibility of $\spannorm{\hstar} \ll 1$, we could instead choose $H$ to be lower-order in $T$, such as $H=(T/(S^3A))^{1/(2+\varepsilon)}$, so that the leading term is $\tbigo(\sqrt{\tspannorm{\hstar}SAT})$ for large $T$ albeit with slightly worse lower-order terms.

Unlike the case with prior knowledge of $\tspannorm{\hstar}$, Theorem~\ref{thm:var_dependent_reg} does not immediately imply a simultaneously optimal bound for stochastic and deterministic MDPs. 
We claim it is still possible to obtain some form of $T$-independent bound for certain MDP instances.
Towards this end, consider running a diameter estimation procedure (i.e., \citealt{tarb_diameterest_2021}; \citealt{tuynman2024finding}) for at most $\sqrt{T/(S^3A)}$ steps.
With high probability, it will either terminate within $\text{poly}(DSA)$ steps and output $\widehat{D}$ satisfying $D \leq \widehat{D} \leq 2D$, or it will not terminate, in which case we set $\widehat{D}=\infty$.
For the remainder of the time steps, we run Algorithm~\ref{alg:oovi} with $H=\min\{\widehat{D}, \sqrt{T/(S^3A)}\}$, recovering the same bound as Corollary~\ref{cor:regret_no_prior} for stochastic MDPs 
and a bound of $\tbigo(\min\{\text{poly}(DSA), \sqrt{SAT} + \spannorm{\hstar}^2S^3A$\}) for deterministic MDPs, which is $T$-independent for strongly communicating deterministic MDPs, and still finite when $D=\infty$.

\subsection{Lower Bounds for Average-Reward Regret}
In this section we present lower bounds on the average-reward regret. Prior work in \citet{auer_near-optimal_2008} establishes a regret lower bound of $\Omega(\sqrt{\tspannorm{\hstar}SAT})$ when $T \geq \tspannorm{\hstar}SA$,\footnote{This lower bound was originally stated in terms of the diameter $D$, but for their hard instances $\tspannorm{\hstar}$ and $D $ differ by only a constant factor.} which has since been matched by several algorithms (\citealt{zhang_regret_2019, boone_achieving_2024}; our Corollaries \ref{cor:regret_prior} and \ref{cor:regret_no_prior}) when the horizon $T$ is sufficiently large.
In contrast to the large-$T$ regime, here we focus on lower bounds applicable for all $T$, in order to characterize the optimal burn-in cost of any algorithm.

We begin by formalizing the definition of algorithms to which our lower bounds apply. We define a \textit{horizon-$T$ algorithm} \texttt{Alg} to be a function from histories of length $\leq T$ to a distribution over actions, that is, a function $\bigcup_{0\leq t \leq T}\left(\St \times (\A \times \St)^t\right) \to \Delta(\A)$.
We note that as defined, such an algorithm only takes as input a sequence of elements of $\St$ and $\A$; intuitively speaking, any other data, such as the value of $T$ or prior knowledge of $\tspannorm{h^\star}$, must already be ``baked in'' to the algorithm. Hence by our definition, an ``algorithm'' (for horizon $T$) with prior knowledge of $\tspannorm{h^\star}$ is actually a family of horizon-$T$ algorithms, one for each value of $\tspannorm{h^\star}$.

Now we present the main theorem of this section, a lower bound on lower-order terms 
incurred by any algorithm that does not have prior knowledge of $\tspannorm{\hstar}$.
Note that this also implies a lower bound on the burn-in cost.
\begin{theorem}[Burn-In Lower Bound For Prior-Free Algorithms]
\label{thm:no_prior_knowledge_H2_lb}
    There is a universal constant $c\geq1$ such that the following holds.
    Let $S\geq 2$ and $A\geq 2$ be integers.
    Fix $\alpha \in [1,2)$ and a function $t \mapsto \beta_t$. Suppose that $T > SA(c \beta_T)^{\frac{4}{2-\alpha}}$ and $\beta_T \geq 1$. Then there exist two communicating MDPs $P_1$ and $P_2$, each with $S$ states and $A$ actions, such that no horizon-$T$ algorithm $\texttt{Alg}$ can satisfy,  for both $i=1$ and $i=2$, 
    \begin{align*}
        \E[\reg(T,P_i,\texttt{Alg})] \leq \sqrt{\beta_T \bspannorm{h^\star_{P_i}}SAT} + \beta_T SA\spannorm{h^\star_{P_i}}^\alpha.
    \end{align*}
\end{theorem}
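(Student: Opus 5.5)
The plan is a two-point (change-of-measure) argument. $P_1$ has $O(1)$ bias span, and $P_2$ agrees with $P_1$ everywhere except at one hard-to-test location, where it hides a reward that takes about $H$ steps of exploration to find. Here $H$ is a large span parameter chosen in terms of $T,\beta_T,S,A,\alpha$. The tension is the following. On $P_1$ the target bound $\sqrt{\beta_T SAT}+\beta_T SA$ is small, so a good algorithm cannot afford much exploration. On $P_2$ the target bound $\sqrt{\beta_T HSAT}+\beta_T SAH^\alpha$ is still $\ll T$, so the algorithm must discover the hidden reward.

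\emph{Construction.} Use one ``home'' state $s_0$ with a safe self-loop action giving reward $1/2$. Use one ``probe'' state $s_1$ with reward $0$. A probe action $a^\dagger$ at $s_0$ leads to $s_1$, which returns to $s_0$ with probability $1/2$ per step (in both MDPs). In $P_2$ only, $s_1$ additionally moves, with probability $1/H$ per step, to a good state $g$ with reward $1$ whose dynamics keep the agent there for a long time. In $P_1$ that transition leads back to $s_0$ instead. The remaining states and actions are used to make the construction $S$ states and $A$ actions. Padding all extra states and actions with copies of the safe or probe structure should be enough, and the $SA$ scaling is then recovered by replicating the probe gadget across the $\Theta(SA)$ pairs. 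Both MDPs must be communicating; this is arranged by letting $g$ leave with small probability. Then verify the spans. In $P_1$, $\tspannorm{h^\star_{P_1}}=O(1)$ and the probe costs $\Theta(1)$ regret per use. In $P_2$, $\rho^\star$ exceeds $1/2$ by a constant, $\tspannorm{h^\star_{P_2}}=\Theta(H)$, and any algorithm that fails to reach $g$ pays regret $\Omega(T)$.

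\emph{Information argument.} Let $N$ be the number of steps spent at $s_1$ under $P_1$. Each such step has KL divergence between the two models $O(1/H)$, since it compares $1/H$ mass sent to $g$ versus $s_0$. By the divergence decomposition, $\mathrm{KL}(\P_{P_1}^{\texttt{Alg}},\P_{P_2}^{\texttt{Alg}})\le O(\E_{P_1}[N]/H)$. Consider the event that $g$ is never reached. Under $P_1$ this event has probability $1$. If $\E_{P_1}[N]\le cH$, Pinsker or Bretagnolle--Huber shows the event keeps constant probability under $P_2$, which forces $\E_{P_2}[\reg]=\Omega(T)$. A direct coupling also works: the probability of reaching $g$ within $N$ probe steps is at most $N/H$. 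On the other hand, $\E_{P_1}[\reg]\gtrsim \E_{P_1}[N]$. So an algorithm meeting the $P_1$ bound has $\E_{P_1}[N]\lesssim\sqrt{\beta_T SAT}+\beta_T SA$.

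\emph{Parameter choice (main obstacle).} We need $H$ satisfying two conditions simultaneously. First, $H\ge C(\sqrt{\beta_T SAT}+\beta_T SA)$, so that the $P_1$ budget is too small to find $g$; with replication this becomes the per-gadget version. Second, $\sqrt{\beta_T HSAT}+\beta_T SAH^\alpha\le cT$, so that the $P_2$ bound is violated. I would take $H=(T/(c\beta_T SA))^{1/\alpha}$, which makes $\beta_T SAH^\alpha\le T/c$. The first condition then reduces to $(T/\beta_T SA)^{1/\alpha}\gtrsim\sqrt{\beta_T SAT}$, i.e.\ $T^{\frac{2-\alpha}{2\alpha}}\gtrsim(\beta_T SA)^{\frac1\alpha+\frac12}$ up to constants. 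This is ensured by $T>SA(c\beta_T)^{4/(2-\alpha)}$; any leftover $SA$ mismatch is to be absorbed by the replicated gadget. The second half of the second condition also needs checking, namely $\sqrt{\beta_T HSAT}\le cT$, which holds since $H\le T/(\beta_T SA)$. The delicate part is bookkeeping the $SA$ factors through the replication so that these exponents match exactly, and confirming the spans are $\Theta(1)$ and $\Theta(H)$ within constants. I expect this to be the step requiring the most care.
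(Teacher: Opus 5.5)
\paragraph{Gap 1: the small-span instance is not built correctly.} You change only the $1/H$ transition out of $s_1$. But $g$ is still a state of your $P_1$, with the same sticky reward-$1$ dynamics, and it is now unreachable. This breaks two things.
\begin{itemize}
\item $P_1$ is not communicating. Letting $g$ leak fixes exits from $g$, not entry into it.
\item More seriously, $\tspannorm{h^\star_{P_1}}$ is not $O(1)$, because the span is taken over all states, including $g$. If $g$ returns to $s_0$ with probability $p$ per step, the Poisson equation gives $h^\star_{P_1}(g)-h^\star_{P_1}(s_0)=1/(2p)$. Yet you need $1/p\gtrsim H$ for $\rho^\star_{P_2}$ to exceed $1/2$ by a constant. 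If instead $g$ has a deterministic self-loop, $P_1$ is not even weakly communicating.
\end{itemize}
So the two instances must also differ at $g$, and $g$ must be reachable in the small-span instance without creating a cheap test.

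The paper does this by keeping the $2/B$ entrance to the special state in both instances. The only difference is the reward-$1$ action there: a self-loop in the large-span instance, a return to the root in the small-span one. The price is that the small-span instance now depends on the hidden pair. There is no common reference measure under which ``$g$ is never reached'' has probability one, so your KL/pigeonhole step is unavailable. The paper argues instead as follows. Succeeding on every large-span instance forces each candidate pair to appear at least $B/10$ times on the deterministic no-transition trajectory. Lemma~\ref{lem:sum_lower_bound} then produces a pair with $\E[\nleave]=\Omega(BSA)$, and this transfers to the matching small-span instance.

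You could instead keep a common null. Make $g$ non-sticky in the null, and make it reachable in every instance through a transition of probability $\ll 1/T$ (in the alternatives, additionally through the hidden gadget). You would then need a coupling bound in place of KL, since the models differ deterministically at $g$ and the KL is infinite.

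\paragraph{Gap 2: the $SA$ factor needs a search, not bookkeeping.} Your single-gadget condition $T^{\frac{2-\alpha}{2\alpha}}\gtrsim(\beta_T SA)^{\frac1\alpha+\frac12}$ is equivalent to $T\gtrsim(\beta_T SA)^{\frac{2+\alpha}{2-\alpha}}$. This is at least cubic in $SA$, so it is \emph{not} implied by $T>SA(c\beta_T)^{4/(2-\alpha)}$.

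The $SA$ factor must come from a search. The small-span budget $\sqrt{\beta_T SAT}+\beta_T SA$ is spread over $\Theta(SA)$ candidate locations, and the hidden location is chosen \emph{after} fixing the algorithm. So the pair of MDPs depends on \texttt{Alg}, which is also what the paper's proof actually establishes. The requirement then becomes $H\gtrsim\sqrt{\beta_T T/(SA)}+\beta_T$ per location.

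With that per-location requirement, your choice $H=(T/(c\beta_T SA))^{1/\alpha}$ does go through, needing essentially only $T\gtrsim SA\,\beta_T^{(2+\alpha)/(2-\alpha)}$. The paper instead takes the minimal $B=\sqrt{c_3\beta_T T/(SA)}$ and checks the large-span bound against $T/4$.
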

We provide a proof sketch in Section~\ref{sec:no_prior_h2_lb_sketch} and a complete proof in Appendix~\ref{sec:no_prior_H2_lb_proof}.
Here we intend $\beta_T$ to be used to encapsulate $\widetilde{O}(1)$ terms; Theorem \ref{thm:no_prior_knowledge_H2_lb} states that for any $\alpha < 2$, we can find $T$ such that no single horizon-$T$ algorithm can enjoy regret bounds of the form $\widetilde{O}(\sqrt{\tspannorm{\hstar}SAT} + \tspannorm{\hstar}^\alpha SA)$ simultaneously for two certain MDPs $P_1$ and $P_2$.
The two MDPs have $\tspannorm{\hstar_{P_1}} \gg \tspannorm{\hstar_{P_2}}$. With prior knowledge, formally a different horizon-$T$ algorithm would be applied to each MDP $P_1, P_2$, and furthermore, we would not expect a horizon-$T$ algorithm designed for MDPs with bias span $\leq \tspannorm{\hstar_{P_2}}$ to enjoy a nonvacuous regret bound when deployed on $P_1$. So in short, Theorem \ref{thm:no_prior_knowledge_H2_lb} is not a counterexample to the type of theorem that one proves when designing algorithms that use prior knowledge, and in particular does not contradict our Corollary \ref{cor:regret_prior}. However, this lower bound does prohibit any minimax-optimal (for large $T$) algorithm without prior knowledge from obtaining a better $\tspannorm{\hstar}$ dependence in its lower-order terms than $\tspannorm{\hstar}^2$. This is matched by our prior-knowledge-free Corollary \ref{cor:regret_no_prior}.

Theorem \ref{thm:no_prior_knowledge_H2_lb} demonstrates a ``price of adaptivity'' for the burn-in cost, that is, a gap between what is achievable with and without prior knowledge. In particular, this gap is established by combining the above lower bound and the strictly smaller regret upper bound provided by our prior-knowledge-based Corollary \ref{cor:regret_prior}, which achieves $\widetilde{O}(\sqrt{\tspannorm{\hstar} T} + \tspannorm{\hstar})$ when applied to instances with $S,A \leq O(1)$. Note that previous results are insufficient for establishing this gap, as the only other algorithm which uses prior knowledge and achieves minimax-optimal regret for large $T$ has a burn-in cost scaling with $\tspannorm{\hstar}^6$ even when $S, A \leq O(1)$ \citep{zhang_regret_2019}. The only result of a similar nature for the average-reward regret of which we are aware is \citet[Lemma 3]{fruit_near_2019}, which shows an exponential lower bound on the burn-in cost but only for algorithms achieving a logarithmic regret.
One particularly interesting feature of the gap implied by Theorem \ref{thm:no_prior_knowledge_H2_lb} is that this contrasts the simulator (a.k.a.\ generative model) setting, where recent work has characterized the sample complexity in terms of $\tspannorm{\hstar}$ and demonstrated no gap between algorithms which do and do not possess prior knowledge \citep{wang_near_2022, zurek_span-based_2025, zurek_span-agnostic_2025}.

Next, we show a burn-in cost lower bound applicable even to algorithms with prior knowledge.
\begin{theorem}[General Burn-In Lower Bound] \label{thm:burn_in_lb}
	Let $S\geq 2$ and $A \geq 2$ be integers, and let $D \geq 4\left\lceil \log_A S \right\rceil$. 
	For any horizon-$T$ algorithm \texttt{Alg}, there exists an MDP $P$ with $S$ states, $A$ actions, and diameter at most $D$ such that for all $T \leq \frac{1}{32}DSA$,
    $\E[\reg(T,P,\texttt{Alg})] \geq \frac{T}{4}.$
\end{theorem}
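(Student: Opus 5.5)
We will prove the bound with a randomized ``needle-in-a-haystack'' construction. The idea is a family of MDPs in which a rewarding goal state is hidden at the end of an $A$-ary tree of depth about $\log_A S$ and behind a specific action chosen among many. Any algorithm then needs about $DSA$ steps to find it, and it collects reward $0$ until it does. By averaging over the family we obtain, for each fixed \texttt{Alg}, a member $P$ for which $\E[\reg(T,P,\texttt{Alg})]\geq T/4$.

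\textbf{Construction.} Build a deterministic-ish navigation structure of depth $d=\lceil\log_A S\rceil$ from a root, so that every non-goal state can be reached from the root in at most $d$ steps. Reserve one goal state $g$ with reward $1$ and a self-loop action, so that $\rhostar=1$. All other rewards are $0$. Pick one ``special'' pair $(s^\dagger,a^\dagger)$ among the $\Theta(SA)$ non-goal pairs. From $(s^\dagger,a^\dagger)$ the agent moves to $g$ with probability $p=\Theta(1/D)$, and otherwise it returns to the root. Every other non-goal pair leads deterministically along the tree structure, or back to the root. From $g$, any action also leads back to the root with probability $\Theta(1/D)$; this keeps the MDP communicating. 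Diameter check: from any state, reach $s^\dagger$ in at most $2d\leq D/2$ steps, then retry $a^\dagger$. Each retry costs $O(d)$ steps, so the expected time to hit $g$ is $O(d/p)$. We must tune $p\approx 4d/D$ (or similar) so that this is at most $D$. This is where the hypothesis $D\geq 4\lceil\log_A S\rceil$ enters, and the constants will need care.

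\textbf{Lower bound argument.} Let $(s^\dagger,a^\dagger)$ be uniform over the $N=\Theta(SA)$ candidate pairs, and compare with a reference MDP $P_0$ in which no pair leads to $g$. Couple the runs under $P_{(s^\dagger,a^\dagger)}$ and $P_0$ up to the first time $g$ is reached. Before that time the trajectories have identical laws, since the special pair differs only by sending the agent to $g$. A trial of the special pair under $P$ reaches $g$ with probability $p$. Hence the probability of reaching $g$ within $T$ steps is at most $p\,\E_0[n_{s^\dagger,a^\dagger}(T)]$, where $n_{s,a}(T)$ counts visits to $(s,a)$. Averaging over the uniform choice of the special pair, $\sum_{s,a}\E_0[n_{s,a}(T)]\leq T$, so the average hitting probability is at most $pT/N\leq \Theta(T/(DSA))$. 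For $T\leq DSA/32$, with suitable constants, this probability is at most $1/2$.

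On the event that $g$ is never reached, every reward is $0$, so the regret equals $T\rhostar=T$. Even if $g$ is reached we only need the nonnegative part. Thus the average expected regret is at least $T/2\cdot$const $\geq T/4$ once the constants are arranged. Since the average over the family is at least $T/4$, some member $P$ achieves this bound.

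\textbf{Main obstacle.} The delicate step is choosing the constants so that two requirements hold at once: the diameter stays at most $D$, using only $D\geq 4\lceil\log_A S\rceil$, and the hitting probability stays at most $1/2$ for all $T\leq DSA/32$. It also matters that the number of candidate pairs is genuinely $\Omega(SA)$ after spending states on the tree and the goal. I would handle the latter by using the leaves' actions plus the internal nodes' spare actions, counting at least $SA/4$ candidate pairs. A secondary subtlety is that the family depends on \texttt{Alg} only through the averaging, and that is fine because the MDP is chosen after the algorithm.
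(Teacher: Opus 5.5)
Your coupling-and-averaging step is sound, and slightly cleaner than the paper's pigeonhole-on-$\mathcal{E}$ argument: the bound $\mathbb{P}_\theta(\text{reach } g \text{ by } T) \le p\,\mathbb{E}_0[n_\theta(T)]$, averaged over $\theta$, gives at most $pT/N$. The gap is in the construction, and it is not a matter of constants. A failed trial of $(s^\dagger,a^\dagger)$ sends the agent back to the root, so each retry costs up to $d+1$ steps. As you note, the diameter constraint then forces $p \gtrsim d/D$ with $d=\lceil\log_A S\rceil$. Put this $p$ into your hitting-probability bound at $T = DSA/32$ with your count $N\geq SA/4$: you get $pT/N \approx (4d/D)(DSA/32)/(SA/4) = d/2$. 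That is vacuous once $d\geq 2$, i.e.\ whenever $S>A$. Since $N\le SA$ always, the ratio is at least $d/8$, so recounting candidates cannot rescue it. In that step you silently reverted to $p=\Theta(1/D)$. With return-to-root failures, the best this yields is a burn-in bound of order $DSA/\log_A S$.

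The missing idea is to make retries cost one step. Let $(s^\dagger,a^\dagger)$ go to $g$ with probability $2/D$ and otherwise \emph{stay at} $s^\dagger$. Take as candidates the leaf actions that are deterministic self-loops in the reference MDP, and reserve one action per leaf to return to the root. Then:
\begin{itemize}
\item the expected time from $s^\dagger$ to $g$ is $D/2$;
\item the diameter is at most $2\lceil\log_A S\rceil + D/2 \le D$, which is exactly where $D\geq 4\lceil\log_A S\rceil$ is used;
\item there are $\lceil (S-1)/2\rceil(A-1)\geq SA/8$ candidates;
\item $pT/N\le 1/2$ for all $T\le DSA/32$.
\end{itemize}
This is the paper's construction, and your coupling carries over unchanged.

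A second, smaller problem is letting every action at $g$ leak back to the root. Then $\rho^\star<1$, and the regret on the no-hit event is $T\rho^\star$ rather than $T$. Worse, the summands $\rho^\star - r_t$ are negative while the agent sits at $g$, so your ``only need the nonnegative part'' step fails. Since $A\geq 2$, give $g$ one deterministic reward-$1$ self-loop and one action that deterministically returns to the root. The MDP stays communicating with $\rho^\star=1$, and every summand is nonnegative. Your averaging then gives $\max_\theta \mathbb{E}_\theta[\mathrm{Regret}(T)]\geq T/2\geq T/4$.
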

We defer the proof, which uses standard constructions, to Appendix~\ref{sec:burn_in_lb_proof}. We emphasize that this result applies to any horizon-$T$ algorithm. 
Since $D \geq \tspannorm{h^\star}$ (in fact they are equal up to a constant factor for this MDP), Theorem \ref{thm:burn_in_lb} implies that any algorithm with a sublinear-in-$T$ regret bound  must have a burn-in requirement of $T \geq \Omega(\tspannorm{h^\star}SA)$. In particular, combining with the lower bound from \citet{auer_near-optimal_2008}, we see that no algorithm can have regret below $\Omega(\sqrt{\tspannorm{\hstar}SAT} + \tspannorm{\hstar}SA)$, matching our Corollary \ref{cor:regret_prior} up to an additional $S$ in the additive burn-in term and $\widetilde{O}(1)$ factors. We conjecture that this lower bound is nearly tight up to a multiple of $\tbigo(1)$ and that the factor of $S$ in our upper bound could be removed, although we believe this may be very challenging and the techniques used to do so in the inhomogeneous episodic setting \citep{zhang_settling_2024} would not apply in the infinite horizon case.

\section{Technical Highlights} \label{sec:technical_highlights}

In this section, we discuss our algorithmic and analytical contributions in the context of related work.
We also include a proof sketch for Theorem~\ref{thm:no_prior_knowledge_H2_lb}.

\subsection{Algorithmic Improvements Over Prior UCB-based Approaches}

Our algorithm FOCUS builds on prior model-based algorithms for the discounted setting, particularly UCBVI-$\gamma$ \citep{he_nearly_2021} and $\gamma$-UCB-CVI \citep{hong_reinforcement_2025},
but introduces several crucial modifications that enable variance-dependent bounds and eliminate extraneous dependence on $\frac{1}{1-\gamma}$.
One main novelty is how the optimistic Q-estimate is updated.
Previous algorithms initialize the estimate using $\widehat{Q}_1\gets \frac{1}{1-\gamma}\mathbf{1}$, and then at each time step $t\in[T]$ perform the following one-step value iteration:
\begin{equation*}
    \widehat{Q}_{t+1}(s,a) \gets r(s,a) + \gamma P^t_{s,a}\widehat{V}_t + \gamma b_t(s,a).
\end{equation*}
Here, $\widehat{V}_t = M\widehat{Q}_t$ (in UCBVI-$\gamma$) or
$\widehat{V}_t = \clipH\big(M\widehat{Q}_t\big)$ (in $\gamma$-UCB-CVI).
We first discuss the bonus term $b_t(s,a)$.
UCBVI-$\gamma$ uses a Bernstein-style bonus similar to that of the UCBVI algorithm for the episodic setting \citep{azar_minimax_2017}.
UCBVI obtains a regret bound which is optimal in the leading term but suboptimal in lower-order terms.
This suboptimality is due to an extra term in the bonus; indeed, in the episodic setting,  \cite{zhang_is_2021} shows a similar term to be unnecessary for optimism.
Hence, UCBVI-$\gamma$ incurs suboptimal lower-order terms for the same reason.
The MVP algorithm \citep{zhang_is_2021} removes this extra term from the bonus to achieve significantly improved lower order terms in the episodic setting.

Secondly, as mentioned above, $\gamma$-UCB-CVI uses a clipping step to ensure $\tspannorm{\widehat{V}_t} \leq H$.
Their analysis has steps that involve upper bounding $\tspannorm{\widehat{V}_t}$, and  clipping allows one to replace some factors of $\frac{1}{1-\gamma}$ with $H$.
However, $\gamma$-UCB-CVI uses a Hoeffding-style bonus, resulting in a suboptimal leading term.
Still, the result of clipping is that the leading term of their $\gamma$-regret bound depends on $H$ instead of $\frac{1}{1-\gamma}$.

In light of the discussion above, an immediate idea is to combine clipping with the sharpest Bernstein-style bonus similar to that of the MVP algorithm.
This strategy does yield an improvement, and we believe it would achieve a $\gamma$-regret of $\tbigo\big( \sqrt{\tspannorm{\vstar_\gamma}SAT} + HS^2A + \frac{SA}{1-\gamma} \big)$. 
While this bound is better than the state-of-the-art for $\gamma$-regret, the lower-order factor of $\frac{1}{1-\gamma}$ would prevent us from obtaining a variance-dependent or priorless span-based regret bound for the average-reward setting. 
Indeed, applying our average-to-discounted reduction (Lemma~\ref{lem:avg_to_disc_reduction}) to derive an optimal regret bound would require setting $\gamma=1-\sqrt{HT/(SA)}$, and the resulting bound would be $\tbigo\big( \sqrt{HSAT} + HS^2A\big)$.
This bound is not variance-dependent, and moreover the leading term depends on the tuning parameter $H$ instead of $\tspannorm{\hstar}$, which means the algorithm would only be optimal with prior knowledge of $\tspannorm{\hstar}$.

An intuitive explanation for the suboptimality of these prior methods is that while their estimated $\widehat{Q}_t$ eventually converges to $\qstar$ when $t\to\infty$, for finite $t$ the estimation error is significant when $\gamma$ is close to $1$, since they perform only one step of value iteration at a time.
To be more precise, let $\widehat{\mathcal{T}}_t$ denote the empirical Bellman operator used at time $t$.
The estimate $\widehat{Q}_{t+1}$ may remain significantly larger than the fixed point of $\widehat{\mathcal{T}}_t$, especially for small $t$.
This results in a detrimental dependency on $\frac{1}{1-\gamma}$ in lower order terms even if we use clipping.
In particular, as $\gamma$ gets closer to 1, $\widehat{Q}_1$ is initialized with larger entries, and it takes longer for $\widehat{Q}_t$ to converge to the best values supported by the data.
In short, value iteration takes on the order of $\frac{1}{1-\gamma}$ steps to approximately converge, while the statistical error converges to $0$ at an unrelated and potentially much faster rate, especially for low-variance MDPs or average-reward settings where $\gamma$ is tuned to be very large.

To address this issue, our algorithm FOCUS fully optimizes the Q-estimate at the beginning of each episode $k$.
Concretely, the algorithm iteratively applies the empirical Bellman operator $\widehat{\mathcal{T}}_k$ until convergence, producing an estimate $\widehat{Q}_k$ that fully exploits all data collected to that point.
This mechanism of fully optimizing is also a feature of algorithms that utilize the EVI subroutine, suggesting that full exploitation of available data is crucial for achieving optimal span-based bounds in the average-reward setting.
With this strategy, we obtain a $\gamma$-regret bound without dependence on $\frac{1}{1-\gamma}$, which allows us to solve the average-reward problem with a simple, UCB-based approach.

\subsection{How Full Optimization Helps in Regret Analysis}

A key technical challenge that illustrates the necessity of fully optimizing lies in controlling $\tind$, a term in our regret decomposition which accounts for changes in the value estimate along the learner's trajectory.
To see this, we first note that the prior work in \cite{hong_reinforcement_2025} bounds an analogous quantity in the analysis of $\gamma$-UCB-CVI according to
\begin{align*}
    \sum_{t=1}^T \left(\widehat{V}_{t-1}(s_{t+1}) - \widehat{V}_{t}(s_t) \right) 
    &\leq \sum_{t=1}^T \left(\widehat{V}_{t-1}(s_{t+1}) - \widehat{V}_{t+1}(s_{t+1}) \right) + \frac{1}{1-\gamma} \\
    & \leq \sum_{s\in\St}\sum_{t=1}^T \left(\widehat{V}_{t-1}(s) - \widehat{V}_{t+1}(s)\right) + \frac{1}{1-\gamma} \\
    &\leq O\left( \frac{S}{1-\gamma} \right). 
\end{align*}
Here, the second inequality holds because the value estimates are monotonically decreasing with $t$, a property they enforce by taking minimums.
These calculations show that the scale of this term depends on the possible range of value estimates, which, under one-step value iteration updates, can be as large as $\frac{1}{1-\gamma}$.
In contrast, by fully running the value iteration procedure, along with the use of clipping, our value estimates lie in a range of $H$ times the number of episodes, which is only logarithmic in $T$ due to the doubling trick.
Formally, letting $m$ be the total number of episodes and $t_k$ be the time at the start of the $k$th episode, a telescoping argument yields
\[
\tind 
= \sum_{k=1}^m \sum_{t=t_k}^{t_{k+1}-1} \left(\widehat{V}_k(s_{t_{k+1}}) - \widehat{V}_k(s_{t_k} \right)
\leq \sum_{k=1}^m \left(\widehat{V}_k(s_{t_{k+1}}) - \widehat{V}_k(s_{t_k} \right)
\leq mH
\leq \tbigo(HSA).
\]

\subsection{Comparison to EVI-Based Approaches}

The two existing minimax-optimal algorithms for the average-reward setting, EBF \citep{zhang_regret_2019} and PMEVI-DT \citep{boone_achieving_2024}, are both   based on EVI.
The way that these algorithms refine previous EVI-based approaches suggest that obtaining an optimal span-dependent regret requires exploiting structural information encoded in the optimal bias function $\hstar$.
We elaborate on this point by examining the state-of-the-art PMEVI-DT algorithm, whose name reflects two central modifications to standard EVI.
We remark that EBF is an earlier attempt in this direction, in that it estimates bias differences to shrink the confidence set, but incorporates this restriction through a step that is not efficiently computable.

At each episode, PMEVI-DT runs a ``BiasEstimation'' subroutine to construct a confidence set for $\hstar$.
The extended Bellman operator used in EVI is then combined with a ``projection'' step (the P in PMEVI-DT) that constrains the possible models to those with optimal bias in the confidence set.
Their extended Bellman operator also incorporates a ``mitigation'' step (the M in PMEVI-DT) that uses the bias confidence region to tighten a Bernstein-type variance constraint.
Despite this machinery, PMEVI-DT still requires either prior knowledge of $\spannorm{\hstar}$ or a condition like $T\geq \spannorm{\hstar}^5$ to achieve minimax-optimal regret.

Our results show that under the same type of assumptions, our UCB-based algorithm sufficiently exploits $\hstar$ without explicitly estimating it.
The span-clipping component of the empirical Bellman update replaces the projection and mitigation steps of PMEVI-DT with a simple, easily interpretable operation.
Specifically, span-clipping prevents the value estimate from being overly optimistic;
a smaller clipping threshold $H$ reduces exploration and increases immediate exploitation of known high-reward actions. 
This span-clipping technique was introduced by \cite{fruit_efficient_2018} as part of the EVI-based algorithm SCAL.
While SCAL obtains a span-based regret bound, it does not maintain sharp confidence regions and consequently its regret suffers from extra factors of $\tspannorm{\hstar}$ and $S$.
The more involved bias estimation techniques of EBF and PMEVI-DT circumvent these issues and produce sharp confidence regions with bounded bias spans, but our algorithm achieves optimal regret bounds with the simpler combination of span-clipping and a sharp Bernstein-style bonus.

Finally, given the success of UCB-based approaches in the episodic setting, we remark that it is at least somewhat surprising that such algorithms have yet to be thoroughly studied in the infinite-horizon average-reward setting. 
We suggest two contributing factors.
First, our approach utilizes an average-to-discounted reduction, a strategy which only recently has been shown to yield optimal span-based bounds \citep{zurek_span-based_2025, zurek_span-agnostic_2025}. 
In these works which derive an optimal span-based bound via an average-to-discounted reduction, a crucial step is tight analysis of variance-dependent quantities to remove factors of $\frac{1}{1-\gamma}$. 
The analogous step in our results is Lemma~\ref{lem:var_star_bound}, which was vital in successfully applying the reduction.
Secondly, since the seminal work of \cite{auer_near-optimal_2008}, the most well-studied and successful algorithms for the online infinite-horizon average-reward setting  have been EVI-based, so the most natural route for obtaining a minimax-optimal algorithm was to refine these existing works.

\subsection{Proof Sketch for Theorem~\ref{thm:no_prior_knowledge_H2_lb}}
\label{sec:no_prior_h2_lb_sketch}

We sketch the construction underlying Theorem~\ref{thm:no_prior_knowledge_H2_lb}, which shows that without prior knowledge of the bias span, any algorithm must incur a burn-in cost of order $\spannorm{\hstar}^2SA$.
Consider the MDPs $P_1$ and $P_2$ in Figure~\ref{fig:lower_bound_sketch}.
The MDPs, which both have two states and two actions, are nearly identical. For both we let state $1$ be the initial state.
In state 1 of both MDPs, the $\stay$ action yields reward 1/2 and remains in state 1, and the $\leave$ action yields reward 0 and has a small probability ($1/B$, for a parameter $B>2$) of transiting to state 2.
Furthermore, in both MDPs the $\leave$ action in state 2 yields reward 0 and transits to state 1.
The difference between $P_1$ and $P_2$ is the $\stay$ action in state 2.
In both MDPs it yields maximum reward 1, but in $P_1$ it remains in state 2, while in $P_2$ it transits back to state 1.
It follows that in $P_1$, state 2 is an absorbing high-reward region; the optimal policy is to reach state 2 and stay there for an average reward of 1.
In $P_2$, on the other hand, state 2 offers no long-term benefit, and the optimal policy is to stay in state 1 for an average reward of 1/2.
Additionally, a direct calculation confirms that the span of the optimal bias function in $P_1$ is $B$ (reflecting the long delay before being able to collect reward 1), while that of $P_2$ is 1/2.

\begin{figure}[ht]
    \centering
    \begin{minipage}{0.45\textwidth}
        \centering
        \begin{tikzpicture}[ -> , >=stealth, shorten >=2pt , thick, node distance =2cm, scale=0.9]

            \node [circle, draw] (one) at (-3 , 0) {1};
            \node [circle, draw] (two) at (0 , 0) {2};
            \node [circle, draw, fill, inner sep=0.03cm] (dot1) at (-3 , -1) {};
            
            \path (one) edge [loop left, looseness=15] node [left] {$a=\stay, r=\frac{1}{2}$}  (one) ;
            \path (two) edge [loop right, looseness=15] node [pos=1.05, above right] {$a=\stay, r=1$}  (two) ;
            
            \draw[-] (one) to[bend right] node[left] {$a=\leave, r=0$} (dot1);
            \draw[->, dashed] (dot1) to[bend right] node[right] {$1-\frac{1}{B}$} (one);
            \draw[->, dashed] (dot1) to[bend right] node[below] {$\frac{1}{B}$} (two);
        
            \draw[->] (two) to[bend right] node[above] {$a=\leave, r=0$} (one);
    
        \end{tikzpicture}
        \\[1ex] 
        $P_1$
    \end{minipage}
    \hfill 
    \begin{minipage}{0.45\textwidth}
        \centering
        \begin{tikzpicture}[ -> , >=stealth, shorten >=2pt , thick, node distance =2cm, scale=0.9]

            \node [circle, draw] (one) at (-3 , 0) {1};
            \node [circle, draw] (two) at (0 , 0) {2};
            \node [circle, draw, fill, inner sep=0.03cm] (dot1) at (-3 , -1) {};
            
            \path (one) edge [loop left, looseness=15] node [left] {$a=\stay, r=\frac{1}{2}$}  (one) ;
            
            \draw[-] (one) to[bend right] node[left] {$a=\leave, r=0$} (dot1);
            \draw[->, dashed] (dot1) to[bend right] node[right] {$1-\frac{1}{B}$} (one);
            \draw[->, dashed] (dot1) to[bend right] node[below] {$\frac{1}{B}$} (two);
        
            \draw[->] (two) to[bend right] node[pos=0.55, above] {$a=\leave, r=0$} (one);

            \draw[->] (two) to[bend right=115, looseness=1.3] node[midway, above]{$a=\stay, r=1$} (one);
    
        \end{tikzpicture}
        $P_2$
    \end{minipage}
    \caption{An example of the MDPs used in the proof of Theorem~\ref{thm:no_prior_knowledge_H2_lb}. Here each state-action pair is annotated with its reward. If the transition associated with a state-action pair is deterministic, it is denoted with a solid arrow. If it is stochastic, it is represented as a solid line splitting into multiple dashed arrows to different states, each annotated with the associated probability of that transition. The MDPs are parameterized by $B>2$, both have starting state $1$, and differ only in the transition distribution of the \texttt{stay} action of state $2$. In $P_1$ an optimal stationary policy traverses to state $2$ and stays there, while in $P_2$ an optimal stationary policy remains in state $1$.}
    \label{fig:lower_bound_sketch}
\end{figure}
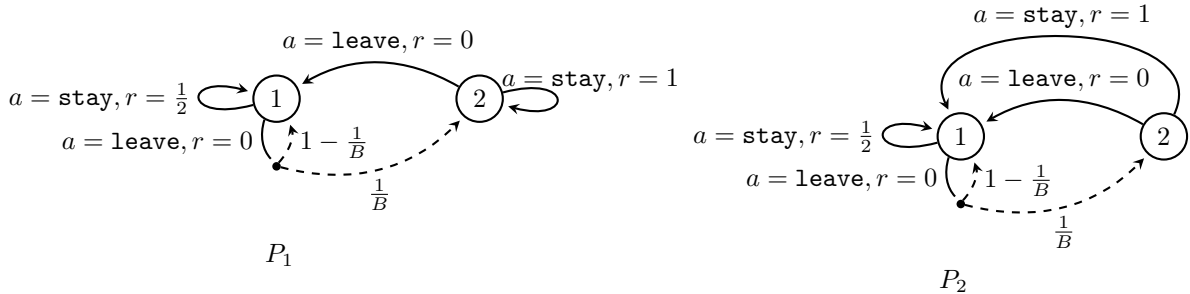

Now, suppose a learning algorithm has no structural information about the underlying MDP and promises a sublinear regret bound on $P_1$.
To achieve sublinear regret on $P_1$, the learner clearly must reach state 2.
Otherwise, the learner would incur at least regret 1/2 per time step.
Moreover, reaching state 2 requires taking the $\texttt{leave}$ action $B$ times in expectation.
On $P_2$, however, these exploration attempts are wasteful since $\texttt{leave}$ yields reward 0 with no potential to recoup reward in state 2.
Yet, since $P_1$ and $P_2$ only differ on the $\texttt{stay}$ action in state $2$, even when the true MDP is $P_2$, the algorithm must still reach state $2$ and collect data there to guard against the possibility that the MDP is actually $P_1$. Note that this deduction is only valid without prior knowledge, because with knowledge that the bias span is at most $1/2$, the possibility of $P_1$ could be eliminated without reaching state $2$ since $\tspannorm{h^\star_{P_2}} >1/2$.
Thus, any prior-knowledge-free algorithm which achieves less than $T/2$ regret on $P_1$ must incur at least $\Omega(B)$ regret on $P_2$.
We summarize the preceding argument in the following intermediate result.
\begin{lemma}[Simplified Version of Theorem~\ref{thm:lower_bound_prior}] \label{lem:informal_lb}
    There is a universal constant $c\in(0,1)$ so that the following holds.
    Fix $T,B$, and let $\texttt{Alg}$ be any horizon-$T$ algorithm. There exists two MDPs $P_1$ and $P_2$ such that $\spannorm{\hstar_{P_1}}=B$, $\spannorm{\hstar_{P_2}}=1/2$, and
    \[ \E_{P_1}[\reg(T)] < T/4 \implies \E_{P_2}[\reg(T)] \geq cB. \]
\end{lemma}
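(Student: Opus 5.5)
We use the two MDPs $P_1,P_2$ of Figure~\ref{fig:lower_bound_sketch} and a change-of-measure argument based on the fact that they differ only at the pair $(2,\stay)$.

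\textbf{Spans and gains.} First we compute the optimal gains and bias spans. In $P_2$, state 2 is never worth visiting. From state 2, $\stay$ and $\leave$ both return to state 1, with rewards $1$ and $0$. Hence $\rhostar=1/2$, the bias at state 2 exceeds that at state 1 by $1-1/2=1/2$, and $\spannorm{\hstar_{P_2}}=1/2$. In $P_1$ we have $\rhostar=1$. Reaching state 2 from state 1 takes $B$ expected steps of $\leave$, each with reward $0$ and hence deficit $1$ relative to the gain. Staying in state 1 has deficit $1/2$ per step, which is worse in the long run. So $h(1)-h(2)=-B$ and $\spannorm{\hstar_{P_1}}=B$. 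Both MDPs are communicating, and these are routine calculations.

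\textbf{Coupling.} Let $N$ be the number of times the algorithm plays $\stay$ in state 2 before time $T$, and let $\tau$ be the first such time. Run $\texttt{Alg}$ on $P_1$ and $P_2$ with shared randomness. The two trajectories coincide until time $\tau$. So the event $E=\{\tau\le T\}$, that $(2,\stay)$ is ever played, has the same probability $p$ under both MDPs.

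\textbf{Case $p$ small.} If $p$ is small, say $p<1/2$, then under $P_1$, on $E^c$ the agent never plays $\stay$ at state 2. Every step then has reward at most $1/2$ except $\leave$ from state 2, which has reward $0$. Since $\rhostar=1$ in $P_1$, each step contributes regret at least $1/2$, so $\reg(T)\ge T/2$ on $E^c$. Therefore $\E_{P_1}[\reg(T)]\ge (1-p)T/2\ge T/4$, contradicting the hypothesis. Hence $p\ge 1/2$.

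\textbf{Regret on $P_2$.} It remains to show that under $P_2$, reaching state 2 with probability at least $1/2$ costs $\Omega(B)$ regret. In $P_2$ the per-step regret is $1/2-r_t$. A $\leave$ from state 1 costs $1/2$. A visit to state 2 followed by $\stay$ gains $1/2$, and a visit followed by $\leave$ costs $1/2$. Each visit to state 2 requires a preceding $\leave$ from state 1, so pairing the two gives a net cost of at least $0$ per visit. It is cleanest to use the bias decomposition $\reg(T)=\sum_t(\rhostar-r_t)$ together with $h^\star_{P_2}$: the sum equals $\sum_t \Delta(s_t,a_t)$ plus a martingale plus a boundary term of size at most $1/2$, where $\Delta(1,\leave)=1/2-0+(1-1/B)h(1)+h(2)/B-h(1)=1/2-1/(2B)\ge 1/4$ and $\Delta$ is $0$ elsewhere. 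Taking expectations, $\E_{P_2}[\reg(T)]\ge \tfrac14\E[L]-1/2$, where $L$ is the number of $\leave$ plays from state 1 before $T$.

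Each such $\leave$ reaches state 2 independently with probability $1/B$. A union bound gives $\P(\text{reach state 2 within the first } \lfloor B/4\rfloor \text{ leaves})\le 1/4$. Since $p\ge1/2$, we get $\P(L\ge B/4)\ge 1/4$, so $\E[L]\ge B/16$. This yields $\E_{P_2}[\reg(T)]\ge B/64-1/2$. For $B\ge 64$ this is at least $cB$. Smaller $B$ is handled either by taking $c$ small, noting that any visit to state 2 needs at least one leave (so the regret is at least about $1/8$ once $p\ge1/2$), or by assuming $B$ exceeds a constant.

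\textbf{Main obstacle.} The delicate step is the regret accounting on $P_2$: the reward $1$ earned by $\stay$ in state 2 could in principle offset the exploration cost. The bias/advantage decomposition resolves this cleanly, since it shows $(2,\stay)$ has zero advantage while $(1,\leave)$ has strictly positive advantage. The remaining work is to check the identity $\Delta(2,\cdot)=0$ and to bound the martingale term, which vanishes in expectation.
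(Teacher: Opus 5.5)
Your proof is correct for the simplified statement, but it takes a different route from the paper's proof of the formal version, Theorem~\ref{thm:lower_bound_prior}.

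\textbf{The paper's route.} The paper first assumes, without loss of generality, that the algorithm is deterministic and acts optimally once it has observed a transition to the rewarding state. This caps the reward-$1$ plays in $P_2$ at one, which gives the accounting $\reg(T,P_2)\ge\frac{1}{2}\nleave-\frac{1}{2}$. It also underlies the identity $\E_{P_1}[\nleave]=\E_{P_2}[\nleave]$, which transfers the exploration count between instances. From $\E_{P_1}[\reg(T)]<T/4$ the paper gets a constant probability of reaching the rewarding state. It then deduces that the fixed no-success action sequence tries every candidate pair at least $B/10$ times. Finally it lower bounds $\E[\nleave]$ through the explicit sum $\sum_k t_k(s,a)(1-\frac{2}{B})^{k-1}\frac{2}{B}$ and the averaging/rearrangement Lemma~\ref{lem:sum_lower_bound}.

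\textbf{Your route.} You transfer only the probability of an event determined by the common prefix up to the first play of $(2,\stay)$. The coupling is therefore exact and imposes no conventions on the algorithm, so randomized algorithms are handled directly. You replace the rearrangement argument by a union bound over the first $\lfloor B/4\rfloor$ attempts. You neutralize the potential offset from reward $1$ at $(2,\stay)$ with the gap decomposition relative to $h^\star_{P_2}$, where $\Delta(2,\stay)=0<\Delta(1,\leave)$, rather than by forbidding repeated visits.

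\textbf{What each buys.} Your argument is cleaner for the two-state instance, but it only yields $cB$. The paper's heavier machinery is what produces the extra $SA$ factor needed for Theorem~\ref{thm:no_prior_knowledge_H2_lb}. With $S'A'$ hidden candidate pairs, one must average over the hidden pair to show that some instance forces $\Omega(BSA)$ wasted attempts, and a single-pair union bound cannot give that.

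\textbf{Small slips.} None of these affects the conclusion.
\begin{itemize}
\item Your displayed expression for $\Delta(1,\leave)$ has the signs of the $h$ terms reversed; it should read $\rhostar+h(1)-r(1,\leave)-P_{1,\leave}h$. The value $\frac{1}{2}-\frac{1}{2B}$ you state is right.
\item $\Delta(2,\leave)=1$, not $0$. This is harmless because $\Delta\ge 0$ everywhere.
\item The small-$B$ fallback does not follow from the bound you derived, since $\frac{1}{4}\cdot\frac{1}{2}-\frac{1}{2}<0$. The repair is to observe that $s_1=1$ minimizes $h^\star_{P_2}$. The boundary term $h^\star_{P_2}(s_{T+1})-h^\star_{P_2}(s_1)$ is then nonnegative, so $\E_{P_2}[\reg(T)]\ge\frac{1}{4}\E[L]$. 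With $\E[L]\ge B/16$ for $B\ge 4$ and $\E[L]\ge\frac{1}{2}$ otherwise, this gives $c=1/64$ for every $B\ge 2$, with no extra assumption on $B$.
\end{itemize}
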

We now sketch how Theorem~\ref{thm:no_prior_knowledge_H2_lb} follows from Lemma~\ref{lem:informal_lb}, noting that in this discussion we will ignore factors of $S$ and $A$ for ease of presentation.
Let $\alpha\in[1,2)$ and $\beta\geq 1$ be arbitrary. For a sufficiently large $T$ --- specifically $T$ satisfying $\frac{\beta^2}{c^2}(T^{3/4} + T^{\alpha/2}) < T/4$ and $\sqrt{T/2} + 1/2 <\sqrt{T}$ --- suppose towards a contradiction that there exists a horizon-$T$ algorithm that obtains a $\beta\big(\sqrt{\tspannorm{\hstar}T} + \tspannorm{\hstar}^\alpha\big)$ regret bound in expectation for any MDP with $SA=O(1)$.
We then choose $B=\frac{\beta}{c}\sqrt{T}$ and let MDPs $P_1$ and $P_2$ be as in Lemma~\ref{lem:informal_lb}.
Next, we compute that on $P_1$ the algorithm obtains regret
\[
\E_{P_1}[\reg(T)]
\leq
\beta(\sqrt{BT} + B^\alpha) 
\leq 
\beta \left( \sqrt{T\frac{\beta}{c}\sqrt{T}} + \left(\frac{\beta}{c}\sqrt{T}\right)^\alpha \right)
\leq
\frac{\beta^2}{c^2}\left(T^{3/4} + T^{\alpha/2}\right)
<
T/4,
\]
while on $P_2$ the algorithm obtains regret
\[
\E_{P_2}[\reg(T)]
\leq
\beta\left(\sqrt{T/2} + \left(1/2\right)^\alpha\right) < \beta\sqrt{T} = cB
\]
which contradicts Lemma~\ref{lem:informal_lb}.

We emphasize that the role of no prior knowledge is implicit but crucial.
The theorem fixes a single horizon-$T$ algorithm and evaluates it on two MDPs with very different optimal bias spans.
If prior knowledge were available, a different horizon-$T$ algorithm could be deployed on each instance, and the above contradiction would not arise.
Prior work (e.g., \cite{fruit_efficient_2018}) demonstrates how prior knowledge allows the learner to aggressively utilize span-clipping and exploit earlier, thereby avoiding unnecessary exploration.
Our hard MDP instances illustrate that, in general, such exploitation is impossible without prior knowledge.
Indeed, without prior knowledge the algorithm must explore significantly longer to perform optimally in instances with large bias span, but this exploration results in worse burn-in cost in instances with small bias span.

\section{Conclusion}
We developed the first algorithm for both average-reward regret and $\gamma$-regret that is simultaneously minimax-optimal and variance-dependent. Additionally, our average-reward regret bounds have optimal lower-order dependence on $\tspannorm{\hstar}$, and we proved lower bounds which revealed a fundamental gap in what is achievable with and without prior knowledge. 
One open problem is to eliminate the $\Gamma$ factor from the lower-order terms of Theorems \ref{thm:var_dependent_gamreg} and \ref{thm:var_dependent_reg}, which has recently been done in the inhomogeneous episodic setting \citep{zhang_settling_2024} but appears more challenging in infinite-horizon settings.

\section*{Acknowledgment}

G.\ Zamir, M.\ Zurek, and Y.\ Chen acknowledge support by National Science Foundation grants CCF-2233152 and DMS-2023239, by a Cisco Systems Fellowship, and by a Vilas Associates Award.

\bibliographystyle{plainnat}
\bibliography{refs}

\appendix

\section{More Related Work}
\label{sec:related_work}
Here we discuss additional related work.

\paragraph{Average-Reward Simulator and Offline Settings}
A complementary problem setting to online RL is the offline/simulator setting, where the goal is to learn a $\varepsilon$-optimal policy $\pi$ (such that $\rho^\pi(s) \geq \rho^\star(s) - \varepsilon$ for all $s \in \St$) from a fixed/simulator-generated dataset with the minimum number of samples. A sequence of works obtained sharper sample complexity bounds with relaxed assumptions on the environmental structure (e.g. \citealt{jin_efficiently_2020, jin_towards_2021, wang_near_2022, wang_optimal_2024, li_stochastic_2024}), culminating with the optimal $\spannorm{\hstar}$-based sample complexity bound of $\tbigo(SA\frac{\spannorm{\hstar}}{\varepsilon^2})$ shown by \citet{zurek_span-based_2025}, matching a lower bound due to \citet{wang_near_2022}. However, this result required prior knowledge of $\spannorm{\hstar}$, leaving the question open of whether the optimal sample complexity could be obtained by algorithms without prior knowledge. After extensive research effort \citep{neu_dealing_2024, tuynman2024finding, zurek_plug-approach_2024, lee_near-optimal_2025}, this question was answered affirmatively by \citet{zurek_span-agnostic_2025}. \citet{tuynman2024finding} and \citet{zurek_span-based_2025} show various hardness results related to estimating $\spannorm{\hstar}$ and $\spannorm{\hstar}$-based PAC guarantees with online environment access. A very common approach throughout these works is to reduce the average-reward problem to a discounted one; obtaining sharp variance bounds for the discounted problem, somewhat analogous to our Lemma \ref{lem:var_star_bound}, plays a key role in all minimax-optimal approaches \citep{zurek_span-based_2025, zurek_plug-approach_2024, zurek_span-agnostic_2025}. All of the aforementioned work is for the generative model setting, where a dataset with uniform coverage can be sampled. Recent works have also studied offline settings with more general data sampling patterns \citep{gabbianelli_offline_2023, ozdaglar_offline_2024, zurek2025optimal}.

\paragraph{Episodic Online RL}
Studying the theoretical limits of regret for episodic online RL has been one of the most fundamental problems in RL theory. Hence, we cannot provide a comprehensive review of work on this topic, but we discuss a few related works with strong connections to our own. 
\cite{azar_minimax_2017} establishes minimax regret bounds in the episodic setting with the UCBVI algorithm, which uses a Bernstein-style bonus. \cite{zanette_tighter_2019} developed the EULER algorithm which obtains both minimax-optimal and variance-dependent regret bounds.
\cite{zhang_is_2021} greatly improves lower order terms by using an even sharper Bernstein-style bonus in their MVP algorithm.
Later refinements to the MVP algorithm in \cite{zhou_sharp_2023} and \cite{zhang_settling_2024} yield optimal lower order terms and variance-dependent regret bounds that adapt to the difficulty of the environment.

\section{\texorpdfstring{$\gamma$}{Gamma}-Regret Bound of \texorpdfstring{$\gamma$}{Gamma}-UCB-CVI} \label{sec:gamma_regret_ucbcvi_details}

In this section we show how the analysis of $\gamma$-UCB-CVI in \cite{hong_reinforcement_2025} can be modified slightly to obtain a span-based $\gamma$-regret bound.
The quantities $P^t, V_t$ and $N_t$ refer to the empirical transition kernel, value estimate, and empirical state-action counts, respectively, of $\gamma$-UCB-CVI at time $t$.
$\beta = \widetilde{\Theta}(\spannorm{\vstar_\gamma}\sqrt{S})$ is a constant in the bonus term which is large enough to ensure optimism.
Additionally, in their analysis they often immediately bound $\spannorm{\vstar_\gamma} \leq 2\spannorm{\hstar}$ and end up with factors of $\spannorm{\hstar}$ in intermediate steps.
Below, we leave in the factors of $\spannorm{\vstar_\gamma}$.

Via a concentration argument on $|(P^{t-1}_{s_t,a_t} - P_{s_t,a_t})V_{t-1}|$ (their Lemma 3), \citeauthor{hong_reinforcement_2025} obtains that with high probability
\[
r(s_t,a_t) \geq V_t(s_t) - \gamma P_{s_t,a_t}V_{t-1} - \frac{2\beta}{\sqrt{N_{t-1}(s_t,a_t)}}.
\]
Subsequently, \citeauthor{hong_reinforcement_2025} decompose the average-reward regret as
\begin{align*}
    \reg(T)
    &= \sum_{t=1}^T (\rhostar - r(s_t,a_t)) \\
    &\leq \sum_{t=1}^T \left( \rhostar - V_t(s_t) + \gamma P_{s_t,a_t}V_{t-1} + \frac{2\beta}{\sqrt{N_{t-1}(s_t,a_t)}}\right) \\
    &= \underbrace{\sum_{t=1}^T (\rhostar - (1-\gamma)V_t(s_t))}_{(a)} \quad + \quad \gamma \underbrace{\sum_{t=1}^T (V_{t-1}(s_{t+1} - V_t(s_t)))}_{(b)} \\
    &\qquad\qquad + \gamma \underbrace{\sum_{t=1}^T (P_{s_t,a_t}V_{t-1} - V_{t-1}(s_{t+1}))}_{(c)} \quad + \quad 2\underbrace{\beta \sum_{t=1}^T \frac{1}{\sqrt{N_{t-1}(s_t,a_t}}}_{(d)}.
\end{align*}
To instead analyze $\reg_\gamma(T) = \sum_t ((1-\gamma)\vstar_\gamma(s_t) - r(s_t,a_t)$, we simply replace $\rhostar$ with $(1-\gamma)\vstar_\gamma(s_t)$.
With this replacement, term $(a)$ vanishes due to optimism (their Lemma 4).
For the other terms, the bounds in the original proof still hold.
In particular, term $(b)$ is bounded by $O(\frac{S}{1-\gamma})$, term $(c)$ is bounded by $\tbigo(\spannorm{\vstar_\gamma}\sqrt{T})$, and term $(d)$ is bounded by $O(\beta\sqrt{SAT})\leq \tbigo(\spannorm{\vstar_\gamma}S\sqrt{AT})$.
Recombining terms, we obtain that with high probability,
\[
\reg_\gamma(T) \leq \tbigo\left( \spannorm{\vstar_\gamma}S\sqrt{AT} + \frac{S}{1-\gamma} \right).
\]

\section{Technical Lemmas}

\begin{lemma}[Bernstein's Inequality, Theorem 3 in \citealt{maurer_empirical_2009}] \label{lem:bernstein}
Let $Z, Z_1, \dots, Z_n$ be i.i.d.\ random variables with values in $\left[ c_{\min}, c_{\max} \right]$ for some constants $c_{\min} < c_{\max}$.
Set $c = c_{\max} - c_{\min}$. 
Let $\delta > 0$.
Then we have with probability at least  $1-\delta$ that
\[
	\left| \mathbb{E}[Z] - \frac{1}{n}\sum_{i=1}^{n} Z_i \right|
	\leq 
	\sqrt{\frac{2\mathrm{Var}(Z) \log\left( 2 /\delta\right)}{n}} + \frac{c \log\left( 2 /\delta \right) }{3n} 
.\] 
\end{lemma}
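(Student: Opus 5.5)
This is the empirical Bernstein-type bound (the standard Bernstein inequality with true variance), quoted from \citet{maurer_empirical_2009}. The plan is to derive it from Bernstein's moment-generating-function bound applied to each tail separately, followed by a union bound.

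First, center and rescale. Let $X_i = Z_i - \E[Z]$, so that $|X_i| \leq c$ almost surely and $\E[X_i^2] = \mathrm{Var}(Z) =: \sigma^2$. The quantity $\E[Z]$ lies in $[c_{\min},c_{\max}]$, which is why the range $c$, rather than $2c$, controls $|X_i|$. Then, for $0<\lambda<3/c$, use the elementary inequality $e^x \leq 1 + x + \frac{x^2/2}{1 - x/3}$, valid for $x<3$. Combined with $\E[X_i]=0$ and $1+u\leq e^u$, it gives
\[
\E\big[e^{\lambda X_i}\big] \leq \exp\!\Big(\frac{\lambda^2\sigma^2}{2(1-\lambda c/3)}\Big).
\]
By independence, the moment generating function of $\sum_i X_i$ is the $n$-th power of this bound.

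Next comes the Chernoff step. Markov's inequality gives
\[
\P\Big(\sum_i X_i \geq n\epsilon\Big) \leq \exp\!\Big(-\lambda n\epsilon + \frac{n\lambda^2\sigma^2}{2(1-\lambda c/3)}\Big),
\]
and optimizing over $\lambda$ yields the classical form $\exp\big(-\frac{n\epsilon^2}{2\sigma^2 + 2c\epsilon/3}\big)$. Set this equal to $\delta/2$, with $L=\log(2/\delta)$, and solve the resulting quadratic in $\epsilon$:
\[
\epsilon = \frac{cL}{3n} + \sqrt{\Big(\frac{cL}{3n}\Big)^2 + \frac{2\sigma^2 L}{n}} \leq \sqrt{\frac{2\sigma^2 L}{n}} + \frac{2cL}{3n}.
\]
This crude inversion produces $2cL/(3n)$ in the second term instead of the claimed $cL/(3n)$.

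The main obstacle is recovering the exact constant $1/3$. I would first try inverting the tail bound more carefully, choosing $\lambda$ directly as a function of $L$ rather than inverting the optimized exponent. The cleanest route is to use the sub-gamma form $\psi(\lambda)\leq \frac{\lambda^2\sigma^2}{2(1-\lambda c/3)}$, with scale parameter $b=c/3$. For the sum, the variance parameter is $v=n\sigma^2$ and the scale stays $b=c/3$. The standard sub-gamma inversion then gives deviation at most $\sqrt{2vL} + bL$, which is exactly $\sqrt{2n\sigma^2 L} + cL/3$ for the sum. Dividing by $n$ gives the stated one-sided bound with probability $1-\delta/2$.

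Finally, apply the same argument to $-X_i$ for the lower tail, and take a union bound over the two tails to obtain the two-sided statement with probability at least $1-\delta$.
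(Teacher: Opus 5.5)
Your argument is correct. The paper gives no proof of its own: it imports the bound from Maurer and Pontil, where it is stated one-sided, for $[0,1]$-valued variables, with $\log(1/\delta)$. The form used here follows by rescaling to range $c$ and a union bound over the two tails at level $\delta/2$ each, which is exactly your last step. So your proposal is a self-contained Cram\'er--Chernoff derivation of what the paper treats as a black box. What it adds is an explanation of the constant $1/3$: it comes from the sub-gamma scale $c/3$ in $\psi(\lambda)\le \frac{\lambda^2\sigma^2}{2(1-\lambda c/3)}$ together with an exact inversion, whereas the textbook Bernstein tail only gives $2/3$.

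Three refinements:

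\begin{enumerate}
\item Your diagnosis of the lost factor $2$ is slightly off. Exact optimization over $\lambda$ does not give the classical form; the classical form is a relaxation of it. Moreover, nothing is lost in solving the quadratic: its exact root is already $\frac{2cL}{3n}$ when $\sigma^2=0$. The loss happens earlier, when the exact Legendre transform of $\frac{v\lambda^2}{2(1-b\lambda)}$ is replaced by its rational lower bound $\frac{u^2}{2(v+bu)}$.

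\item Your fix of choosing $\lambda$ as a function of $L$ is the right one, and you can make the ``standard sub-gamma inversion'' explicit. Take $v=n\sigma^2$, $b=c/3$, and $\lambda=\frac{\sqrt{2L}}{\sqrt{v}+b\sqrt{2L}}$, which lies in $(0,1/b)$ when $v>0$ (the case $v=0$ is trivial). Then $1-b\lambda=\frac{\sqrt v}{\sqrt v+b\sqrt{2L}}$, and the Chernoff exponent at threshold $\sqrt{2vL}+bL$ is exactly $-L$.

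\item Your pointwise inequality $e^x\le 1+x+\frac{x^2/2}{1-x/3}$ does hold for all $x<3$. For $x<0$, however, it does not follow from the termwise comparison $k!\ge 2\cdot 3^{k-2}$, since the odd terms are negative and the right side lies below $x^2/2$. Either verify that case separately, or use $e^x\le 1+x+\frac{x^2}{2}$ for $x\le 0$ and absorb the factor $\frac{1}{1-\lambda c/3}\ge 1$, as in the standard proof.
\end{enumerate}

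Finally, a labeling slip: this lemma is the true-variance Bernstein bound. The empirical Bernstein bound is the next lemma.
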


\begin{lemma}[Empirical Bernstein's Inequality, Theorem 4 in \citealt{maurer_empirical_2009}] \label{lem:empirical_bernstein}
Let $Z,$ $ Z_1, \dots, Z_n$ be i.i.d.\ random variables with values in $\left[ c_{\min}, c_{\max} \right]$ for some constants $c_{\min} < c_{\max}$.
Set $c = c_{\max} - c_{\min}$. 
Denote $\overline{Z} = \frac{1}{n}\sum_{i=1}^{n} Z_i$ and $\widehat{\mathrm{Var}}_n = \frac{1}{n} \sum_{i=1}^{n} \left( Z_i - \overline{Z} \right)^2$.
Let $\delta > 0$.
Then we have with probability at least  $1-\delta$ that
\[
	\left| \mathbb{E}[Z] - \frac{1}{n}\sum_{i=1}^{n} Z_i \right| 
	\leq 
	\sqrt{\frac{2\widehat{\mathrm{Var}}_n \log\left( 2 / \delta \right) }{n-1}} + \frac{7 c \log \left( 2 / \delta \right) }{3(n-1)}
.\] 
\end{lemma}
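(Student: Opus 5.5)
The plan is to derive the two-sided statement of Lemma~\ref{lem:empirical_bernstein} from two ingredients. The first is the ordinary two-sided Bernstein inequality (Lemma~\ref{lem:bernstein}). The second is a high-probability upper bound on the true standard deviation in terms of the empirical one. The constant $7/3$ should then appear exactly, through the arithmetic $2 + 1/3$ described below. By affine invariance we first reduce to $c_{\min}=0$, $c_{\max}=1$: replacing $Z$ by $(Z-c_{\min})/c$ scales the deviation $|\E[Z]-\overline{Z}|$, the standard deviation, and $\sqrt{\widehat{\mathrm{Var}}_n}$ all by $1/c$, so the claimed inequality is invariant. Throughout, write $L=\log(2/\delta)$ and $\sigma^2=\mathrm{Var}(Z)$.

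\textbf{Step 1 (mean).} Bernstein gives $|\E[Z]-\overline{Z}|\le \sqrt{2\sigma^2L/n}+L/(3n)$ outside an event of probability $2e^{-L}$.

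\textbf{Step 2 (variance).} Let $V_n=\frac{n}{n-1}\widehat{\mathrm{Var}}_n$ be the unbiased U-statistic $\frac{1}{2n(n-1)}\sum_{i\neq j}(Z_i-Z_j)^2$. This is a self-bounding function of the sample. The Maurer--Pontil lower-tail inequality for self-bounding functions then gives
\[
\sigma\le\sqrt{V_n}+\sqrt{2L/(n-1)}
\]
outside an event of probability $e^{-L}$. Note that only this one-sided bound, an upper bound on $\sigma$, is needed, and it is shared by both tails of the mean deviation.

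\textbf{Step 3 (combine).} Substituting Step 2 into Step 1 gives
\[
\sqrt{2\sigma^2L/n}\le\sqrt{2V_nL/n}+2L/\sqrt{n(n-1)}\le\sqrt{2\widehat{\mathrm{Var}}_nL/(n-1)}+2L/(n-1),
\]
where the last inequality uses $\sqrt{2V_nL/n}=\sqrt{2\widehat{\mathrm{Var}}_nL/(n-1)}$ exactly. Adding the $L/(3n)\le L/(3(n-1))$ term from Step 1 gives the second-order term $\tfrac{7}{3}L/(n-1)$, which matches the statement.

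\textbf{Main obstacle: the probability bookkeeping.} Naively, Steps 1 and 2 fail with total probability $3e^{-L}=\tfrac32\delta$, not $\delta$. I would address this in three ways, in order.

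First, split the two tails of the mean deviation. The upper-tail bound on $\E[Z]-\overline{Z}$ needs only the upper-tail Bernstein event and the variance event, costing $2e^{-L}=\delta$. This is exactly Maurer--Pontil's one-sided Theorem 4.

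Second, for the lower tail, apply the same one-sided argument to $1-Z$. This variable has the same variance and the same empirical variance, and the variance event it uses is literally the same event as before. The union of failure events is therefore the two Bernstein tail events together with a single variance event. The variance event is where the difficulty concentrates.

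Third, to close the gap between $\tfrac32\delta$ and $\delta$, I would use the slack already visible in Step 3. The true second-order term is at most $2L/\sqrt{n(n-1)}+L/(3n)$, which is strictly below $\tfrac73 L/(n-1)$. I would try to run the variance step at level $\delta/2$ and the two Bernstein tails at level $\delta/4$ each. The resulting $\log 2$ inflation inside the square roots must then be absorbed into this slack, or into the convention, used in the Maurer--Pontil citation, that $\log(2/\delta)$ refers to the one-sided budget.

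If this absorption does not go through cleanly, the fallback is the reading the paper itself relies on. The cited Theorem 4 applied to $Z$ and to $1-Z$ shares the variance event, and only constant-level adjustments inside $U=\log(1/\delta')$ in Algorithm~\ref{alg:oovi} are affected.
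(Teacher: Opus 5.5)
Your Steps 1--3 are correct, and they are exactly the Maurer--Pontil argument behind the cited Theorem~4: two-sided Bernstein for the mean, the lower-tail bound $\sigma\le\sqrt{V_n}+\sqrt{2L/(n-1)}$ for the standard deviation, and $2+\tfrac13=\tfrac73$. The paper gives no proof beyond the citation. The cited theorem is \emph{one-sided}, controlling only $\E[Z]-\overline{Z}$, so the citation does not cover the two-sided claim either.

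The gap is where you located it, and your third fix cannot close it. In this architecture, the logarithm multiplying the leading term $\sqrt{2V_n/n}$ is the Bernstein-tail logarithm. Matching $\log(2/\delta)$ there forces each tail to have budget at least $\delta/2$. That uses up all of $\delta$ and leaves nothing for the variance event. Under your split ($\delta/2$ for the variance event, $\delta/4$ per tail), the leading term becomes $\sqrt{2V_n\log(4/\delta)/n}$. Its excess over $\sqrt{2V_n\log(2/\delta)/n}$ is of order $\sqrt{V_n/(nL)}$, which is $\Theta(n^{-1/2})$ whenever $V_n$ is of constant order. The only slack, in the second-order term (roughly $\tfrac{7L}{3(n-1)}-\tfrac{2L}{\sqrt{n(n-1)}}-\tfrac{L}{3n}$), is $O(L/n)$. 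So for $n\gg L^3$ the absorption fails. Appealing to a ``convention'' that $\log(2/\delta)$ denotes a one-sided budget does not prove the stated inequality.

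What your argument actually establishes is the following:
\begin{enumerate}
\item the one-sided bound on $\E[Z]-\overline{Z}$ at level $1-\delta$ with $\log(2/\delta)$ (your first way);
\item the two-sided bound with $\log(2/\delta)$ replaced by $\log(3/\delta)$, obtained by putting both Bernstein tails and the shared variance event at $\delta/3$ each.
\end{enumerate}
The two-sided statement with $\log(2/\delta)$ would need a sharper tool than a union of Bernstein and the variance bound, and the proposal supplies none.

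The cleanest repair is a sharpened version of your fallback. The lemma's only use in the paper is the event $\mathcal{E}$ in the proof of Lemma~\ref{lem:tkhat_properties}. There, Part~4 ($\qstar\le\widehat{\mathcal{T}}_k\qstar$) needs only $(P_{s,a}-\widehat{P}^k_{s,a})\vstar\le\cdots$, which is the direction $\E[Z]-\overline{Z}\le\cdots$. Your first way proves exactly this at the full budget $\delta$. So you should state and prove the one-sided version, or the two-sided one with $\log(3/\delta)$, rather than the two-sided $\log(2/\delta)$ form as written.
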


\begin{lemma}[Lemma 22 in \citealt{zhou_sharp_2023}] \label{lem:monotonicity}
For any two nonnegative constants $c_1,c_2$ satisfying $2c_1^2 \leq c_2$,
let $f : \Delta([S]) \times \left[ 0, 2C \right]^S \times \R \times \R \to \R$ be defined by
\[
	f(p, v, n, u) = pv + \max \left\{ c_1 \sqrt{\frac{\V(p,v) u}{n}}, c_2 \frac{Cu}{n} \right\} 
.\] 
Then for all $p\in \Delta([S]), v\in \left[ 0, 2C\right]^S,$ and $n,u>0$, $f$ is non-decreasing in $v$, i.e.
\[
f(p,v,n,u) \geq f(p,v',n,u) \quad \forall v,v'\in \left[ 0, 2C \right]^S \text{ satisfying } v\geq v' 
.\] 
\end{lemma}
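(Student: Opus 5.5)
The plan is to reduce to one dimension along a monotone path and show the derivative is nonnegative almost everywhere. Fix $p,n,u$, and fix $v\ge v'$ in $[0,2C]^S$. Set $d=v-v'\ge 0$ and $g(\lambda)=f(p,v'+\lambda d,n,u)$ for $\lambda\in[0,1]$. It suffices to show that $g$ is absolutely continuous with $g'(\lambda)\ge 0$ almost everywhere, since then $g(1)\ge g(0)$. The degenerate cases $C=0$ or $d=0$ are trivial (if $C=0$ the box is $\{0\}$), so assume $C>0$.

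\emph{Regularity.} Write $f=\max\{g_1,g_2\}$ with
\[
g_1(w)=pw+c_1\sqrt{\V(p,w)u/n},\qquad g_2(w)=pw+c_2Cu/n .
\]
The map $w\mapsto\sqrt{\V(p,w)}$ is the $L^2(p)$-norm of $w-(pw)\mathbf{1}$, hence a seminorm and Lipschitz in $w$. So $g_1$ and $g_2$ are Lipschitz, $g$ is Lipschitz on $[0,1]$, and by Rademacher's theorem $g$ is differentiable almost everywhere and equals the integral of its derivative. Note also that $g_2$ is affine in $\lambda$ with slope $p d\ge 0$.

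\emph{Sign of $g'$.} Consider a point $\lambda$ where $g$ is differentiable, and let $w=v'+\lambda d$. There are three cases.
\begin{enumerate}
\item If $g_2(w)>g_1(w)$, then $g=g_2$ in a neighborhood, so $g'=pd\ge 0$.
\item If $g_1(w)=g_2(w)$, then $g\ge g_2$ everywhere with equality at $\lambda$, and both $g$ and $g_2$ are differentiable there. Hence $g'(\lambda)=g_2'(\lambda)=pd\ge 0$.
\item If $g_1(w)>g_2(w)$, then $c_1\sqrt{\V(p,w)u/n}>c_2Cu/n>0$ (for $u>0$), so $\V(p,w)>0$ and $g_1$ is smooth near $w$. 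Its derivative in direction $d$ is
\[
\sum_s p_s d_s\Bigl(1+c_1\sqrt{u/n}\,\frac{w_s-pw}{\sqrt{\V(p,w)}}\Bigr).
\]
\end{enumerate}

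\emph{Key estimate (the main step).} In case 3 we must show each bracket is nonnegative. Since $w\in[0,2C]^S$, we have $|w_s-pw|\le 2C$. The condition defining the region gives $\sqrt{\V(p,w)}\ge (c_2/c_1)\,C\sqrt{u/n}$. Combining these,
\[
c_1\sqrt{u/n}\,\frac{|w_s-pw|}{\sqrt{\V(p,w)}}\le \frac{c_1\sqrt{u/n}\cdot 2C\cdot c_1}{c_2C\sqrt{u/n}}=\frac{2c_1^2}{c_2}\le 1,
\]
using the hypothesis $2c_1^2\le c_2$. So every bracket is at least $0$, and since $p_s d_s\ge 0$, we get $g'\ge 0$.

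\emph{Conclusion and the delicate points.} Integrating $g'\ge 0$ over $[0,1]$ gives $g(1)\ge g(0)$, i.e.\ $f(p,v,n,u)\ge f(p,v',n,u)$. The key estimate above is the heart of the argument and the only place the condition $2c_1^2\le c_2$ is used. The potentially delicate points are non-smoothness, namely the kink of the max and the singularity of $\sqrt{\V}$ where $\V=0$. Both are handled by the Lipschitz/almost-everywhere argument together with the observation that $\V>0$ wherever the square-root branch strictly dominates.
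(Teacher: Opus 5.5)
Your proof is correct. The paper does not prove this lemma itself; it only cites \citet{zhou_sharp_2023}. Your argument is essentially the standard one for it (as in the MVP analysis of \citet{zhang_is_2021}): where the $c_2$ branch is active, $f$ is $pv$ plus a constant, and where the variance branch is active one checks that $\partial f/\partial v(s) = p(s)\bigl(1 + c_1\sqrt{u/n}\,(v(s)-pv)/\sqrt{\V(p,v)}\bigr)\ge 0$ using $|v(s)-pv|\le 2C$ and $2c_1^2\le c_2$, exactly as in your key estimate. Your reduction to a monotone segment with Lipschitz and almost-everywhere-derivative bookkeeping is just a more careful treatment of the kink and the $\V=0$ singularity than the usual remark that continuity of $f$ makes the derivative check sufficient.
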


\begin{lemma}[Lemma 19 in \citealt{zhou_sharp_2023}] \label{lem:varsq}
Let $X$ be a random variable with $\sup |X| \leq c$ for some constant $c\geq 0$.
Then 
\[
\var\left( X^2 \right) \leq 4c^2 \var(X)
.\] 
\end{lemma}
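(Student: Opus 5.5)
The plan is to use the variational characterization of variance: for any square-integrable random variable $Y$ and any constant $a\in\R$,
\[
\var(Y) = \min_{a'\in\R}\E\bigl[(Y-a')^2\bigr] \leq \E\bigl[(Y-a)^2\bigr],
\]
with the minimum attained at $a'=\E[Y]$. This holds because $\E[(Y-a)^2] = \var(Y) + (\E[Y]-a)^2$. Here $X$ is bounded by $c$, so $X$ and $X^2$ are bounded and all moments below are finite.

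Apply this with $Y = X^2$ and the choice $a = (\E[X])^2$ rather than the true mean $\E[X^2]$. This gives
\[
\var\left(X^2\right) \leq \E\Bigl[\bigl(X^2 - (\E[X])^2\bigr)^2\Bigr] = \E\Bigl[(X-\E[X])^2\,(X+\E[X])^2\Bigr],
\]
where the last step factors the difference of squares.

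Next, bound the second factor pointwise. Since $|X|\leq c$ almost surely, we have $|\E[X]|\leq c$, and therefore $(X+\E[X])^2 \leq (2c)^2 = 4c^2$ almost surely. Pulling this constant out of the expectation gives
\[
\var\left(X^2\right) \leq 4c^2\,\E\bigl[(X-\E[X])^2\bigr] = 4c^2\var(X),
\]
which is the claim.

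There is no real obstacle. The only non-obvious step is choosing to center $X^2$ at $(\E[X])^2$ instead of at $\E[X^2]$. That choice is what produces the difference-of-squares factorization and exposes the factor $(X-\E[X])^2$.
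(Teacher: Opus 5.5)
Your argument is correct and complete: bounding $\var(X^2)$ by $\E\bigl[(X^2-(\E[X])^2)^2\bigr]$, factoring the difference of squares, and using $|X+\E[X]|\le 2c$ gives exactly $\var(X^2)\le 4c^2\var(X)$. The paper does not prove this fact itself; it imports it as Lemma 19 of \citet{zhou_sharp_2023}. Your centering argument is the standard proof of it, so there is no gap.
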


\begin{lemma} \label{lem:ab}
If $x \leq a\sqrt{x} + b$ for $a,b>0$, then  $x \leq 2a^2 + 2b$.
\end{lemma}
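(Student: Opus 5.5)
This is the standard self-bounding inequality, so the plan is to reduce it to a quadratic inequality in $y=\sqrt{x}$. Note first that the hypothesis $x \leq a\sqrt{x}+b$ only makes sense for $x\geq 0$, so we may write $x=y^2$ with $y\geq 0$. The inequality becomes
\[
y^2 - a y - b \leq 0 .
\]
The quadratic $q(y)=y^2-ay-b$ has roots $\frac{a\pm\sqrt{a^2+4b}}{2}$. Since $q$ is an upward parabola and $y \geq 0$, the inequality forces $y$ to lie at or below the larger root, $y \leq \frac{a+\sqrt{a^2+4b}}{2}$.

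Next we square this bound and apply $(u+v)^2\leq 2u^2+2v^2$:
\[
x = y^2 \leq \frac{\bigl(a+\sqrt{a^2+4b}\bigr)^2}{4} \leq \frac{2a^2 + 2(a^2+4b)}{4} = a^2 + 2b ,
\]
which is at most $2a^2+2b$.

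A route that avoids the quadratic formula is to use AM-GM, $a\sqrt{x}\leq \frac{a^2}{2}+\frac{x}{2}$. Substituting gives $x\leq \frac{a^2}{2}+\frac{x}{2}+b$, and rearranging yields $x\leq a^2+2b\leq 2a^2+2b$. I would present this version, since it is a single line.

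There is no real obstacle here. The only point needing care is that $x\geq 0$, which is implicit because $\sqrt{x}$ must be defined. The positivity of $a$ and $b$ is not even needed beyond $a\geq 0$.
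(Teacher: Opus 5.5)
The paper states this lemma without proof, treating it as a standard fact, so there is no argument of theirs to compare against. Your proof is correct and complete. The AM--GM step $a\sqrt{x}\le \frac{a^2}{2}+\frac{x}{2}$ (equivalently $(\sqrt{x}-a)^2\ge 0$) rearranges directly to $x\le a^2+2b$, and your quadratic-root argument reaches the same bound. This is slightly sharper than the stated $2a^2+2b$, and the extra slack is immaterial since the paper only ever invokes the lemma inside $O(\cdot)$ bounds.

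Your remark about hypotheses is also right: only $x\ge 0$, implicit in $\sqrt{x}$, is genuinely needed. In fact, the AM--GM route does not even require $a\ge 0$.
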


\begin{lemma}[Bernoulli's Inequality] \label{lem:bernoulli}
Let $r\geq 1$ and $x\geq -1$.
Then $(1+x)^r \geq 1+rx$.
\end{lemma}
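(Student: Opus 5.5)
We prove Bernoulli's inequality, $(1+x)^r\ge 1+rx$ for $r\ge 1$ and $x\ge -1$, by a one-variable convexity argument rather than induction, since $r$ is allowed to be any real number at least one. Define $f(x) = (1+x)^r - 1 - rx$ on $[-1,\infty)$; the goal is to show $f\ge 0$ there. Since $1+x\ge 0$, the power $(1+x)^r$ is well defined for real $r\ge1$ (with $0^r=0$), and $f$ is continuous on $[-1,\infty)$ and differentiable on $(-1,\infty)$.

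The first key step is to compute $f'(x) = r(1+x)^{r-1} - r = r\bigl((1+x)^{r-1}-1\bigr)$ for $x>-1$. Because $r-1\ge 0$, the map $y\mapsto y^{r-1}$ is non-decreasing on $[0,\infty)$. When $r=1$ it is constant equal to $1$, so $f\equiv 0$ and we are done. When $r>1$ it is non-decreasing, and hence $(1+x)^{r-1}\le 1$ for $-1<x\le 0$ and $(1+x)^{r-1}\ge 1$ for $x\ge 0$. Therefore $f'\le 0$ on $(-1,0]$ and $f'\ge 0$ on $[0,\infty)$.

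The second step is to conclude by the mean value theorem. The function $f$ is non-increasing on $[-1,0]$ (using continuity at $-1$) and non-decreasing on $[0,\infty)$, so its minimum over $[-1,\infty)$ is attained at $x=0$, where $f(0)=1-1-0=0$. Hence $f(x)\ge 0$ for all $x\ge -1$, which is exactly $(1+x)^r\ge 1+rx$.

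As an alternative route, $y\mapsto y^r$ is convex on $[0,\infty)$ for $r\ge1$. It therefore lies above its tangent line at $y=1$, namely $y^r\ge 1+r(y-1)$, and substituting $y=1+x$ gives the claim.

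The only mild subtlety is the endpoint $x=-1$, where $f'$ may fail to exist when $1<r<2$. This is handled by continuity of $f$ at $-1$, or directly by checking $f(-1)=0-1+r=r-1\ge 0$. There is no real obstacle here.
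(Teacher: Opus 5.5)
Your proof is correct. The sign analysis of $f'(x)=r\bigl((1+x)^{r-1}-1\bigr)$, together with continuity of $f$ at $x=-1$, shows that $f$ attains its minimum $f(0)=0$ on $[-1,\infty)$. The tangent-line argument via convexity of $y\mapsto y^r$ on $[0,\infty)$ is an equally valid one-line alternative.

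The paper gives no proof of its own; it lists the inequality among standard technical lemmas, so there is no argument to compare against. Treating real $r\ge 1$ rather than inducting on integers is the right choice, because the paper applies the lemma with non-integer exponents. Examples are $r=D/4$ in the proof of Theorem~\ref{thm:burn_in_lb} and $r=B/10$ in the proof of Theorem~\ref{thm:lower_bound_prior}.

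One small inaccuracy: at $x=-1$ the one-sided derivative of $f$ does exist for every $r>1$. It equals $-r$, since $h^{r}/h=h^{r-1}\to 0$ as $h\to 0^+$. What blows up when $1<r<2$ is $f''$, not $f'$. This does not affect your argument, which only uses continuity at the endpoint.
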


\begin{lemma}[Rearrangement Inequality] \label{lem:rearrangement}
Let $x_1\leq\dots\leq x_n$ and $y_1\leq\dots\leq y_n$ be real numbers.
For every permutation $\sigma$ of $1,\dots,n$, we have
\[
x_1y_n + \dots + x_ny_1
\leq
x_1y_{\sigma(1)} + \dots + x_n y_{\sigma(n)}
\leq
x_1y_1 + \dots + x_n y_n.
\]
\end{lemma}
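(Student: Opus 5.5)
We prove the upper bound with an exchange argument and then deduce the lower bound from it. The key observation is the two-term case. For $i<j$, so that $x_i \leq x_j$, and any reals $u \leq v$, we have
\[
(x_i v + x_j u) - (x_i u + x_j v) = -(x_j - x_i)(v-u) \leq 0 .
\]
In words: pairing the larger $x$ with the larger $y$ never decreases the sum.

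For the upper bound, fix a permutation $\sigma$ and call a pair $i<j$ an inversion if $y_{\sigma(i)} > y_{\sigma(j)}$. If $\sigma$ has an inversion, compose it with the transposition that swaps the values $\sigma(i)$ and $\sigma(j)$. By the two-term observation, applied with $u = y_{\sigma(j)}$ and $v = y_{\sigma(i)}$, this does not decrease $\sum_k x_k y_{\sigma(k)}$. To show the process stops, use the fact that the $y$'s are sorted. An inversion $y_{\sigma(i)} > y_{\sigma(j)}$ forces $\sigma(i) > \sigma(j)$, so $(i,j)$ is also a positional inversion of $\sigma$. Swapping the values at positions $i<j$ with $\sigma(i)>\sigma(j)$ strictly decreases the number of positional inversions of $\sigma$, which is the standard fact for transpositions.

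Since the number of positional inversions is a nonnegative integer, after finitely many swaps we reach a permutation $\tau$ with no inversion in the $y$-sense. Then $k \mapsto y_{\tau(k)}$ is nondecreasing, so it equals the sorted list $y_1,\dots,y_n$ termwise; ties among the $y$'s do no harm because only the values enter the sum. Hence $\sum_k x_k y_{\sigma(k)} \leq \sum_k x_k y_{\tau(k)} = \sum_k x_k y_k$.

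For the lower bound, set $y'_k = -y_{n+1-k}$. These are nondecreasing, so the upper bound applies to $(x, y')$ with the permutation $\sigma'(k) = n+1-\sigma(k)$. This gives $-\sum_k x_k y_{\sigma(k)} \leq -\sum_k x_k y_{n+1-k}$, which is exactly the left inequality.

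The only step needing care is the termination measure. The $y$-based inversion count need not decrease under a swap, so we count positional inversions of $\sigma$, which do. Alternatively one can induct on $n$: first move the value $n$ into position $n$, using the two-term inequality, and then apply the inductive hypothesis to the first $n-1$ positions.
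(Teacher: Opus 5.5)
Your proof is correct. The paper gives no proof of this lemma: it is stated as a classical fact among the technical lemmas in the appendix, next to Bernstein's and Bernoulli's inequalities. It is used only through its left inequality, in the proof of Lemma~\ref{lem:sum_lower_bound}, where the increasing weights $t$ are paired with the nonincreasing arrangement $w_{\lceil t/M\rceil}$.

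So there is no argument in the paper to compare against. What you supply is the standard self-contained exchange proof, and it is complete. Your reduction of the lower bound to the upper bound via $y'_k=-y_{n+1-k}$ and $\sigma'(k)=n+1-\sigma(k)$ is valid, since $y'_{\sigma'(k)}=-y_{\sigma(k)}$.

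One small correction to your closing remark: the number of strict $y$-inversions does in fact strictly decrease under such a swap. Write $a=y_{\sigma(i)}>b=y_{\sigma(j)}$. Then:
\begin{itemize}
\item the pair $(i,j)$ loses its inversion;
\item pairs involving positions outside $[i,j]$ are merely permuted among themselves;
\item for a position $k$ strictly between $i$ and $j$ carrying value $c$, the contribution changes from $\ind(a>c)+\ind(c>b)$ to $\ind(b>c)+\ind(c>a)$, which is termwise no larger.
\end{itemize}
Either termination measure would therefore work. Since you used positional inversions, your argument is unaffected.
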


\section{Proof of Theorem~\ref{thm:var_dependent_gamreg}} \label{sec:var_dependent_gamreg_proof}

We provide an overview of notation used in the proof.
We let $\ind$ denote the indicator function, meaning that for an event $\mathcal{E}$, we have $\ind(\mathcal{E})=1$ if $\mathcal{E}$ holds and $\ind(\mathcal{E})=0$ otherwise.
We denote by $m$ the number of episodes.
For each $k\in[m]$, we write $t_k$ to denote the time at the start of the $k$th episode, and we set $t_{m+1} = T+1$.
Analogously, for each $t\in[T]$, we write $k_t$ to denote the current episode at time $t$.  
$\widehat{Q}_k$ is the Q-estimate used during the $k$th episode, and $\widehat{V}_k = \clipH(MQ)$.
We will frequently use the fact that $\spannorm{\widehat{V}_k} \leq H$ for all $k\in[m]$.
$N_k(s,a,s')$ and $N_k(s,a)$ are the counts of $(s,a,s')$ and $(s,a)$, respectively, at the start of the $k$th episode, and we let $N_k$ denote the length of the $k$th episode.
We write $n_k(s,a)$ as shorthand for $\max \left\{ N_k(s,a), 1 \right\}$.
For any $(s,a)\in\St\times\A, V\in\R^\St$, and $k\in[m]$, we let $b_k(s,a,V)$ denote the bonus used for the $k$th episode, namely
\[
	b_k(s,a,V) \coloneq \max \left\{ 4\sqrt{\frac{\V\left( \widehat{P}^k_{s,a}, V \right)U}{ n_k(s,a) }}, 32 \frac{HU}{n_k(s,a)} \right\}.
\]
For any $(s,a)\in\St\times\A$ and $V\in\R^\St$ we define
\[
    \V_{s,a}(V) \coloneq \V(P_{s,a},V).
\]
Further recall that $\delta' = \frac{\delta}{9S^2AT}$ and $U = \log\left( \frac{1}{\delta'} \right)$.
It is easy to see by the doubling trick of Algorithm~\ref{alg:oovi} that the number of episodes is bounded as $m \leq SAU$. 
We also have $\varepsilon_k = \frac{1}{t_k(1-\gamma)}$.

\begin{proof}
Let $T\geq 1$ and $\gamma, \delta\in (0,1)$ be arbitrary.
For the optimal discounted value function we will drop the $\gamma$ and simply write $\vstar$.
By assumption $H \geq \spannorm{\vstar}$.

As is standard in the analysis of optimistic algorithms for online RL, a key step in our analysis is to establish an optimism property, which enables the rest of our regret decomposition.
\paragraph{Step 1: Optimism}
We state the fact that with high probability, our $Q$-estimate $\widehat{Q}_k$ and value estimate $\widehat{V}_k$ are indeed optimistic across all episodes. 
We defer the proof to Appendix~\ref{sec:optimism_proof}.
\begin{lemma}[Optimism] \label{lem:optimism}
With probability $1-SAT\delta'$, both of the following hold:
\begin{enumerate}
	\item $\widehat{Q}_k(s,a) \geq \qstar(s,a) - \varepsilon_k$ and $\widehat{V}_k(s) \geq \vstar(s) - \varepsilon_k$ for all $(s,a,k) \in\St\times\A\times[m]$.
	\item For any $k\in[m]$ and $t\in\left\{ t_k,\dots,t_{k+1}-1 \right\}$, $\widehat{V}_k(s_t) \leq r(s_t,a_t) + \gamma \widehat{P}^k_{s_t,a_t} \widehat{V}_k + b_k\left( s_t,a_t, \widehat{V}_k \right)$. 
\end{enumerate}

\end{lemma}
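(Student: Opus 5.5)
Part 2 is immediate from the construction, so I will settle it first and then spend most of the effort on part 1. For part 2, $\widehat{Q}_k = \widehat{\mathcal{T}}_k^{(m)}(\zero)$, and the actions are chosen greedily. Hence for $t$ in episode $k$ we have $\widehat{V}_k(s_t) \le (M\widehat{Q}_k)(s_t) = \widehat{Q}_k(s_t,a_t)$, where the inequality holds because clipping only lowers values. Writing $\widehat{Q}_k = \widehat{\mathcal{T}}_k(Q')$ with $Q' = \widehat{\mathcal{T}}_k^{(m-1)}(\zero)$, we get $\widehat{Q}_k(s_t,a_t) = r + \gamma \widehat{P}^k_{s_t,a_t}\clipH(MQ') + \gamma b_k(s_t,a_t,\clipH(MQ'))$.

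The issue is that this is stated with $\clipH(MQ')$ rather than $\widehat{V}_k$. I would handle it in one of two ways. The first option is to reinterpret $\widehat{V}_k$ as $\clipH(M\widehat{\mathcal{T}}_k^{(m-1)}(\zero))$, which the text's slightly ambiguous definition $\widehat{V}_k=\clipH(MQ)$ permits. The second option is to use monotonicity and nondecreasing iterates. Starting from $\zero$, the iterates $\widehat{\mathcal{T}}_k^{(j)}(\zero)$ are nondecreasing in $j$ once $\widehat{\mathcal{T}}_k$ is shown monotone (next paragraph), since $\widehat{\mathcal{T}}_k(\zero)\ge \zero$. Then $\clipH(MQ') \le \widehat{V}_k$, and monotonicity of $p v + \gamma b(p,v)$ gives the claim with $\widehat{V}_k$. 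Either way, part 2 holds deterministically, and dropping the factor $\gamma\le 1$ on the bonus only loosens the bound.

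Monotonicity of $\widehat{\mathcal{T}}_k$ is the key structural fact. The maps $M$ and $\clipH$ are monotone. The map $V\mapsto \widehat{P}^k_{s,a}V + b_k(s,a,V)$ is monotone by Lemma~\ref{lem:monotonicity}, applied with $c_1=4$, $c_2=32$ (so $2c_1^2=32\le c_2$) and $C=H$. This needs the vectors to lie in a box of width $2H$ after shifting. Since span is at most $H$ after clipping and the bonus is shift-invariant, I translate by the minimum and, for comparing $v\ge v'$, reduce to the case where both spans are $\le H$ and both share a common box of width $2H$. Getting this reduction clean is a small technical point to check.

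For part 1, the plan is to establish, on a good event, that $\widehat{\mathcal{T}}_k$ is optimistic at $\qstar$, i.e.\ $\widehat{\mathcal{T}}_k(\qstar) \ge \qstar$. Note that $\clipH(M\qstar)=\clipH(\vstar)=\vstar$ because $H\ge\tspannorm{\vstar}$. Thus I need
\[
\widehat{P}^k_{s,a}\vstar + b_k(s,a,\vstar) \ge P_{s,a}\vstar.
\]
This follows from the empirical Bernstein inequality (Lemma~\ref{lem:empirical_bernstein}) applied to the fixed vector $\vstar$, which does not depend on the data, so no union over value functions is needed. I would shift $\vstar$ to range $[0,H]$ and take a union bound over $(s,a)$ and every possible count $n\le T$, which gives probability $1-SAT\delta'$. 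The constants $4$ and $32$ absorb the $\sqrt{2}$ and $7/3$ factors together with the $n$ versus $n-1$ discrepancy. When $N_k(s,a)=0$, the bonus $32HU\ge H$ covers everything.

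On this event, monotonicity yields $\widehat{\mathcal{T}}_k^{(j)}(\qstar)\ge\qstar$ for all $j$, and $\gamma$-contraction in sup norm gives
\[
\bigl\|\widehat{\mathcal{T}}_k^{(m)}(\zero)-\widehat{\mathcal{T}}_k^{(m)}(\qstar)\bigr\|_\infty\le\gamma^m\|\qstar\|_\infty .
\]
Contraction holds because the bonus is Lipschitz: $\sqrt{\V}$ is a seminorm-like object and is $1$-Lipschitz in sup norm up to the $4\sqrt{U/n}$ factor. This is the main obstacle. Pure $\gamma$-contraction may fail because of that factor, so I would instead rely on monotonicity plus the explicit choice of $m$. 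A cleaner route is to lower bound directly: $\widehat{\mathcal{T}}_k^{(m)}(\zero)\ge \widehat{\mathcal{T}}_k^{(m)}(\qstar-\|\qstar\|_\infty\mathbf 1)$. For constant shifts, clipping commutes and the bonus is invariant, so this equals $\widehat{\mathcal{T}}_k^{(m)}(\qstar)-\gamma^m\|\qstar\|_\infty \ge \qstar - \gamma^m/(1-\gamma)$. The chosen $m$ makes $\gamma^m/(1-\gamma)\le\varepsilon_k$. Finally, $\widehat{V}_k=\clipH(M\widehat{Q}_k)\ge \clipH(\vstar-\varepsilon_k)=\vstar-\varepsilon_k$, using monotonicity of clipping and $\tspannorm{\vstar}\le H$.
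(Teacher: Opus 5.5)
Your proposal is correct. Part~2 matches the paper's proof, and so does the optimism step $\widehat{\mathcal{T}}_k\qstar\ge\qstar$ (empirical Bernstein on the fixed vector $\vstar$, a union bound over $(s,a,n)$, and monotonicity from Lemma~\ref{lem:monotonicity} with $c_1=4$, $c_2=32$, $C=H$). The paper makes your ``common box'' reduction precise as follows. It shifts both vectors by $\alpha=\min_s\clipH(MQ')(s)$, and shifts the larger one additionally by $\beta=\min_s(\clipH(MQ)-\clipH(MQ'))(s)$. This keeps the larger vector above the smaller one and places it in $[0,2H]^{\St}$.

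The genuine difference is how you pass from $\widehat{\mathcal{T}}_k\qstar\ge\qstar$ to $\widehat Q_k\ge\qstar-\varepsilon_k\mathbf 1$. The paper proceeds in four steps:
\begin{enumerate}
\item it proves $\gamma$-contraction and takes the Banach fixed point $\qhatstar_k$;
\item it shows $\qhatstar_k\ge\qstar$ by iterating from $\qstar$;
\item it bounds $\|\qhatstar_k\|_\infty\le(1+32\gamma HU)/(1-\gamma)$;
\item it uses $\|\widehat Q_k-\qhatstar_k\|_\infty\le\gamma^{\mathrm{iters}_k}\|\qhatstar_k\|_\infty$.
\end{enumerate}
This is why the iteration count contains the factor $1+32HU$. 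Your sandwich $\zero\ge\qstar-\|\qstar\|_\infty\mathbf 1$, combined with monotonicity and shift-equivariance, gives
\[
\widehat Q_k\ge\widehat{\mathcal{T}}_k^{(m)}\qstar-\gamma^m\|\qstar\|_\infty\mathbf 1\ge\qstar-\tfrac{\gamma^m}{1-\gamma}\mathbf 1
\]
directly. It needs no fixed point and no a priori bound on the iterates, so the $1+32HU$ factor is not needed for this lemma. What the paper's route additionally shows is that $\widehat Q_k$ is $\varepsilon_k$-close to the fixed point of $\widehat{\mathcal{T}}_k$. That reflects the algorithm's ``fully optimizing'' design and its computational claims, but it is not used anywhere in the lemma.

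Two small corrections. First, contraction does hold. The naive Lipschitz bound on the bonus fails, as you say, because of the $4\sqrt{U/n}$ factor. However, $MQ\le MQ'+\|Q-Q'\|_\infty\mathbf 1$ together with the same monotonicity and constant-shift properties you already use gives contraction immediately; this is exactly the paper's argument. Second, drop your first option for Part~2. $\widehat V_k$ means $\clipH(M\widehat Q_k)$; the line writing it as $\clipH(MQ)$ is a typo. Redefining it would be inconsistent with Part~1 and the regret decomposition, and it would not avoid the nondecreasing-iterates argument anyway. Your second option is precisely the paper's argument via $\widehat Q_k\le\widehat{\mathcal{T}}_k\widehat Q_k$.
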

Equipped with Lemma~\ref{lem:optimism}, we turn to decomposing the regret.

\paragraph{Step 2: Regret Decomposition}
Under the successful events of Lemma~\ref{lem:optimism}, observe that we have the following bound for any $t\in\left\{ t_k,\dots,t_{k+1}-1 \right\}$:
\begin{equation} \label{eq:single_reg}
(1-\gamma)\vstar(s_t) - r(s_t,a_t)
\leq
\gamma\left( \widehat{P}^k_{s_t,a_t} \widehat{V}_k - \widehat{V}_k(s_t) \right) + \gamma b_k\left( s_t, a_t, \widehat{V}_k \right) + (1-\gamma)\varepsilon_k.
\end{equation}
Now, it is not immediately obvious how we should bound the first term of the RHS, so we will relate it to something we do know how to bound.
Observe that this term vaguely looks like $P_{s_t,a_t} \widehat{V}_k - \widehat{V}_k(s_{t+1})$.
Defining $X_t$ to be this expression, one can verify that $\{X_t\}_{0 \leq t \leq T}$ is a martingale difference sequence with respect to the filtration $\mathcal{F}_t = \sigma(s_1,a_1,\dots,s_{t+1},a_{t+1})$.
Consequently, we can bound $\sum_{t=1}^{T} X_t$ with the following martingale concentration result.

\begin{lemma}[Martingale Concentration; Adapted from Lemma 13 in \citealt{zhang_is_2021}] \label{lem:mart_conc}
	Let $\{M_n\}_{n\geq 0}$ be a martingale with respect to some filtration $\{\mathcal{F}_n\}_{n\geq 0}$ such that $M_0 = 0$ and $|M_n - M_{n-1}| \leq c$ almost surely for some  $c \geq 0$ and all $n \geq 1$.
	Let  $\var_n = \sum_{k=1}^{n} \mathbb{E}\left[ (M_k - M_{k-1})^2 \mid \mathcal{F}_{k-1} \right]$.
	Then for any positive integer $n$ and any $\delta \in (0,1)$, we have with probability at least $1 - 3n \delta$ that
	 \[
		 |M_n| < 2 \sqrt{2} \sqrt{\var_n\log\left( \frac{1}{\delta} \right)} + 4c\log\left( \frac{1}{\delta} \right) 
	.\] 
\end{lemma}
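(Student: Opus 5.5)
The statement to prove is Lemma~\ref{lem:mart_conc}: a Freedman-type inequality in which the deviation is controlled by the \emph{random} predictable variance $\var_n$, paid for by the factor $3n$ in the failure probability. The plan is to prove a fixed-variance Freedman bound and then peel over a geometric grid of possible values of $\var_n$, taking a union bound over the grid.

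\textbf{Step 1: fixed-level bound.} Write $X_k = M_k - M_{k-1}$, so $|X_k|\le c$, and let $W_n=\var_n$. For $\lambda\in(0,1/c]$ we have $e^{\lambda x}\le 1+\lambda x+\lambda^2x^2$ whenever $\lambda|x|\le 1$. Taking conditional expectations and using $1+y\le e^y$ gives
\[
\E\bigl[e^{\lambda X_k}\mid\mathcal{F}_{k-1}\bigr]\le \exp\bigl(\lambda^2\E[X_k^2\mid\mathcal{F}_{k-1}]\bigr).
\]
Hence $\exp(\lambda M_j-\lambda^2 W_j)$ is a nonnegative supermartingale starting at $1$, and Markov's inequality yields
\[
\P\bigl(M_n\ge \lambda W_n + \lambda^{-1}\log(1/\delta)\bigr)\le\delta .
\]

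\textbf{Step 2: peeling over the variance.} Since $0\le W_n\le nc^2$, cover this range by the levels $v_i = 2^i c^2\log(1/\delta)$ for $i=0,1,\dots,\lceil\log_2 n\rceil$, together with the bottom bucket $W_n\le v_0$. This uses at most $n$ levels for $n\ge 2$; the case $n=1$ is trivial because $|M_1|\le c < 4c\log(1/\delta)$ when $\delta\le e^{-1/4}$, and larger $\delta$ are handled by the constants as well.
\begin{itemize}
\item For level $i$, choose $\lambda_i=\min\{1/c,\sqrt{\log(1/\delta)/v_i}\}$ and apply Step~1.
\item On the event $W_n\in(v_{i-1},v_i]$ we have $v_i\le 2W_n$. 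Then $\lambda_i W_n+\lambda_i^{-1}\log(1/\delta)$ is at most $2\sqrt{v_i\log(1/\delta)}+c\log(1/\delta)\le 2\sqrt2\sqrt{W_n\log(1/\delta)}+c\log(1/\delta)$.
\item On the bottom bucket, take $\lambda=1/c$. This gives a bound of $W_n/c + c\log(1/\delta)\le 2c\log(1/\delta)$.
\end{itemize}
A union bound over the at most $n$ levels handles the upper tail. Applying the same argument to $-M_n$ handles the lower tail, for a total failure probability of at most $2n\delta\le 3n\delta$. The slack in $3n$ versus $2n$ and in $4c$ versus $2c$ absorbs rounding at the endpoints of the grid.

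\textbf{Main obstacle.} The only delicate point is the bookkeeping: making the grid cover $[0,nc^2]$ with at most $n$ (or $\tfrac32 n$) levels, and verifying that the constants $2\sqrt2$ and $4c$ hold uniformly. This includes the regime where $\lambda_i$ is capped at $1/c$, which occurs exactly when $v_i\le c^2\log(1/\delta)$ and so falls in the bottom bucket, where the linear term dominates. The statement is taken from Lemma~13 of \citet{zhang_is_2021}, so the argument can follow theirs, and I expect it to go through once the constants are checked.
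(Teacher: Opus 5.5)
Your proposal is correct and is essentially the standard argument behind the cited Lemma~13 of \citet{zhang_is_2021}; the paper gives no proof of its own, only this citation. You combine the bound $M_n<\lambda\var_n+\lambda^{-1}\log(1/\delta)$, valid with probability $1-\delta$ for each fixed $\lambda\in(0,1/c]$ via the exponential supermartingale, with peeling over the grid $v_i=2^ic^2\log(1/\delta)$ and a union bound, for a failure probability of at most $2(1+\lceil\log_2 n\rceil)\delta\le 2n\delta$ for every $n\ge 1$, so your separate $n=1$ case is unnecessary. The only points to make explicit are these: the grid reaches $nc^2$ because $\log(1/\delta)>1$ whenever the claim is non-vacuous ($3n\delta<1$); the cap in $\lambda_i$ never binds since $v_i\ge v_0$; and you need $c>0$ to take $\lambda=1/c$. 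For $c=0$ the strict inequality reads $0<0$, which is a defect of the statement as written rather than of your argument.
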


We now have motivation for transforming $\widehat{P}^k_{s_t,a_t} \widehat{V}_k - \widehat{V}_k(s_t)$ into $X_t$.
We accomplish this feat by introducing an additional term:
\[
	\widehat{P}^k_{s_t,a_t} \widehat{V}_k - \widehat{V}_k(s_t)
	=
	\left(\widehat{P}^k_{s_t,a_t} \widehat{V}_k - \widehat{V}_k(s_{t+1})\right) + \left( \widehat{V}_k(s_{t+1}) - \widehat{V}_k(s_t) \right).
\]
Substituting back into \eqref{eq:single_reg} and summing over $t$ gives us the following decomposition:
\begin{align*}
	&\sum_{t=1}^{T} \left( (1-\gamma) \vstar(s_t) - r(s_t,a_t) \right) \\
	&= \sum_{k=1}^{m} \sum_{t=t_k}^{t_{k+1}-1} \left( (1-\gamma) \vstar(s_t) - r(s_t,a_t) \right) \\
	&\leq \sum_{k=1}^{m} \sum_{t=t_k}^{t_{k+1}-1} \left( \widehat{P}^k_{s_t,a_t} \widehat{V}_k - \widehat{V}_k(s_t) + b_k\left( s_t,a_t,\widehat{V}_k \right) + (1-\gamma)\varepsilon_k \right) \\
	&= \underbrace{\sum_{k=1}^{m} \sum_{t=t_k}^{t_{k+1}-1} \left( \left( \widehat{P}^k_{s_t,a_t} - P_{s_t,a_t} \right) \widehat{V}_k + b_k\left( s_t,a_t,\widehat{V}_k \right) + (1-\gamma) \varepsilon_k \right)}_{\eqcolon \tmodel} \\
	&\qquad\qquad + \underbrace{\sum_{k=1}^{m} \sum_{t=t_k}^{t_{k+1}-1} \left( P_{s_t,a_t}\widehat{V}_k - \widehat{V}_k(s_{t+1}) \right)}_{\eqcolon \tmart} \\
	&\qquad\qquad + \underbrace{\sum_{k=1}^{m} \sum_{t=t_k}^{t_{k+1}-1} \left( \widehat{V}_k(s_{t+1}) - \widehat{V}_k(s_t) \right)}_{\eqcolon \tind}.
\end{align*}

We call the first term $\tmodel$ because it is a function of the error of our model estimate.
Note that we include the bonus term in $\tmodel$ mainly due to technical reasons, but intuitively our bound for the model error term will involve a Bernstein inequality and this will absorb the Bernstein-style bonus.
The second term is $\tmart$ because we will bound it using martingale concentration.
Lastly, we call the third term $\tind$ because it arises due to our need to shift indices.

\paragraph{Step 3: Bounding $\tmodel$ and $\tmart$ by cumulative variance terms}
Our next step is to bound each term in the decomposition.
Before doing so, we introduce some cumulative variance terms that arise in the analysis.
We define
\begin{equation*}
\varstar := \sum_{t=1}^{T} \varstat{\vstar},
\qquad \qquad
\vardiff := \sum_{t=1}^{T} \varstat{\widehat{V}_{k_t} - \vstar}.
\end{equation*}

We start with $\tmodel = \sum_{k=1}^{m} \sum_{t=t_k}^{t_{k+1}-1} \left(\left(\widehat{P}^k_{s_t,a_t} - P_{s_t,a_t}\right) \widehat{V}_k + b_k\left(s_t,a_t, \widehat{V}_k \right) + (1-\gamma)\varepsilon_k\right)$.
We would like to bound the term involving model error via a Bernstein-like concentration inequality.
We cannot immediately do so, however, because $\widehat{P}^k_{s_t,a_t}$ and $\widehat{V}_k$ are not statistically independent.
To overcome this hurdle, we perform the following decomposition:
\[
\left(\widehat{P}^k_{s_t,a_t} - P_{s_t,a_t}\right) \widehat{V}_k
=
\left(\widehat{P}^k_{s_t,a_t} - P_{s_t,a_t}\right)\vstar + \left(\widehat{P}^k_{s_t,a_t} - P_{s_t,a_t}\right)\left( \widehat{V}_k - \vstar \right)  
.\] 
Since $\vstar$ is fixed, we can apply Bernstein to the first term.
Bounding the second term is also doable because $\left\|\widehat{V}_k - \vstar\right\|_\infty$ decreases as $k$ increases.
The bonus term is designed so that it is subsumed by the model error term.
We state the bound for $\tmodel$ in the following lemma, which we formally prove in Appendix~\ref{sec:t_model_proof}.
\begin{lemma} \label{lem:t_model}
	With probability at least $1-2S^2AT\delta'$, we have
	\[
	\tmodel \leq \bigo\left( \sqrt{SAU^2 \varstar} + \sqrt{\Gamma SAU^2\vardiff} + \Gamma HSAU^2 \right) 
	.\] 
\end{lemma}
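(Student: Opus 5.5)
I would split $\tmodel$ into four sums and bound each with concentration plus Cauchy--Schwarz over visit counts:
\[
\tmodel = \underbrace{\sum_t (\widehat{P}^{k_t}_{s_t,a_t}-P_{s_t,a_t})\vstar}_{(i)} + \underbrace{\sum_t (\widehat{P}^{k_t}_{s_t,a_t}-P_{s_t,a_t})(\widehat{V}_{k_t}-\vstar)}_{(ii)} + \underbrace{\sum_t b_{k_t}(s_t,a_t,\widehat{V}_{k_t})}_{(iii)} + \underbrace{\sum_t (1-\gamma)\varepsilon_{k_t}}_{(iv)} .
\]

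\textbf{Term (iv).} Since $\varepsilon_k = 1/(t_k(1-\gamma))$, we get $(iv)=\sum_k N_k/t_k$. The doubling rule gives $t_{k+1}-t_k \le t_k$ roughly, so this sum is $O(m) = O(SAU)$.

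\textbf{Term (i).} Fix $(s,a)$ and a count $n$. Lemma~\ref{lem:bernstein} with the fixed vector $\vstar$ and range $\tspannorm{\vstar}\le H$ gives
\[
|(\widehat{P}_{s,a}-P_{s,a})\vstar| \le \sqrt{2\V_{s,a}(\vstar)U/n}+HU/n .
\]
A union bound over $(s,a,n)$ costs $SAT\delta'$. Summing over $t$ and applying Cauchy--Schwarz gives
\[
\sqrt{\textstyle\sum_t \V_{s_t,a_t}(\vstar)}\cdot\sqrt{\textstyle\sum_t U/n_{k_t}(s_t,a_t)} .
\]
By doubling, $n_{k_t}(s_t,a_t)\ge$ half the current count, so $\sum_t 1/n_{k_t}(s_t,a_t) = O(SA\log T)$. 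This yields $O(\sqrt{SAU^2\varstar}+HSAU^2)$. Visits with $N_k=0$ contribute at most $H$ each, and there are at most $SA$ such episodes.

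\textbf{Term (ii).} $\widehat{V}_k$ depends on the data, so I would bound it coordinatewise. Apply Bernstein to each indicator $\ind(s'=\cdot)$, with a union bound over $(s,a,s',n)$ costing $S^2AT\delta'$:
\[
|\widehat{P}(s')-P(s')|\le \sqrt{2P(s')U/n}+U/n .
\]
This gives
\[
|(\widehat{P}-P)W| \le \sum_{s'\in\mathrm{supp}} \sqrt{P(s')U/n}\,|W(s')-P W| + \Gamma H U/n ,
\]
after centering $W=\widehat{V}_k-\vstar$ (allowed since $(\widehat{P}-P)\mathbf 1=0$) and using $\tspannorm{W}\le 2H$. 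Cauchy--Schwarz over the at most $\Gamma$ support points bounds the sum by $\sqrt{\Gamma\,\V_{s,a}(W)U/n}$. Summing over $t$ with Cauchy--Schwarz again gives $O(\sqrt{\Gamma SAU^2\vardiff}+\Gamma HSAU^2)$. This is where the $\Gamma$ enters.

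\textbf{Term (iii).} This is the delicate step, because the bonus uses the empirical variance $\V(\widehat{P}^k_{s,a},\widehat{V}_k)$ rather than the true $\V_{s,a}(\vstar)$. My plan is to convert via:
\begin{itemize}
\item $\V(\widehat{P},V)\le 2\V(\widehat{P},\vstar)+2\V(\widehat{P},V-\vstar)$;
\item relating each empirical variance to its true counterpart. For $\vstar$, use Lemma~\ref{lem:empirical_bernstein}, or Bernstein applied to $(\vstar)^2$ together with Lemma~\ref{lem:varsq}. For the data-dependent difference $W$, reuse the coordinatewise bound: $\widehat{P}W^2 - PW^2$ is bounded with the same per-coordinate error as in (ii), and absorbing via $\sqrt{xy}\le x+y$ gives $\V(\widehat{P},W)\le O(\V_{s,a}(W)) + O(\Gamma H^2U/n)$.
\end{itemize}
Then $\sqrt{\V U/n}$ splits into $\sqrt{\V_{s,a}(\vstar)U/n}+\sqrt{\V_{s,a}(W)U/n}+\sqrt{\Gamma}HU/n$. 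The $32HU/n$ part of the bonus sums to $O(HSAU^2)$.

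\textbf{Conclusion.} Summing (iii) with the same Cauchy--Schwarz argument as in (i)--(ii) gives the same three terms. Collecting (i)--(iv) yields the claimed bound of Lemma~\ref{lem:t_model}. The total failure probability is at most $2S^2AT\delta'$ from the union bounds.
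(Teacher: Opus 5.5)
Your proposal is correct and follows the paper's proof essentially step for step. It uses the same four-way split, fixed-vector Bernstein for $(\widehat P^k_{s,a}-P_{s,a})\vstar$, coordinatewise Bernstein with centering and Cauchy--Schwarz over the support for the $\widehat V_k-\vstar$ term (the source of $\Gamma$), and $\sum_t 1/n_{k_t}(s_t,a_t)\le SAU$ from doubling; your bound $N_k\le t_k$ for the $(1-\gamma)\varepsilon_k$ sum is actually sounder than the paper's telescoping-log step, which uses $x\le\log(1+x)$ in the wrong direction.

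The only real difference is the bonus. The paper applies the coordinatewise event directly, $\widehat P^k_{s,a,s'}\le \frac32 P_{s,a,s'}+\frac{4U}{3n_k(s,a)}$ on the support, to get $\V(\widehat P^k_{s,a},\widehat V_k)\le \frac32\V_{s,a}(\widehat V_k)+\frac{4\Gamma H^2U}{3n_k(s,a)}$ before splitting off $\vstar$. That makes your separate concentration step for $\V(\widehat P^k_{s,a},\vstar)$ unnecessary, and Lemma~\ref{lem:empirical_bernstein} would not supply that step anyway, since it controls a mean rather than a variance.
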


We now move on to $\tmart$.
As discussed above, we can apply Lemma \ref{lem:mart_conc}, which gives us that with probability $1 - 3T\delta'$,
\begin{equation} \label{eq:t_mart}
	\tmart \leq 2\sqrt{2} \sqrt{\sum_{t=1}^{T} \varstat{\widehat{V}_{k_t}} U} + 4HU 
	\leq 4 \sqrt{\varstar U} + 4\sqrt{\vardiff U}  + 4HU,
\end{equation}
where the second inequality holds because
\begin{align*}	
\sqrt{ \sum_{t=1}^{T} \varstat{ \widehat{V}_{k_t}}}
= \sqrt{\sum_{t=1}^{T} \V_{s_t,a_t}\left(\widehat{V}_{k_t} - \vstar + \vstar\right)}
&\leq \sqrt{\sum_{t=1}^{T} \left(2\V_{s_t,a_t}\left(\widehat{V}_{k_t} - \vstar \right) + 2\V_{s_t,a_t}(\vstar)\right)}\\
& = \sqrt{2\varstar + 2\vardiff}
\leq \sqrt{2 \varstar} + \sqrt{2 \vardiff}.  
\end{align*}

\paragraph{Step 4: Bounding cumulative variance terms}
We can bound $\vardiff$ with the following lemma, whose proof we defer to Appendix~\ref{sec:var_bounds_proof}.
\begin{lemma} \label{lem:var_bounds}
Conditioned on the successful events of Lemma \ref{lem:optimism}, we have with probability at least $1-3T\delta'$ that
\[
\vardiff \leq \bigo\left(\tmodel H + H^2SAU\right)
.\] 
\end{lemma}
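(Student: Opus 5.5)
We prove Lemma~\ref{lem:var_bounds} with a standard ``variance of the value-difference'' telescoping argument, applied to $D_k := \widehat{V}_k - \vstar$. The key facts are the following. By Lemma~\ref{lem:optimism}, $D_k \geq -\varepsilon_k\mathbf{1}$. Since $\tspannorm{\widehat{V}_k}\le H$ and $\tspannorm{\vstar}\le H$, we have $\tspannorm{D_k}\le 2H$. Because variance is shift invariant, we may replace $D_k$ by the centered vector $\tilde D_k := D_k - \min_s D_k(s)$, which takes values in $[0,2H]$ and satisfies $\V_{s,a}(D_k)=\V_{s,a}(\tilde D_k)$.

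We write each variance term as
\[
\V_{s_t,a_t}(\tilde D_{k_t}) = P_{s_t,a_t}\tilde D_{k_t}^2 - (P_{s_t,a_t}\tilde D_{k_t})^2
\]
and telescope along the trajectory:
\begin{align*}
\vardiff &= \sum_{t}\Bigl(P_{s_t,a_t}\tilde D_{k_t}^2 - \tilde D_{k_t}(s_{t+1})^2\Bigr) + \sum_t\Bigl(\tilde D_{k_t}(s_{t+1})^2 - \tilde D_{k_t}(s_t)^2\Bigr) \\
&\qquad + \sum_t\Bigl(\tilde D_{k_t}(s_t)^2 - (P_{s_t,a_t}\tilde D_{k_t})^2\Bigr).
\end{align*}
We handle the three sums separately.

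The first sum is a martingale. We apply Lemma~\ref{lem:mart_conc} with increments bounded by $4H^2$ and conditional variance $\V_{s_t,a_t}(\tilde D^2)\le 16H^2\V_{s_t,a_t}(\tilde D)$, using Lemma~\ref{lem:varsq}. This gives a bound of $O(\sqrt{H^2\vardiff U}+H^2U)$, with probability $1-3T\delta'$.

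The second sum telescopes within each episode, as in the bound for $\tind$. It is therefore at most $4H^2 m\le 4H^2SAU$.

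The third sum is the main step. Since $x^2-y^2=(x+y)(x-y)$ and $x,y\in[0,2H]$, each term is at most $4H\,\max\{\tilde D_{k_t}(s_t)-P_{s_t,a_t}\tilde D_{k_t},0\}$, and $\tilde D(s_t)-P\tilde D = D(s_t)-PD$. Optimism (part 2 of Lemma~\ref{lem:optimism}) together with the Bellman inequality $\vstar(s_t)\ge r(s_t,a_t)+\gamma P_{s_t,a_t}\vstar$ gives
\[
D_k(s_t)-\gamma P_{s_t,a_t}D_k \le \gamma(\widehat{P}^k_{s_t,a_t}-P_{s_t,a_t})\widehat{V}_k + b_k(s_t,a_t,\widehat{V}_k).
\]
The remaining discrepancy $(1-\gamma)P D_k$ has to be controlled. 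We do this by bounding $(1-\gamma)D_k(s)\le(1-\gamma)(\widehat{V}_k(s)-\vstar(s))$, which after summation is comparable to the $(1-\gamma)\varepsilon_k$ slack and the regret itself. Alternatively, we work with the identity $D(s_t)-P D = (D(s_t)-\gamma PD) - (1-\gamma)PD$ and use $PD\ge-\varepsilon_k$, so that the negative part contributes only $(1-\gamma)\varepsilon_k$.

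The positive-part issue is the main obstacle. The summand $\gamma(\widehat P-P)\widehat V + b$ can be negative, in which case the max with $0$ blocks a direct comparison to $\tmodel$. We will handle this by keeping signed terms wherever possible: we write $x^2-y^2 = 2y(x-y)+(x-y)^2$ and absorb the $(x-y)^2$ pieces into the variance via a self-bounding argument. If that fails, we use that $\tmodel$ is defined as a sum of these exact per-step quantities, after checking that each summand is nonnegative up to lower-order terms, since the bonus dominates the deviation on the concentration event. Summing the per-step bounds gives a contribution of $O(H\,\tmodel + H\sum_t(1-\gamma)\varepsilon_{k_t})$, and the last sum is $O(H U)$ because $\sum_t \frac{1}{t_{k_t}}$ is logarithmic by the doubling structure.

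Combining the three sums yields
\[
\vardiff\le O\Bigl(\sqrt{H^2U\vardiff} + H\tmodel + H^2SAU\Bigr),
\]
and Lemma~\ref{lem:ab} then gives $\vardiff\le O(H\tmodel+H^2SAU)$.
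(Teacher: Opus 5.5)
Your outline follows the paper's proof: the same three-way split, the same bound on the martingale sum via Lemmas~\ref{lem:mart_conc} and~\ref{lem:varsq}, the same per-episode telescoping, and the same treatment of the third sum. That treatment uses $(x+y)(x-y)\le 4H\max\{x-y,0\}$, part~2 of Lemma~\ref{lem:optimism}, $\vstar(s_t)\ge r(s_t,a_t)+\gamma P_{s_t,a_t}\vstar$, and $P_{s_t,a_t}D_k\ge-\varepsilon_k$ (your ``alternatively''). The sentence before that alternative bounds $(1-\gamma)D_k$ by $(1-\gamma)(\widehat V_k-\vstar)$; it is an identity, points the wrong way, and should be dropped.

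The step you leave open is exactly the one the paper passes over in its inequality~(i). Use the form $\widehat V_k(s_t)\le r(s_t,a_t)+\gamma\widehat P^k_{s_t,a_t}\widehat V_k+\gamma b_k(s_t,a_t,\widehat V_k)$, which is what the proof of Lemma~\ref{lem:optimism} actually gives. After taking positive parts you hold $\sum_t\max\{y'_t,0\}$, where (with $k=k_t$)
$y'_t=\gamma(\widehat P^k_{s_t,a_t}-P_{s_t,a_t})\widehat V_k+\gamma b_k(s_t,a_t,\widehat V_k)+(1-\gamma)\varepsilon_k$.
Comparing this termwise with the signed summands $y_t$ of $\tmodel$ requires $(\widehat P^k_{s_t,a_t}-P_{s_t,a_t})\widehat V_k+b_k(s_t,a_t,\widehat V_k)\ge 0$. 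The available concentration events do not give this, because $\widehat V_k$ depends on $\widehat P^k$. The paper's justification covers only the $(1-\gamma)P_{s_t,a_t}(\vstar-\widehat V_k)$ piece. So your concern is legitimate, but your proposal does not resolve it.

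Your first remedy fails. In $x^2-y^2=2y(x-y)+(x-y)^2$ the cross term still needs $[x-y]_+$, since the weight $y=P_{s_t,a_t}\wt D_k$ varies with $t$. Also, $\sum_t(\wt D_k(s_t)-P_{s_t,a_t}\wt D_k)^2$ involves the current state rather than a fresh next-state draw. It is therefore not a conditional variance, nothing lets you absorb it into $\vardiff$, and a priori it is only $O(TH^2)$.

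Your second remedy works once carried out. Since $y'_t=\gamma y_t+(1-\gamma)^2\varepsilon_k$, you have $\max\{y'_t,0\}\le y_t+\max\{-y_t,0\}+(1-\gamma)\varepsilon_k$. Write $\widehat V_k=\vstar+D_k$.
\begin{itemize}
\item On the empirical-Bernstein event $\mathcal{E}$ from the proof of Lemma~\ref{lem:tkhat_properties}, the triangle inequality for $\sqrt{\V(\widehat P^k_{s,a},\cdot)}$ and $\max\{a,b\}\ge(a+b)/2$ give $|(\widehat P^k_{s,a}-P_{s,a})\vstar|-b_k(s,a,\widehat V_k)\le 2\sqrt{\V(\widehat P^k_{s,a},D_k)U/n_k(s,a)}$.
\item On $\mathcal{E}_1$, arguing as in the proof of Lemma~\ref{lem:t_model}, you then get $\max\{-y_t,0\}\le O\bigl(\sqrt{\Gamma U\varstat{D_k}/n_k(s_t,a_t)}+\Gamma HU/n_k(s_t,a_t)\bigr)$. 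Note the bonus does not dominate the $D_k$ part; that part costs the factor $\Gamma$.
\item Lemma~\ref{lem:wseq} then gives $\sum_t\max\{y'_t,0\}\le\tmodel+O\bigl(\sqrt{\Gamma SAU^2\vardiff}+\Gamma HSAU^2\bigr)$.
\item Lemma~\ref{lem:ab} yields $\vardiff\le O(H\tmodel+\Gamma H^2SAU^2)$.
\end{itemize}
This carries an extra $\Gamma U$ relative to the stated $H^2SAU$; your final display silently drops these terms. The loss is harmless for Theorem~\ref{thm:var_dependent_gamreg}.

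A simpler repair bounds $y'_t$ directly by the nonnegative per-step quantity that the proof of Lemma~\ref{lem:t_model} sums. Then $\vardiff$ is at most $O(H)$ times the right-hand side of Lemma~\ref{lem:t_model} plus $O(H^2SAU)$, which is all Step~4 of the main proof uses. Either repair needs concentration events beyond the conclusions of Lemma~\ref{lem:optimism}: $\mathcal{E}\cap\mathcal{E}_1$ for the first, $\mathcal{E}_1\cap\mathcal{E}_2$ for the second. Both are already in the final union bound.
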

Subsequently, under the successful events of Lemma \ref{lem:t_model}, \eqref{eq:t_mart}, and Lemma \ref{lem:var_bounds}, we have
\[
	\tmodel + \tmart
	\leq
	\bigo\left( \sqrt{\Gamma HSAU^2} \sqrt{\tmodel + \tmart} + \sqrt{\varstar SAU^2} + \Gamma HSAU^2 \right),
\]
which by Lemma \ref{lem:ab} implies that
\begin{equation} \label{eq:mart_model_bound}
	\tmodel + \tmart \leq \mathcal{O}\left(\sqrt{\varstar SAU^2} + \Gamma HSAU^2 \right).
\end{equation}

\paragraph{Step 5: Bounding $\tind$}
It remains to bound $\tind$, which turns out to be the most straightfoward argument.
We compute
\[
\tind
= \sum_{k=1}^{m} \sum_{t=t_k}^{t_{k+1}-1} \left( \widehat{V}_k(s_{t+1}) - \widehat{V}_k(s_t) \right) 
= \sum_{k=1}^{m} \left( \widehat{V}_k(s_{t_{k+1}}) - \widehat{V}_k(s_{t_k}) \right)
\leq mH
\leq HSAU.
\]

\paragraph{Step 6: Recombining}
Finally, we recombine terms.
Conditioned under the successful events of Lemma \ref{lem:optimism} and \eqref{eq:mart_model_bound}, we have
\begin{equation*}
\reg_\gamma(T)
\leq \gamma(\tmodel + \tmart + \tind)
\leq \mathcal{O}\left( \sqrt{\varstar SAU^2} + \Gamma HSAU^2 \right).
\end{equation*}
Via a union bound (over the high probability events of Lemma~\ref{lem:optimism}, Lemma~\ref{lem:t_model}, \eqref{eq:t_mart}, and Lemma~\ref{lem:var_bounds}), this occurs with probability at least $1 - 9S^2AT\delta' = 1-\delta$.
\end{proof}

\section{Proof of Lemma~\ref{lem:var_star_bound}} \label{sec:var_star_bound_proof}

\begin{proof}
Let $\delta \in (0,1)$ be arbitrary, and set $\delta'=\frac{\delta}{6T}$.
For the optimal value function we will drop $\gamma$ and simply write $\vstar$.
Our goal is to bound $\varstar = \sum_{t=1}^{T} \V(P_{s_t,a_t},\vstar)$.

Now, observe that if we set $\vtilstar \coloneq \vstar - \min_s \vstar(s)$, then we have $\vtilstar \in \left[ 0,\spannorm{\vstar} \right]^S$ and $\sum_{t=1}^{T} \varstat{\vstar} = \sum_{t=1}^{T} \varstat{\vtilstar}$.
Subsequently, we perform the following decomposition.
\begin{align*}
\sum_{t=1}^{T}  \varstat{\vtilstar}
&= \sum_{t=1}^{T} \left( P_{s_t,a_t}\left( \vtilstar \right)^2 - \left( P_{s_t,a_t} \vtilstar \right)^2 \right) \\
&=
\underbrace{\sum_{t=1}^{T} \left( P_{s_t,a_t}\left(\vtilstar\right)^2 - \left(\vtilstar(s_{t+1})\right)^2 \right)}_{\eqcolon \mathscr{T}_1^\star}
+ 
\underbrace{\sum_{t=1}^{T} \left( \left(\vtilstar(s_{t})\right)^2 - \left(P_{s_t,a_t}\vtilstar\right)^2  \right)}_{\eqcolon \mathscr{T}_2^\star} \\
&\qquad\qquad\qquad\qquad\qquad\qquad\qquad\qquad\qquad\qquad + 
\underbrace{\sum_{t=1}^{T} \left( \left(\vtilstar(s_{t+1})\right)^2 - \left(\vtilstar(s_t)\right)^2 \right)}_{\eqcolon \mathscr{T}_3^\star}.
\end{align*}
With probability at least $1 - 3T\delta'$, we have that
\begin{align*}
\mathscr{T}_1^\star 
&\leq 2\sqrt{2}\sqrt{ \sum_{t=1}^{T} \varstat{ \left(\vtilstar\right)^2 } \log\left( \frac{1}{\delta'}\right)} + 4\spannorm{\vstar}^2\log\left(\frac{1}{\delta'}\right) \\
&\leq 4\sqrt{2}\spannorm{\vstar}\sqrt{\varstar \log\left(\frac{1}{\delta'}\right)} + 4\spannorm{\vstar}^2\log\left(\frac{1}{\delta'}\right),
\end{align*}
where the first inequality holds by Lemma \ref{lem:mart_conc} and the second inequality holds by Lemma \ref{lem:varsq}.

Next, we compute that with probability at least $1 - 3T\delta'$, we have
\begin{align*}
\mathscr{T}_2^\star
&= \sum_{t=1}^{T} \left( \left(\vtilstar(s_{t})\right)^2 - \left( P_{s_t,a_t}\vtilstar \right)^2 \right) \\
&\stackrel{(i)}{\leq} 2\spannorm{\vstar} \sum_{t=1}^{T}  \max \left\{ \vtilstar(s_t) - P_{s_t,a_t}\vtilstar, 0 \right\} \\
&\leq 2\spannorm{\vstar} \sum_{t=1}^{T} \max \left\{ \vtilstar(s_t) - P_{s_t,a_t}\vtilstar + 1, 0 \right\}   \\
&\stackrel{(ii)}{=} 2\spannorm{\vstar} \sum_{t=1}^{T} \left(\vtilstar(s_{t}) - P_{s_{t},a_{t}}\vtilstar + 1 \right) \\
&= 2\spannorm{\vstar} \left(T + \sum_{t=1}^{T} \left( \vtilstar(s_t) - P_{s_t,a_t}\vtilstar \right)\right)  \\
&\leq 2\spannorm{\vstar} \left(T + \vtilstar(s_1) + \sum_{t=1}^{T} \left( \vtilstar(s_{t+1}) - P_{s_t,a_t}\vtilstar \right)\right)  \\
&\stackrel{(iii)}{\leq} 2\spannorm{\vstar} \left( T + \vtilstar(s_1) + 2\sqrt{2}\sqrt{\varstar \log\left(\frac{1}{\delta'}\right)} + 4\spannorm{\vstar}\log\left(\frac{1}{\delta'}\right)  \right) \\
&\leq 4\sqrt{2}\spannorm{\vstar}\sqrt{\varstar \log\left(\frac{1}{\delta'}\right)} +  2T\spannorm{\vstar} + 10\spannorm{\vstar}^2\log\left(\frac{1}{\delta'}\right).
\end{align*}
Inequality $(i)$ holds because  $a^2-b^2=(a+b)(a-b) \leq (a+b)\max \{a-b,0\} \leq 2\spannorm{\vstar} \max \{a-b, 0\}$ for $a,b \in \left[ 0,\spannorm{\vstar} \right]$.
Equality $(ii)$ holds because $\vtilstar(s_t) - P_{s_t,a_t}\vtilstar + 1 \geq 0$.
Indeed, we have
\[
	\vtilstar(s_t) - P_{s_t,a_t}\vtilstar
	= \vstar(s_t) - P_{s_t,a_t}\vstar 
	\geq \qstar(s_t,a_t) - P_{s_t,a_t}\vstar 
	= r(s_t,a_t) - (1-\gamma) P_{s_t,a_t}\vstar 
	\geq -1
.\]
Finally, inequality $(iii)$ holds with probability at least $1-3T\delta'$  by Lemma \ref{lem:mart_conc}.

Observing that $\mathscr{T}_{3}^\star$ is a telescoping sum, we have
\begin{align*}
\mathscr{T}_3^\star
&= \sum_{t=1}^{T} \left( \left(\vtilstar(s_{t+1})\right)^2 - \left(\vtilstar(s_t)\right)^2 \right) \\
&\leq \left( \vtilstar(s_{T+1}) \right)^2 \\
&\leq \spannorm{\vstar}^2.
\end{align*}

Recombining terms, we have with probability $1 - 6T\delta'$ that
\[
	\varstar \leq \mathscr{T}_1^\star + \mathscr{T}_2^\star + \mathscr{T}_3^\star \leq \bigo\left(\spannorm{\vstar}\sqrt{\varstar \log\left(\frac{1}{\delta'}\right)} + T\spannorm{\vstar} + \spannorm{\vstar}^2 \log\left(\frac{1}{\delta'}\right)\right)
,\]
which by Lemma \ref{lem:ab} implies
\[
	\varstar \leq \bigo\left(\spannorm{\vstar}T + \spannorm{\vstar}^2\log\left(\frac{1}{\delta'}\right)\right)
.\] 
Substituting back $\delta' = \frac{\delta}{6T}$ gives us that
\[
	\varstar \leq \bigo\left(\spannorm{\vstar}T + \spannorm{\vstar}^2\log(T/\delta)\right)
\] 
with probability at least $1-\delta$.
\end{proof}

\section{Proof of Lemma~\ref{lem:avg_to_disc_reduction}} \label{sec:avg_to_disc_reduction_proof}

The proof relies on the following lemma.

\begin{lemma}[Lemma 6 in \citealt{zurek_span-agnostic_2025}] \label{lem:avg_to_disc}
	Suppose the underlying MDP is weakly communicating.
	Let $\gamma\in(0,1)$.
	The optimal value function $\vstar_\gamma$ satisfies
	\[
		\left\| \rhostar - (1-\gamma) \vstar_\gamma \right\|_\infty \leq (1-\gamma)\spannorm{\vstar_\gamma}.
	\]
\end{lemma}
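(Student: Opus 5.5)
The plan is to show that the constant $\rhostar$ lies in the interval $\bigl[(1-\gamma)\min_s \vstar_\gamma(s),\,(1-\gamma)\max_s \vstar_\gamma(s)\bigr]$. Once that is established, every entry of $(1-\gamma)\vstar_\gamma$ also lies in this interval, whose length is $(1-\gamma)\spannorm{\vstar_\gamma}$. The two points $\rhostar$ and $(1-\gamma)\vstar_\gamma(s)$ then differ by at most that length for every $s$, which is the claim. Both endpoints will come from multiplying a Bellman (in)equality for $\vstar_\gamma$ by the Cesàro limiting matrix $P^\star_\pi \coloneq \text{C-lim}_{n\to\infty} P_\pi^n$ of a suitable stationary deterministic policy $\pi$. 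We use two standard properties of this matrix: it is a stochastic matrix satisfying $P^\star_\pi P_\pi = P^\star_\pi$, and $P^\star_\pi r_\pi = \rho^\pi$.

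\emph{Upper endpoint.} Since the MDP is weakly communicating, there is a stationary deterministic policy $\pi$ with $\rho^\pi(s)=\rhostar$ for all $s$. The Bellman optimality equation gives $\vstar_\gamma \geq r_\pi + \gamma P_\pi \vstar_\gamma$ componentwise. We multiply on the left by the nonnegative matrix $P^\star_\pi$ and use $P^\star_\pi P_\pi = P^\star_\pi$. This yields $(1-\gamma)P^\star_\pi \vstar_\gamma \geq P^\star_\pi r_\pi = \rhostar \mathbf{1}$. Each row of $P^\star_\pi$ is a probability vector, so the left side is at most $(1-\gamma)\max_s \vstar_\gamma(s)$, and hence $\rhostar \leq (1-\gamma)\max_s\vstar_\gamma(s)$.

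\emph{Lower endpoint.} Let $\pi_\gamma$ be a deterministic policy that is greedy with respect to $\vstar_\gamma$. Then $\vstar_\gamma = r_{\pi_\gamma} + \gamma P_{\pi_\gamma}\vstar_\gamma$ holds with equality. Multiplying by $P^\star_{\pi_\gamma}$ gives $(1-\gamma)P^\star_{\pi_\gamma}\vstar_\gamma = \rho^{\pi_\gamma} \leq \rhostar\mathbf{1}$, using the definition $\rhostar=\sup_\pi \rho^\pi$. Evaluating at any state $s$, the left side is a convex combination of entries of $(1-\gamma)\vstar_\gamma$, so it is at least $(1-\gamma)\min_s\vstar_\gamma(s)$. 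This gives $(1-\gamma)\min_s\vstar_\gamma(s) \leq \rhostar$.

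The main point requiring care is the first step: we need a single stationary policy whose gain equals the constant $\rhostar$ at every state. This is where weak communication enters, through the standard fact that weakly communicating MDPs have a constant optimal gain attained by a stationary deterministic (for instance Blackwell-optimal) policy. Apart from that, everything is linear algebra with the Cesàro limit matrix, whose existence and properties for finite Markov chains we take as standard.
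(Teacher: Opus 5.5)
The paper does not prove this lemma; it imports it as Lemma 6 of \citet{zurek_span-agnostic_2025}, so there is no in-paper argument to compare yours against. Your proof is correct and complete on its own terms. You sandwich $\rhostar$ between $(1-\gamma)\min_s\vstar_\gamma(s)$ and $(1-\gamma)\max_s\vstar_\gamma(s)$ by multiplying Ces\`aro limiting matrices against two Bellman relations. For the upper bound you use a gain-optimal policy and the Bellman inequality; for the lower bound you use the policy greedy with respect to $\vstar_\gamma$ and the Bellman equality. This is the standard route to this bound.

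One small remark: the step you flag as delicate is easier than you make it. For the upper endpoint you only need $\rho^\pi(s_0)=\rhostar$ at a single state $s_0$. Such a $\pi\in\Pi$ exists simply because $\Pi$ is finite, so the componentwise supremum defining $\rhostar$ is attained at $s_0$. More generally, running both halves of your argument statewise gives $\rhostar(s)\in\bigl[(1-\gamma)\min_{s'}\vstar_\gamma(s'),\,(1-\gamma)\max_{s'}\vstar_\gamma(s')\bigr]$ for every $s$, in an arbitrary finite MDP. So the inequality holds without weak communication; that assumption only serves to let $\rhostar$ be treated as a scalar.
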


\begin{proof}
For any $T\geq 1$ and $\gamma\in(0,1)$, we have
\begin{align*}
\reg(T)
&=\sum_{t=1}^{T} \left( \rhostar - r(s_t,a_t) \right) \\
&= \sum_{t=1}^{T}\left( \rhostar - (1-\gamma)\vstar_\gamma(s_t)\right)  + \sum_{t=1}^{T} \left( (1-\gamma)\vstar_\gamma(s_t) - r(s_t,a_t)\right) \\
&\leq \sum_{t=1}^{T}\left( 1-\gamma \right)\spannorm{\vstar_\gamma}  + \sum_{t=1}^{T} \left( (1-\gamma)\vstar_\gamma(s_t) - r(s_t,a_t)\right) \\
&= (1-\gamma)\spannorm{\vstar_\gamma}T + \reg_\gamma(T),
\end{align*}
where the inequality is due to Lemma~\ref{lem:avg_to_disc}.
\end{proof}

\section{Missing Proofs from Appendix~\ref{sec:var_dependent_gamreg_proof}}

\subsection{Proof of Lemma~\ref{lem:optimism}} \label{sec:optimism_proof}

\begin{proof}
We denote by $\widehat{\mathcal{T}}_k$ the empirical Bellman operator used in Algorithm~\ref{alg:oovi} during the $k$th episode.
In particular, we have
\[
	\left(\widehat{\mathcal{T}}_kQ\right)(s,a) = r(s,a) + \gamma \widehat{P}_{s,a}^k \clipH (M Q) + \gamma b_k\left( s, a, \clipH (MQ) \right).
\]
We also write $\text{iters}_k$ to be the number of value iterations used Algorithm~\ref{alg:oovi} during the $k$th episode, so that
\[
    \text{iters}_k = \left\lceil \frac{1}{1-\gamma} \log \frac{1+ 32HU}{\varepsilon_k(1-\gamma)} \right\rceil.
\]
Observe that 
\[
\gamma^{\itersk}
\leq
\exp(-(1-\gamma)\itersk)
\leq
\frac{\varepsilon_k(1-\gamma)}{1+32HU},
\]
where the first inequality is due to $x\leq e^{-(1-x)}$.
The following Lemma, which states some crucial properties of $\widehat{\mathcal{T}}_k$, is proved in Appendix~\ref{sec:tkhat_properties_proof}.
\begin{lemma} \label{lem:tkhat_properties}
	With probability $1-SAT\delta'$, the following hold for all $k\in[m]$.
	\begin{enumerate}
		\item $\widehat{\mathcal{T}}_k$ is monotonic: $\widehat{\mathcal{T}}_k Q \geq \widehat{\mathcal{T}}_k Q'$ for any $Q,Q'\in\R^{\St\times\A}$ such that $Q\geq Q'$.   
		\item $\widehat{\mathcal{T}}_k$ is a $\gamma$-contraction: $\left\| \widehat{\mathcal{T}}_kQ - \widehat{\mathcal{T}}_kQ' \right\|_\infty \leq \gamma \left\| Q - Q' \right\|_\infty$ for any $Q,Q'\in\R^{\St\times\A}$.  
		\item $\qstar \leq \widehat{\mathcal{T}}_k \qstar$.
	\end{enumerate}	
\end{lemma}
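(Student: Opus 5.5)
Since $\clipH$ and $M$ are monotone and commute with adding a constant vector, I would reduce all three claims to properties of the map $V\mapsto g_{s,a}(V)\coloneq \widehat{P}^k_{s,a}V + b_k(s,a,V)$ on vectors $V$ with $\spannorm{V}\le H$. Every $\clipH(MQ)$ has this property.

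\emph{Monotonicity (item 1).} Let $Q\ge Q'$, so $V\coloneq\clipH(MQ)\ge V'\coloneq\clipH(MQ')$, and both have span at most $H$. Since span is a seminorm, every point $V_\lambda = V'+\lambda(V-V')$, $\lambda\in[0,1]$, also has span at most $H$. The map $g_{s,a}$ is translation-equivariant: $\widehat P^k_{s,a}(V+c\mathbf 1)=\widehat P^k_{s,a}V+c$, and $\V(\widehat P^k_{s,a},\cdot)$ is shift-invariant. Hence, after subtracting a constant, any two points $V_\lambda\le V_{\lambda'}$ with $\lambda'-\lambda$ small enough both lie in $[0,2H]^S$. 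On such a pair I apply Lemma~\ref{lem:monotonicity} with $c_1=4$, $c_2=32\ge 2c_1^2$, $C=H$, $u=U$ and $n=n_k(s,a)$, which gives $g_{s,a}(V_\lambda)\le g_{s,a}(V_{\lambda'})$. Chaining along a fine partition of $[0,1]$ yields $g_{s,a}(V')\le g_{s,a}(V)$, which is monotonicity of $\widehat{\mathcal T}_k$. I expect this localization to be the main technical point, because Lemma~\ref{lem:monotonicity} only covers vectors in a fixed box of width $2H$.

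\emph{Contraction (item 2).} For any constant $c\ge0$ we have $\clipH(M(Q+c\mathbf 1))=\clipH(MQ)+c\mathbf 1$. The bonus is unchanged by this shift and $\widehat P^k_{s,a}$ is stochastic, so $\widehat{\mathcal T}_k(Q+c\mathbf 1)=\widehat{\mathcal T}_kQ+\gamma c\mathbf 1$. Set $c=\|Q-Q'\|_\infty$, so that $Q'-c\mathbf 1\le Q\le Q'+c\mathbf 1$. Applying monotonicity to both inequalities gives $\|\widehat{\mathcal T}_kQ-\widehat{\mathcal T}_kQ'\|_\infty\le\gamma c$.

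\emph{Optimism of $\qstar$ (item 3).} Because $\spannorm{\vstar}\le H$, we have $\clipH(M\qstar)=\vstar$. Using $\qstar=r+\gamma P\vstar$, it therefore suffices to show $(P_{s,a}-\widehat P^k_{s,a})\vstar\le b_k(s,a,\vstar)$ for all $(s,a,k)$.
\begin{itemize}
\item If $N_k(s,a)\in\{0,1\}$, the left side is at most $\spannorm{\vstar}\le H\le 32HU$, which is at most the bonus.
\item If $n=N_k(s,a)\ge2$, I apply Lemma~\ref{lem:empirical_bernstein} to $\vstar(s')$ with $s'\sim P_{s,a}$, failure probability $\delta'$ and range $c\le H$. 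Using $n-1\ge n/2$ and $\log(2/\delta')\le 2U$, the deviation is at most $\sqrt{8\V(\widehat P^k_{s,a},\vstar)U/n}+\tfrac{28HU}{3n}$, which is dominated by the bonus.
\end{itemize}
To handle the random, adaptively chosen sample sizes, I use the standard device of a pre-drawn i.i.d.\ sequence of next-states for each $(s,a)$. The $n$ observed transitions are then the first $n$ of that sequence, and I take a union bound over $(s,a)\in\St\times\A$ and $n\le T$. This gives the stated failure probability $SAT\delta'$.
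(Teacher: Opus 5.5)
Your proof is correct. Items 2 and 3 follow the paper's argument essentially verbatim: the contraction comes from constant-shift plus monotonicity, and $\qstar\le\widehat{\mathcal T}_k\qstar$ comes from empirical Bernstein with a union bound over $(s,a,n)$. Your monotonicity argument, however, takes a genuinely different route. You interpolate along the segment from $V'=\clipH(MQ')$ to $V=\clipH(MQ)$, use convexity of the span ball and translation-equivariance of $g_{s,a}$, and chain finitely many local applications of Lemma~\ref{lem:monotonicity}.

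The paper avoids the chaining with a one-shot shift. Set $\alpha=\min_s V'(s)$ and $\beta=\min_s (V-V')(s)\ge 0$. Then $V'-\alpha\mathbf 1\in[0,H]^S$ and $V-(\alpha+\beta)\mathbf 1\in[0,2H]^S$, and these two vectors are still ordered. A single application of Lemma~\ref{lem:monotonicity}, followed by translation-equivariance, gives $g_{s,a}(V)\ge g_{s,a}(V')+\beta\ge g_{s,a}(V')$.

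Comparing the two routes:
\begin{itemize}
\item The paper's trick is shorter. It also yields the stronger statement that $MQ\ge MQ'$ suffices, which the paper then uses for the contraction.
\item Your route is longer but isolates a reusable principle: local monotonicity on boxes of width $2H$, plus translation-equivariance, plus convexity of the span ball, implies global monotonicity. Your sandwich $Q'-c\mathbf 1\le Q\le Q'+c\mathbf 1$ makes the stronger version unnecessary. Do state the shift identity for all real $c$, since you apply it with $-c$.
\end{itemize}

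In item 3, your bookkeeping $\log(2/\delta')\le 2U$ is more careful than the paper's $\log(2/\delta')\le U$, which is off by $\log 2$. The price is that your deviation bound $2\sqrt{2}\sqrt{\V(\widehat P^k_{s,a},\vstar)U/n}+\tfrac{28}{3}HU/n$ fits under the bonus $\max\{4\sqrt{\V(\widehat P^k_{s,a},\vstar)U/n},\,32HU/n\}$ only barely. At the crossover $\sqrt{\V(\widehat P^k_{s,a},\vstar)U/n}=8HU/n$ it is about $31.96\,HU/n$ against $32\,HU/n$. You should write that two-case check out rather than just asserting domination.
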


Now, assume that the events of Lemma~\ref{lem:tkhat_properties} hold, and fix an arbitrary $k$.
We first prove Part 1 of Lemma~\ref{lem:optimism}.
By the Banach fixed point theorem \citep{pugh_real_2015}, the fact that $\widehat{\mathcal{T}}_k$ is a $\gamma$-contraction implies that $\widehat{\mathcal{T}}_k$ has a unique fixed point, which we will denote $\qhatstar_k$. 
By monotonicity of $\widehat{\mathcal{T}}_k$, the sequence $\qstar, \widehat{\mathcal{T}}_k \qstar, \widehat{\mathcal{T}}_k \widehat{\mathcal{T}}_k \qstar, \dots$ is nondecreasing and converges to $\widehat{Q}^\star_k$, which implies
\begin{equation} \label{eq:fixed_point_optimistic}
    \qstar \leq \qhatstar_k.
\end{equation}
Furthermore, we have
\begin{equation} \label{eq:fixed_point_epsilon}
	\left\| \widehat{Q}_k - \qhatstar_k \right\|_\infty
	=
	\left\| \left( \widehat{\mathcal{T}}_k \right)^{\left( \text{iters}_k \right)} \mathbf{0} - \left( \widehat{\mathcal{T}}_k \right)^{\left( \text{iters}_k \right)}\qhatstar_k \right\|_\infty
	\leq
	\gamma^{\text{iters}_k}\left\| \qhatstar_k \right\|_\infty
	\leq
	\varepsilon_k,
\end{equation}
where the final inequality holds by the above bound on $\gamma^\itersk$ and
\begin{align*}
	&\left( \widehat{\mathcal{T}}_k Q \right)(s,a) \leq 1 + \gamma \|Q\|_\infty + \gamma 32HU \\
	&\quad\implies
	\left\| \widehat{\mathcal{T}}_k \mathbf{0} \right\|_\infty \leq 1 + \gamma 32 HU \\
	&\quad\implies
	\left\| \widehat{\mathcal{T}}_k \widehat{\mathcal{T}}_k \mathbf{0} \right\|_\infty \leq 1 + \gamma 32HU + \gamma(1 + \gamma 32 HU) \\
	&\qquad\vdots \\
	&\quad\implies
    \left\| \qhatstar_k \right\|_\infty \leq \frac{1 + \gamma 32HU}{1 - \gamma}.
\end{align*}

Combining \eqref{eq:fixed_point_optimistic} and \eqref{eq:fixed_point_epsilon} gives us that (elementwise)
\[
	\widehat{Q}_k \geq \qstar - \mathbf{1}\varepsilon_k.
\]
Subsequently, for any $s\in\St$, 
\begin{align*}
	\widehat{V}_k(s) 
	&= \min \left\{ \left(M\widehat{Q}_k\right)(s), \min_{s'} \left( M \widehat{Q}_k \right)(s') + H \right\} \\
	&\geq \min \left\{ \left(M\left( \qstar - \mathbf{1}\varepsilon_k \right)\right)(s), \min_{s'} \left( M \left( \qstar-\mathbf{1}\varepsilon_k \right) \right)(s') + H \right\} \\ 
	&= \min \left\{ \left(M\qstar \right)(s), \min_{s'} \left( M \qstar \right)(s') + H \right\} - \varepsilon_k \\
	&= \min \left\{ \vstar(s), \min_{s'} \vstar(s') + H \right\} - \varepsilon_k \\
	&= \vstar(s) - \varepsilon_k,
\end{align*}
so the desired result holds.

It remains to prove Part 2 of Lemma~\ref{lem:optimism}.
We remark that $\widehat{Q}_k \leq \widehat{\mathcal{T}}_k \widehat{Q}_k$ because monotonicity implies
\[
	\mathbf{0} 
	\leq \widehat{\mathcal{T}}_k \mathbf{0}
	\leq \widehat{\mathcal{T}}_k \widehat{\mathcal{T}}_k \mathbf{0}
	\leq \dots
	\leq \underbrace{\left( \widehat{\mathcal{T}}_k \right)^{\left( \text{iters}_k \right)} \mathbf{0}}_{= \widehat{Q}_k}	
	\leq \underbrace{\left( \widehat{\mathcal{T}}_k \right)^{\left( \text{iters}_k + 1 \right)} \mathbf{0}}_{= \widehat{\mathcal{T}}_k \widehat{Q}_k}.
\]
It follows that for any $t\in\left\{ t_k,\dots,t_{k+1}-1 \right\}$, 
\begin{align*}
	\widehat{V}_k(s_t)
	&\leq \left( M \widehat{Q}_k \right)(s_t) \\
	&= \widehat{Q}_k(s_t,a_t) \\
	&\leq \widehat{\mathcal{T}}_k \widehat{Q}_k(s_t,a_t) \\
	&= r(s_t,a_t) + \gamma \widehat{P}^k_{s_t,a_t}\widehat{V}_k + \gamma b_k\left( s_t,a_t,\widehat{V}_k \right),
\end{align*}
so the desired result holds.
\end{proof}

\subsection{Proof of Lemma \ref{lem:t_model}} \label{sec:t_model_proof}

\begin{proof}
As explained in the main section of the proof, we decompose
\begin{align*}
    \tmodel
	&= \sum_{k=1}^{m} \sum_{t=t_k}^{t_{k+1}-1}\left(\left( \widehat{P}_{s_t,a_t}^k - P_{s_t,a_t} \right)\widehat{V}_k + b_k\left(s_t,a_t, \widehat{V}_k\right) + (1-\gamma)\varepsilon_k\right) \\
	&= \sum_{k=1}^{m}\sum_{t=t_k}^{t_{k+1}-1}\bigg(\left( \widehat{P}_{s_t,a_t}^k - P_{s_t,a_t} \right)\vstar + \left( P_{s_t,a_t}^k - P_{s_t,a_t} \right)\left( \widehat{V}_k - \vstar \right) \\
    & \qquad + b_k\left(s_t,a_t, \widehat{V}_k\right) + (1-\gamma)\varepsilon_k \bigg).
\end{align*}
We will bound each of the first three terms inside the sum separately under high probability events.
Recall that for ease of notation we denote $n_k(s,a) \coloneq \max\left\{ N_k(s,a),1 \right\}.$ 
Let $\mathcal{E}_1$ be the event that
\[
	\left| \widehat{P}^k_{s,a,s'} - P_{s,a,s'} \right| \leq \sqrt{\frac{2P_{s,a,s'}U}{n_k(s,a)}} + \frac{\ind\left(P_{s,a,s'} > 0\right)U}{3 n_k(s,a)} \qquad \forall (s,a,s',k)\in\St\times\A\times\St\times[m],
\]
and let $\mathcal{E}_2$ be the event that
\begin{equation} \label{eq:t_model_term1_bound}
	\left|\left(\widehat{P}_{s,a}^k - P_{s,a}\right) \vstar\right| \leq \sqrt{\frac{2\varsa{\vstar}U}{n_k(s,a)}} + \frac{HU}{3 n_k(s,a)} \qquad \forall (s,a,k)\in\St\times\A\times[m].
\end{equation}
We will later confirm that $\mathcal{E}_1$ and $\mathcal{E}_2$ are indeed high probability events.
Under event $\mathcal{E}_1$, for any $(s,a,k)\in \St\times\A\times[m]$ we have
\begin{equation} \label{eq:t_model_term2_bound}
\begin{aligned}[b]
&\left(\widehat{P}_{s,a}^k - P_{s,a}\right)\left(\widehat{V}_k - \vstar\right) \\
&= \sum_{s' \in \mathcal{S}} \left( \widehat{P}_{s,a,s'}^k - P_{s,a,s'} \right) \left( \widehat{V}_k(s') - \vstar(s') \right) \\
&\stackrel{(i)}{=} \sum_{s' \in \mathcal{S}} \left( \widehat{P}_{s,a,s'}^k - P_{s,a,s'} \right) \left( \widehat{V}_k(s') - \vstar(s') - P_{s,a}(\widehat{V}_k - \vstar) \right) \\
&\stackrel{(ii)}{\leq} \sum_{s'\in \mathcal{S}} \left| \sqrt{\frac{2 P_{s,a,s'} U}{n_k(s,a)}} + \frac{\ind\left(P_{s,a,s'}>0\right)U}{3n_k(s,a)} \right| \cdot \left| \widehat{V}_k(s') - \vstar(s') - P_{s,a}\left(\widehat{V}_k - \vstar\right) \right| \\ 
&\stackrel{(iii)}{\leq} \sum_{s'\in \mathcal{S}} \sqrt{\frac{2 P_{s,a,s'} U}{n_k(s,a)}} \cdot \left| \widehat{V}_k(s') - \vstar(s') - P_{s,a}\left(\widehat{V}_k - \vstar\right) \right| + \frac{2\Gamma HU}{3n_k(s,a)} \\
&\stackrel{(iv)}{\leq} \sqrt{\frac{2\Gamma U \sum_{s'\in\St} P_{s,a,s'} \left( \widehat{V}_k(s') - \vstar(s') - P_{s,a}\left( \widehat{V}_k - \vstar \right) \right)^2}{n_k(s,a)}} + \frac{2\Gamma HU}{3n_k(s,a)} \\
&= \sqrt{\frac{2\Gamma U \varsa{\widehat{V}_k - \vstar}}{n_k(s,a)}} + \frac{2\Gamma HU}{3n_k(s,a)}.
\end{aligned}
\end{equation}
Equality $(i)$ holds because $\sum_{s'} \left( \widehat{P}^k_{s,a,s'} - P_{s,a,s'} \right) c = 0$ for any $c \in \R$.
Inequality $(ii)$ holds under event $\mathcal{E}_1$.
We obtain inequality $(iii)$ by bounding $\left| \widehat{V}_k(s') - \vstar(s') - P_{s,a}\left( \widehat{V}_k - \vstar \right)\right| \leq 2H$ and summing over $s'$.
Inequality $(iv)$ follows from Cauchy-Schwarz. 
Moreover, under $\mathcal{E}_1$, for any $(s,a,k)\in\St\times\A\times[m]$ we have
\begin{equation} \label{eq:t_model_bonus_bound}
\begin{aligned}[b]
    b_k\left(s,a, \widehat{V}_k\right)
	&= \max\left\{4\sqrt{\frac{\V\left( \widehat{P}^k_{s,a},\widehat{V}_k \right)U}{n_k(s,a)}}, 32 \frac{HU}{n_k(s,a)}\right\} \\
	&\leq 4\sqrt{\frac{\V\left( \widehat{P}^k_{s,a},\widehat{V}_k \right)U}{n_k(s,a)}} + 32 \frac{HU}{n_k(s,a)} \\
	&\leq 4\sqrt{3/2} \sqrt{\frac{\V_{s,a}\left( \widehat{V}_k \right)U}{n_k(s,a)}} + 4\sqrt{4/3}\frac{\sqrt{\Gamma H^2U}}{n_k(s,a)} + 32 \frac{HU}{n_k(s,a)} \\
	&\leq 5 \sqrt{\frac{\V_{s,a}\left( \widehat{V}_k \right)U}{n_k(s,a)}} + 37 \frac{\Gamma HU}{n_k(s,a)},
\end{aligned}
\end{equation}
with the second inequality holding due to
\begin{align*}
\V\left( \widehat{P}^k_{s,a}, \widehat{V}_k \right) 
&= \sum_{s'\in \mathcal{S}}\widehat{P}^k_{s,a,s'} \left( \widehat{V}_k(s') - \widehat{P}^k_{s,a}\widehat{V}_k \right)^2 \\
&\stackrel{(i)}{\leq} \sum_{s'\in \mathcal{S}} \widehat{P}^k_{s,a,s'}\left( \widehat{V}_k(s') - P_{s,a}\widehat{V}_k \right)^2 \\
&\stackrel{(ii)}{\leq} \sum_{s' \in \mathcal{S}} \left( \frac{3}{2} P_{s,a,s'} + \frac{4\ind\left(P_{s,a,s'}>0\right)U}{3n_k(s,a)} \right) \left( \widehat{V}_k(s') - P_{s,a}\widehat{V}_k \right)^2 \\
&\leq \frac{3}{2} \V_{s,a}\left( \widehat{V}_k \right) + \frac{4\Gamma H^2 U}{3n_k(s,a)}. 
\end{align*}
Here, inequality $(i)$ is because $\mathbb{E}[X]$ minimizes $f(\lambda) = \mathbb{E}[(X-\lambda)^2]$,
and inequality $(ii)$ holds under $\mathcal{E}_1$.
Indeed, under $\mathcal{E}_1$, we have
\begin{align*}
& \left| \widehat{P}^k_{s,a,s'} - P_{s,a,s'} \right| \leq \sqrt{\frac{2P_{s,a,s'}\ind\left(P_{s,a,s'}>0\right)U}{n_k(s,a)}} + \frac{\ind\left(P_{s,a,s'}>0\right) U}{3n_k(s,a)} \\
&\implies \widehat{P}^k_{s,a,s'} - P_{s,a,s'} \leq \frac{1}{2}P_{s,a,s'} + \frac{\ind\left(P_{s,a,s'}>0\right)U}{n_k(s,a)} + \frac{\ind\left(P_{s,a,s'}>0\right)U}{3n_k(s,a)} \\
&\implies \widehat{P}^k_{s,a,s'} \leq \frac{3}{2}P_{s,a,s'} + \frac{4\ind\left(P_{s,a,s'}>0\right)U}{3n_k(s,a)},
\end{align*}
where the first implication holds because $\sqrt{ab} \leq \frac{a}{2} + \frac{b}{2}$ for $a,b\geq 0$.

Now, combining \eqref{eq:t_model_term1_bound}, \eqref{eq:t_model_term2_bound}, and \eqref{eq:t_model_bonus_bound} gives us that under $\mathcal{E}_1 \cap \mathcal{E}_2$, 
\[
	\tmodel \leq \bigo\left( \sum_{k=1}^{m} \sum_{t=t_k}^{t_{k+1}-1} \left(\sqrt{\frac{U\varstat{\vstar}}{n_k(s_t,a_t)}} + \sqrt{\frac{\Gamma U\V_{s_t,a_t}\left(\widehat{V}_k-\vstar\right)}{n_k(s_t,a_t)}} + \frac{\Gamma HU}{n_k(s_t,a_t)} + (1-\gamma)\varepsilon_k\right)\right).
\]
The following lemma, which we prove in Appendix~\ref{sec:wseq_proof}, allows us to easily bound the above.
\begin{lemma} \label{lem:wseq}
We have the following bounds.
\begin{enumerate}
	\item $\sum_{t=1}^{T} \frac{1}{n_{k_t}(s_t,a_t)} \leq SAU$.
	\item For nonnegative numbers $w_1,\dots,w_T$, we have $\sum_{t=1}^{T} \sqrt{\frac{w_t}{n_{k_t}(s_t,a_t)}} \leq \sqrt{SAU \sum_{t=1}^{T} w_t}$.
	\item $\sum_{t=1}^{T} (1-\gamma) \varepsilon_{k_t} \leq SAU$. 
\end{enumerate}
\end{lemma}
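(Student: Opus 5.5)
The plan is to prove the three parts in order, exploiting the doubling structure of the episodes. The key fact is that within episode $k$ the count of the pair $(s,a)$ satisfies $N(s,a) < 2N_k(s,a)$ when $N_k(s,a)\ge 1$. A new episode starts as soon as some count reaches a power of two, so during an episode no pair's running count can pass the next power of two above its frozen count.

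\textbf{Part 1.} For a fixed $(s,a)$, consider the visits at times $t$ with $(s_t,a_t)=(s,a)$, and let $j$ index the visit number.
\begin{itemize}
\item The first visit has $n_{k_t}=1$.
\item When the $j$-th visit ($j\ge 2$) occurs, the frozen count $N_{k_t}(s,a)$ is the largest power of two that is at most $j-1$. Hence $n_{k_t}(s,a)\ge (j-1)/2$, or more precisely $n_{k_t}\ge 2^{\lfloor\log_2(j-1)\rfloor}$.
\end{itemize}
Grouping visits into dyadic blocks $[2^i+1,2^{i+1}]$, each block contributes $2^i/2^i=1$. There are at most $\log_2 T+1$ blocks, so the total is $O(\log T)$ per pair.

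Summing over the $SA$ pairs gives $O(SA\log T)$. This is at most $SAU$ up to the constant, because $U=\log(9S^2AT/\delta)\ge \log T+\log 9$. I would tune the bookkeeping to get exactly $SAU$: count $1+\log_2 T$ blocks and use $\log_2 T\le U-1$, adjusting with $\ln$ versus $\log_2$ as needed, or absorb a constant since the bound is only used inside $O(\cdot)$.

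\textbf{Part 2.} This follows from Cauchy--Schwarz:
\[
\sum_t \sqrt{w_t/n_{k_t}}\le \sqrt{\textstyle\sum_t w_t}\,\sqrt{\textstyle\sum_t 1/n_{k_t}},
\]
and then Part 1 bounds the second factor.

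\textbf{Part 3.} Since $(1-\gamma)\varepsilon_{k}=1/t_k$ and $t_{k_t}\le t$, each term is at most $1/t_{k_t}$. I would group by episode: the sum equals $\sum_k (t_{k+1}-t_k)/t_k$.

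\textbf{Main obstacle.} The naive bound $\sum_k (t_{k+1}-t_k)/t_k$ can be large if a long episode starts at a small time. I expect this to be the hardest step. The first approach I would try is to bound episode lengths through the doubling rule, and I am not certain it works, because a single pair's count can stall while time advances.

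The fallback is to notice that $\varepsilon_k$ is defined via $t$ at the episode start. Early episodes then pay at most $\sum$ of lengths over starts. Since every time step up to $t$ belongs to some episode, pairing episodes with starting times should give a harmonic-type bound of $O(m)\le SAU$ whenever episode lengths are comparable to start times.

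If this fails, I would instead bound crudely by $\sum_t 1/t_{k_t}\le m\cdot\max_k(t_{k+1}-t_k)/t_k$ and seek a structural bound on the ratio. I would try to establish $t_{k+1}-t_k\le t_k$ via doubling, and this claim is the step I regard as least secure.
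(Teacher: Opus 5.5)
Your Parts 1 and 2 follow the paper's argument: dyadic grouping of visits under the doubling rule, then Cauchy--Schwarz combined with Part 1. One small correction: the frozen count $N_{k_t}(s,a)$ need not be a power of two, because episodes are also triggered by other pairs. Only your inequality $n_{k_t}(s,a)\ge 2^{\lfloor\log_2(j-1)\rfloor}$ holds, and that is all you use. The additive-constant slack between $2+\log_2 T$ and $U$ is also present in the paper's own version.

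The genuine gap is Part 3. You correctly reduce it to $\sum_k (t_{k+1}-t_k)/t_k$, but you never prove the episode-length bound this needs. The step you flag as least secure, $t_{k+1}-t_k\le t_k$, is exactly what is missing. Note also that $t_{k_t}\le t$ goes the wrong way: you need $t_{k_t}\ge (t+1)/2$, which is the same claim.

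The claim is true. The way to see it is to apply the doubling cap to all pairs simultaneously rather than to a single pair. Every step increments exactly one count, and during steps $t_k,\dots,t_{k+1}-2$ no count reaches a power of two. So $(s,a)$ is visited there at most $p-1-N_k(s,a)$ times, where $p$ is the smallest power of two exceeding $N_k(s,a)$. This is $0$ if $N_k(s,a)=0$ and at most $N_k(s,a)-1$ otherwise. Using $\sum_{s,a}N_k(s,a)=t_k-1$,
\[
t_{k+1}-t_k\;\le\;1+\sum_{(s,a)}\max\{N_k(s,a)-1,0\}\;\le\;t_k .
\]
A pair whose count ``stalls'' is simply not being visited, so it cannot lengthen the episode.

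With this bound, your crude estimate gives $\sum_k (t_{k+1}-t_k)/t_k\le m\le SAU$. Alternatively, $x\le\log_2(1+x)$ on $[0,1]$ makes the sum telescope to $\log_2(T+1)$. The paper takes the telescoping route, but it asserts $\frac{N_k}{t_k}\le\log\bigl(1+\frac{N_k}{t_k}\bigr)$ for the episode length $N_k=t_{k+1}-t_k$ without any ratio bound. That inequality is reversed, since $\log(1+x)\le x$; it holds, as $x\le\log_2(1+x)$, only when $N_k\le t_k$. So the bound you were missing is precisely what makes either route rigorous.
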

Setting $w_t = \V_{s_t,a_t}(\vstar)$, Part 2 of Lemma~\ref{lem:wseq} gives us that
\[
	\sum_{t=1}^{T} \sqrt{\frac{U\V_{s_t,a_t}(\vstar)}{n_{k_t}(s_t,a_t)}}
	\leq
	\sqrt{SAU^2 \varstar}.
\]
Next, setting $w_t = \V_{s_t,a_t}\left( \widehat{V}_k - \vstar \right)$, Part 2 of Lemma~\ref{lem:wseq} gives us that
\[
	\sum_{t=1}^{T} \sqrt{\frac{\Gamma U\V_{s_t,a_t}\left( \widehat{V}_k - \vstar \right)}{n_{k_t}(s_t,a_t)}}
	\leq
	\sqrt{\Gamma SAU^2 \vardiff}.
\]
Lastly, Parts 1 and 3 of Lemma~\ref{lem:wseq} give us that
\[
	\sum_{t=1}^{T} \left( \frac{\Gamma HU}{n_{k_t}(s_t,a_t)} + (1-\gamma)\varepsilon_{k_t} \right)
	\leq 2\Gamma HSAU^2.
\]
Combining these three bounds, we have that under $\mathcal{E}_1 \cap \mathcal{E}_2$,
\[
    \tmodel
	\leq
	\bigo\left( \sqrt{SAU^2 \varstar} + \sqrt{\Gamma SAU^2\vardiff} + \Gamma HSAU^2 \right).
\]

It remains to show that $\mathcal{E}_1$ and $\mathcal{E}_2$ hold with the claimed high probability.  
Starting with $\mathcal{E}_1$, fix $s,s'\in\St$ and $a\in\A$, and suppose $n_k(s,a) = n$ for some $n \geq 1$.   
Observe that $P^k_{s,a,s'} = \frac{1}{n}\sum_{i=1}^{n} Z_i$,
where $Z_1,\dots,Z_{n}$ are i.i.d. Bernoulli random variables with mean $P_{s,a,s'}$.
Denoting $Z$ to also be an independent Bernoulli random variable with mean  $P_{s,a,s'}$, we have with probability at least $1-\delta'$,
\[
	\left| \widehat{P}^k_{s,a,s'} - P_{s,a,s'} \right| \leq \sqrt{\frac{2\text{Var}(Z)\log(2 /\delta')}{n}} + \frac{\ind\left(P_{s,a,s'}>0\right)\log (2 / \delta')}{3n} \leq \sqrt{\frac{2P_{s,a,s'}U}{n}} + \frac{\ind\left(P_{s,a,s'}>0\right)U}{3n}  
,\] 
where the first inequality is due to Lemma~\ref{lem:bernstein} and the observation that $P_{s,a,s'} = 0 \implies \widehat{P}_{s,a,s'}^k = 0$, and the second inequality holds because $\text{Var}(Z) \leq \mathbb{E}[Z^2] = \mathbb{E}[Z] = P_{s,a,s'}$ and $\log(2 / \delta') \leq U$.
It follows from a union bound over possible $s,s',a,n$ gives us that $\mathcal{E}_1$ holds with probability at least $1 - S^2AT\delta'$.

Next, for $\mathcal{E}_2$, fix $s\in\St$ and $a\in\A$, and suppose $n_k(s,a) = n$ for some $n\geq 1$.
Observe that $\widehat{P}^k_{s,a}\vstar = \frac{1}{n} \sum_{i=1}^{n} Z_i$, where $Z_1,\dots,Z_n$ are i.i.d. multinoulli random variables that take the value $\vstar(s')$ with probability  $P_{s,a,s'}$ for each $s'$.
Letting $Z$ be i.i.d. from the same distribution as the  $Z_i$'s, we have that  $\mathbb{E}[Z] = P_{s,a}\vstar$ and 
\[
	\text{Var}(Z) = \mathbb{E}[Z^2] - (\mathbb{E}[Z])^2 = P_{s,a}(\vstar)^2 - (P_{s,a}\vstar)^2 = \V_{s,a}(\vstar)
.\]
Subsequently, Lemma \ref{lem:bernstein} gives us that with probability at least $1-\delta'$,
 \[
	 \left| P^t_{s,a}\vstar - P_{s,a}\vstar \right| \leq \sqrt{\frac{2\V_{s,a}(\vstar)U}{n}} + \frac{HU}{3n} 
,\]
so it follows from a union bound over possible $s,a,n$ that $\mathcal{E}_2$ holds with probability at least $1 - SAT\delta'$.
We conclude that $\mathcal{E}_1\cap \mathcal{E}_2$ occurs with probability at least $1-2S^2AT \delta'$, as desired. 
\end{proof}

\subsection{Proof of Lemma \ref{lem:var_bounds}} \label{sec:var_bounds_proof}

\begin{proof}
Our goal is to bound $\vardiff = \sum_{t=1}^{T} \V_{s_t,a_t}\left(\widehat{V}_{k_t} - \vstar\right)$.
We will proceed in a manner similar to the proof of Lemma~\ref{lem:var_star_bound}, where we bounded $\varstar$.
For ease of notation, write $D_k \coloneq \widehat{V}_k - \vstar$.
Then, set $\wt{D}_k \coloneq D_k - \min_s D_k(s)$ so that $\wt{D}_k \in \left[ 0,H \right]^{S}$ and $\vardiff = \sum_{t=1}^{T}\V_{s_t,a_t}\left( \wt{D}_{k_t} \right)$.  
We subsequently have
\begin{align*}
\sum_{t=1}^{T}  \V_{s_t,a_t}\left( \wt{D}_{k_t} \right) 
&= \sum_{k=1}^{m}\sum_{t=t_k}^{t_{k+1}-1} \left( P_{s_t,a_t}\left( \wt{D}_k \right)^2 - \left( P_{s_t,a_t} \wt{D}_k \right)^2 \right) \\
&=
\underbrace{\sum_{t=1}^{T} \left( P_{s_t,a_t}\left(\wt{D}_{k_t}\right)^2 - \left(\wt{D}_{k_t}(s_{t+1})\right)^2 \right)}_{\eqcolon \mathscr{T}_1^\text{diff}} \\
& \qquad\qquad\qquad +\underbrace{\sum_{k=1}^{m}\sum_{t=t_k}^{t_{k+1}-1} \left( \left(\wt{D}_{k}(s_{t})\right)^2 - \left(P_{s_t,a_t}\wt{D}_{k}\right)^2  \right)}_{\eqcolon \mathscr{T}_2^\text{diff}} \\
&\qquad\qquad\qquad + 
\underbrace{\sum_{k=1}^{m}\sum_{t=t_k}^{t_{k+1}-1} \left( \left(\wt{D}_k(s_{t+1})\right)^2 - \left(\wt{D}_k(s_t)\right)^2 \right)}_{\eqcolon \mathscr{T}_3^\text{diff}}.
\end{align*}
With probability at least $1 - 3T\delta'$, we have
\[
\mathscr{T}_1^\text{diff} 
\leq 2\sqrt{2}\sqrt{ \sum_{t=1}^{T} \V_{s_t,a_t}\left( \left(\wt{D}_{k_t}\right)^2 \right)U} + 4H^2U
\leq 4\sqrt{2}H\sqrt{\vardiff U} + 4H^2U
,\]
where the first inequality is by Lemma \ref{lem:mart_conc} and the second inequality is by Lemma \ref{lem:varsq}.

Next, we compute that
\begin{align*}
\mathscr{T}_2^\text{diff} 
&= \sum_{k=1}^{m}\sum_{t=t_k}^{t_{k+1}-1} \left( \left(\wt{D}_k(s_{t})\right)^2 - \left( P_{s_t,a_t}\wt{D}_k \right)^2 \right) \\
&\leq 2H \sum_{k=1}^{m}\sum_{t=t_k}^{t_{k+1}-1}  \max \left\{ \wt{D}_{k}(s_t) - P_{s_t,a_t}\wt{D}_k, 0 \right\} \\
&= 2H \sum_{k=1}^{m}\sum_{t=t_k}^{t_{k+1}-1} \max \left\{ D_k(s_t) - P_{s_t,a_t}D_k, 0 \right\} \\
&= 2H \sum_{k=1}^{m} \sum_{t=t_k}^{t_{k+1}-1} \max \left\{ \widehat{V}_k(s_t) - P_{s_t,a_t}\widehat{V}_k - \left( \vstar(s_t) - P_{s_t,a_t}\vstar \right), 0 \right\},
\end{align*}
with the inequality holding by $a^2 - b^2  \leq 2H\max \{a-b,0\}$ for $a, b \in \left[ 0,H \right]$.
Furthermore, we have
\begin{align}
	\widehat{V}_k(s_t) - P_{s_t,a_t}\widehat{V}_k
	&\leq r(s_t,a_t) + \gamma \widehat{P}^k_{s_t,a_t}\widehat{V}_k + \gamma b_k\left(s_t,a_t,\widehat{V}_k\right) - P_{s_t,a_t}\widehat{V}_k \nonumber \\
	&= r(s_t,a_t) - (1-\gamma) P_{s_t,a_t} \widehat{V}_k + \gamma\left( \widehat{P}^k_{s_t,a_t} - P_{s_t,a_t} \right) \widehat{V}_k + \gamma b_k\left(s_t,a_t,\widehat{V}_k\right) \label{eq:t2diff_emp_bellman}
\end{align}
as well as
\begin{equation} \label{eq:t2diff_bellman}
	\vstar(s_t) - P_{s_t,a_t}\vstar
	\geq
	r(s_t,a_t) - (1-\gamma) P_{s_t,a_t} \vstar.
\end{equation}
Continuing from above by plugging in \eqref{eq:t2diff_emp_bellman} and \eqref{eq:t2diff_bellman}, we have
\begin{align*}
\mathscr{T}_2^\text{diff}
&\leq 2H \sum_{k=1}^{m} \sum_{t=t_k}^{t_{k+1}-1} \max \left\{ \widehat{V}_k(s_t) - P_{s_t,a_t}\widehat{V}_k - \left( \vstar(s_t) - P_{s_t,a_t}\vstar \right), 0 \right\} \\
&\leq 2H \sum_{k=1}^{m} \sum_{t=t_k}^{t_{k+1}-1} \max \left\{ (1-\gamma) P_{s_t,a_t} \left( \vstar - \widehat{V}_k \right) + \gamma\left( \widehat{P}^k_{s_t,a_t} - P_{s_t,a_t} \right)\widehat{V}_k + \gamma b_k\left(s_t,a_t,\widehat{V}_k\right), 0\right\} \\
&\stackrel{(i)}{\leq} 2H \sum_{k=1}^{m} \sum_{t=t_k}^{t_{k+1}-1}\left( \left( P^k_{s_t,a_t} - P_{s_t,a_t} \right)\widehat{V}_k + b_k\left(s_t,a_t,\widehat{V}_k\right) + (1-\gamma)\varepsilon_k\right)\\
&= 2H \tmodel.
\end{align*}
Note that inequality $(i)$ holds under Lemma \ref{lem:optimism}, in which case $(1-\gamma)P_{s_t,a_t}\left( \vstar - \widehat{V}_k \right) \leq (1-\gamma)\max_s\left( \vstar(s) - \widehat{V}_k(s) \right) \leq (1-\gamma)\varepsilon_k$. 

Furthermore, we compute
\begin{align*}
\mathscr{T}_3^\text{diff}
&= \sum_{k=1}^{m}\sum_{t=t_k}^{t_{k+1}-1} \left( \left(\wt{D}_k(s_{t+1})\right)^2 - \left(\wt{D}_k(s_t)\right)^2 \right) \\
&= \sum_{k=1}^{m} \left( \left( \wt{D}_k(s_{t_{k+1}}) \right)^2 - \left( \wt{D}_k(s_{t_k}) \right)^2 \right) \\
&\leq mH^2 \\
&\leq H^2SAU.
\end{align*}

Recombining terms, we have with probability $1 - 3T\delta'$ that
\[
\vardiff \leq \mathscr{T}_1^\text{diff} + \mathscr{T}_2^\text{diff} + \mathscr{T}_3^\text{diff} 
\leq \bigo\left(H\sqrt{U} \sqrt{\vardiff} + H \tmodel + H^2SAU \right) 
,\]
which by Lemma \ref{lem:ab} implies
\[
\vardiff \leq \bigo \left( H \tmodel + H^2SAU \right)
.\]
\end{proof}

\subsection{Proof of Lemma~\ref{lem:tkhat_properties}} \label{sec:tkhat_properties_proof}

\begin{proof}
Let $\mathcal{E}$ be the event that 
\[
	\left| \left(\widehat{P}^k_{s,a} - P_{s,a}\right)\vstar  \right| \leq 2 \sqrt{\frac{\V\left(\widehat{P}^k_{s,a}, \vstar\right)U}{n_k(s,a)}} + \frac{14HU}{3n_k(s,a)} \qquad \forall (s,a,k)\in\St\times\A\times[m].
\]

We restate an alternate, stronger version of Lemma~\ref{lem:tkhat_properties}, which we will prove instead.
Afterwards, we will complete the proof of Lemma~\ref{lem:tkhat_properties} by showing that $\mathcal{E}$ holds with the claimed high probability. 
\begin{lemma} \label{lem:tkhat_appendix}
    The following hold for all $k\in[m]$.
	\begin{enumerate}
		\item $\widehat{\mathcal{T}}_k$ satisfies the constant shift property: for any $Q\in\R^{\St\times\A}$ and $c \in \R$, we have $\widehat{\mathcal{T}}_k (Q + c \mathbf{1}) = \widehat{\mathcal{T}}_k Q + \gamma c \mathbf{1}$. 
		\item $\widehat{\mathcal{T}}_k$ is monotonic: for any $Q,Q'\in\R^{\St\times\A}$ such that $MQ \geq MQ'$, we have $\widehat{\mathcal{T}}_k Q \geq \widehat{\mathcal{T}}_k Q'$. 
		\item $\widehat{\mathcal{T}}_k$ is a $\gamma$-contraction: for any $Q,Q'\in\R^{\St\times\A}$, we have $\left\| \widehat{\mathcal{T}}_k Q - \widehat{\mathcal{T}}_k Q' \right\|_\infty \leq \gamma \|Q - Q'\|_\infty$.    
		\item Under the event $\mathcal{E}$, we also have $\qstar \leq \widehat{\mathcal{T}}_k \qstar$. 
	\end{enumerate}
\end{lemma}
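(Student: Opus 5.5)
We take the four items of Lemma~\ref{lem:tkhat_appendix} in order, since each later item uses the earlier ones.

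\emph{Constant shift.} This is direct. $M(Q+c\mathbf{1}) = MQ + c\mathbf{1}$, and $\clipH$ commutes with adding a constant because both $V(s)$ and $\min_{s'}V(s')+H$ shift by $c$. The variance functional is shift invariant, so $\V(\widehat{P}^k_{s,a}, V+c\mathbf{1}) = \V(\widehat{P}^k_{s,a},V)$ and the bonus is unchanged. Since $\widehat{P}^k_{s,a}$ is a probability vector, the only change is $\gamma c$ from the term $\gamma\widehat{P}^k_{s,a}(V+c\mathbf{1})$.

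\emph{Monotonicity.} If $MQ\ge MQ'$, then $V:=\clipH(MQ)\ge V':=\clipH(MQ')$ pointwise, since $\clipH$ is a minimum of monotone maps. It remains to show that $V\mapsto \widehat{P}^k_{s,a}V + b_k(s,a,V)$ is nondecreasing on vectors of span at most $H$. Both $V$ and $V'$ have span at most $H$, but we need a common box $[0,2C]^S$ to apply Lemma~\ref{lem:monotonicity}. By the shift property, subtract $\min_s V'(s)$ from both. Then $V'\in[0,H]$, and $V\ge V'\ge 0$ with $\min V\le \min V' + \|MQ-MQ'\|$, which may be large. To handle this, I would interpolate: set $V_\lambda=V'+\lambda(V-V')$, or move along a chain. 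Alternatively, use that $b$ depends only on the shape of the vector. Writing $V = (V-\min V)+\min V$ with $\min V\ge \min V'=0$, the constant part only increases $\widehat{P}V$. So it suffices to compare $V-\min V$ with $V'$; the trouble is that the first need not dominate the second.

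The cleaner route is to prove monotonicity directly on $\{V:\tspannorm{V}\le H\}$. Take $v\ge v'$ both of span at most $H$. Consider the segment between them, rather than $\min(v, v'+\ldots)$. On the segment the span stays at most $H$, since span is a seminorm and therefore convex. Every point on the segment, shifted by the minimum of $v'$, lies in a box of width $2H$, so Lemma~\ref{lem:monotonicity} applies locally with $C=H$, $c_1=4$, $c_2=32$ (and $2c_1^2=32\le c_2$). The derivative of $f$ in the direction $v-v'\ge 0$ is then nonnegative everywhere on the segment, which gives $f(v)\ge f(v')$. Verifying that the local shifted box condition holds along the whole segment is the step I expect to be fiddliest.

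\emph{Contraction.} Let $d=\|Q-Q'\|_\infty$. Then $MQ'-d\mathbf{1}\le MQ\le MQ'+d\mathbf{1}$, i.e.\ $M(Q'-d\mathbf{1})\le MQ \le M(Q'+d\mathbf{1})$. Monotonicity together with the constant shift property gives $\widehat{\mathcal{T}}_kQ'-\gamma d\mathbf{1}\le \widehat{\mathcal{T}}_kQ\le \widehat{\mathcal{T}}_kQ'+\gamma d\mathbf{1}$.

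\emph{Optimism $\qstar\le\widehat{\mathcal{T}}_k\qstar$.} Since $H\ge\tspannorm{\vstar}$, we have $\clipH(M\qstar)=\vstar$. It therefore suffices to show $P_{s,a}\vstar\le \widehat{P}^k_{s,a}\vstar + b_k(s,a,\vstar)$. Under $\mathcal{E}$, the gap is at most $2\sqrt{\V(\widehat{P}^k_{s,a},\vstar)U/n}+\tfrac{14HU}{3n}$. This is at most $\max\{4\sqrt{\cdot},32HU/n\}$, because $x+y\le 2\max\{x,y\}$ and $2\cdot\tfrac{14}{3}<32$. When $N_k=0$, the bonus $32HU\ge H\ge \tspannorm{\vstar}$ already covers the gap.

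Finally, $\mathcal{E}$ follows from Lemma~\ref{lem:empirical_bernstein} applied to the samples $\vstar(s')$, with range $\tspannorm{\vstar}\le H$. We use $n-1\ge n/2$ for $n\ge2$; the case $n=1$ is trivial since the right side is at least $H$. A union bound over $(s,a)$ and the at most $T$ possible counts gives failure probability at most $SAT\delta'$.
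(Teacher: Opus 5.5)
Your arguments for the constant shift, the contraction, and $\qstar\le\widehat{\mathcal{T}}_k\qstar$ (including the probability of $\mathcal{E}$) match the paper's. Your explicit check $2\cdot\frac{14}{3}<32$ and your handling of the $N_k(s,a)=0$ case spell out what the paper leaves implicit.

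The gap is in monotonicity. Your segment argument relies on the claim that each $v_\lambda=v'+\lambda(v-v')$, shifted by $\min_s v'(s)$, lies in the box $[0,2H]^{\St}$ required by Lemma~\ref{lem:monotonicity}. This fails as soon as $v-v'$ is large: for $v=v'+3H\mathbf{1}$, the shifted vector at $\lambda=1$ has all entries in $[3H,4H]$. The step ``the directional derivative is nonnegative everywhere, hence $f(v)\ge f(v')$'' is also not justified as written. Lemma~\ref{lem:monotonicity} is a comparison statement, not a derivative bound, and $f$ has kinks where the two branches of the max cross.

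Your route can be repaired without derivatives by a finite chain. Choose $0=\lambda_0<\dots<\lambda_N=1$ with $(\lambda_{j+1}-\lambda_j)\|v-v'\|_\infty\le H$, and shift each consecutive pair by $\min_s v_{\lambda_j}(s)$. Since the span is convex, $\tspannorm{v_{\lambda_j}}\le H$, so both shifted vectors lie in $[0,2H]^{\St}$ and are correctly ordered. Because $f(p,w+c\mathbf{1},n,u)=f(p,w,n,u)+c$, these comparisons chain to $f(v)\ge f(v')$.

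The paper needs none of this; what you are missing is the right constant to subtract. You correctly noticed that subtracting $\min_s V(s)$ from $V$ can destroy the dominance over $V'$. The paper instead subtracts $\beta\coloneq\min_s\bigl(V(s)-V'(s)\bigr)\ge 0$, on top of the common shift $\alpha\coloneq\min_s V'(s)$.

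By definition of $\beta$, $V-(\alpha+\beta)\mathbf{1}\ge V'-\alpha\mathbf{1}\ge\zero$. If $s_0$ attains $\beta$, then $\min_s V(s)\le V(s_0)=V'(s_0)+\beta\le\alpha+H+\beta$. Combined with $\tspannorm{V}\le H$, this gives $V-(\alpha+\beta)\mathbf{1}\le 2H\mathbf{1}$.

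Both vectors thus lie in $[0,2H]^{\St}$, and Lemma~\ref{lem:monotonicity} applies directly with $C=H$, $c_1=4$, $c_2=32$. Undoing the shifts gives $\widehat{\mathcal{T}}_kQ\ge\widehat{\mathcal{T}}_kQ'+\gamma\beta\ge\widehat{\mathcal{T}}_kQ'$.
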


We remark that while we call Part 2 the monotonicity property, it is actually stronger.
Indeed, for $\widehat{\mathcal{T}}_k Q \geq \widehat{\mathcal{T}}_k Q'$ to hold, we only need $MQ \geq MQ'$, which is a weaker requirement than $Q \geq Q'$.
This version will be useful in proving Part 3.
Now, let $k\in[m]$ be arbitrary.

To show Part 1 of Lemma~\ref{lem:tkhat_appendix} (constant shift), let $Q \in \R^{\St\times\A}$ and $c\in\R$ be arbitrary.
Since $\clipH(M(Q + c\mathbf{1})) = \clipH(MQ + c\mathbf{1}) = \clipH(MQ) + c\mathbf{1}$, a straightforward computation gives us
\begin{align*}
	\left(\widehat{\mathcal{T}}_k (Q + c\mathbf{1})\right)(s,a)
	&= r(s,a) + \gamma \widehat{P}^k_{s,a} \left( \clipH(M(Q + c\mathbf{1})) \right) + \gamma b_k\left( s,a, \clipH(M(Q+c\mathbf{1})) \right) \\
	&= r(s,a) + \gamma \widehat{P}^k_{s,a} \left( \clipH(MQ) \right) + \gamma \widehat{P}^k_{s,a} (c\mathbf{1}) + \gamma b_k\left( s,a, \clipH(MQ) \right) \\
	&= \widehat{\mathcal{T}}_k Q + \gamma c\mathbf{1}.
\end{align*}

To show Part 2 of Lemma~\ref{lem:tkhat_appendix} (monotonicity), let $Q, Q' \in \R^{\St\times\A}$ be such that $MQ \geq MQ'$. 
Setting $\alpha \coloneq \min_s (\clipH(MQ'))(s)$ and $\beta \coloneq \min_s \left( (\clipH(MQ))(s) - (\clipH(MQ'))(s) \right)$.
Observe that $\beta > 0, \clipH(MQ')-\alpha\mathbf{1} \in \left[ 0,H \right]^\St, \clipH(MQ) -\alpha\mathbf{1} - \beta\mathbf{1} \in \left[ 0,2H \right]^\St$ and $\clipH(MQ) -\alpha\mathbf{1} - \beta\mathbf{1} \geq \clipH(MQ') -\alpha\mathbf{1}$.
Using $f(p,v,n,u)$ as defined in Lemma~\ref{lem:monotonicity}, we have
\begin{align*}
	&\left(\widehat{\mathcal{T}}_k Q\right)(s,a) \\
	&= r(s,a) + \gamma \widehat{P}^k_{s,a}\left( \clipH(MQ) \right) + \gamma b_k\left( s,a, \clipH(MQ) \right) \\
	&= r(s,a) + \gamma \widehat{P}^k_{s,a} \left( \clipH(MQ) - \alpha\mathbf{1} - \beta\mathbf{1} \right) + \gamma b_k\left(s,a, \clipH(MQ) - \alpha\mathbf{1} - \beta\mathbf{1}\right) + \gamma\alpha + \gamma\beta \\
	&= r(s,a) + \gamma f\left( \widehat{P}^k_{s,a}, \clipH(MQ) - \alpha\mathbf{1} - \beta\mathbf{1}, n_k(s,a), U \right) + \gamma \alpha + \gamma \beta \\
	&\geq r(s,a) + \gamma f\left( \widehat{P}^k_{s,a}, \clipH(MQ') - \alpha\mathbf{1}, n_k(s,a), U \right) + \gamma \alpha + \gamma \beta\\
	&= r(s,a) + \gamma \widehat{P}^k_{s,a} (\clipH(MQ')) + \gamma b_k\left( s,a,\clipH(MQ') \right) + \gamma \beta \\
	&= \left( \widehat{\mathcal{T}}_k Q'\right)(s,a) + \gamma \beta \\
	&\geq \left( \widehat{\mathcal{T}}_k Q' \right)(s,a),
\end{align*}
where the first inequality is due to Lemma~\ref{lem:monotonicity}.

To show Part 3 of Lemma~\ref{lem:tkhat_appendix} ($\gamma$-contraction), let $Q,Q'\in \R^{\St\times\A}$ be arbitrary.
Since $MQ \leq MQ' + \| MQ - MQ' \|_\infty \mathbf{1} = M\left( Q' + \|MQ - MQ' \|_\infty \mathbf{1} \right)$, we have by the monotonicity and constant shift properties of $\widehat{\mathcal{T}}_k$ that
\[
    \widehat{\mathcal{T}}_k Q
	\leq \widehat{\mathcal{T}}_k \left( Q' + \| MQ - MQ' \|_\infty \mathbf{1} \right)
	\leq \widehat{\mathcal{T}}_k Q' + \gamma \| MQ - MQ' \|_\infty
	\leq \widehat{\mathcal{T}}_k Q' + \gamma \| Q - Q' \|_\infty.
\]
Rearranging gives us
\[
    \widehat{\mathcal{T}}_k Q - \widehat{\mathcal{T}}_k Q' \leq \gamma \| Q - Q' \|_\infty.
\]
Reversing the roles of $Q$ and $Q'$ in the above, we also have
\[
    \widehat{\mathcal{T}}_k Q' - \widehat{\mathcal{T}}_k Q \leq \gamma \| Q - Q' \|_\infty,
\]
and combining this with the above, we conclude that
\[
    \left\| \widehat{\mathcal{T}}_k Q - \widehat{\mathcal{T}}_k Q' \right\|_\infty \leq \gamma \| Q - Q' \|_\infty.
\]

Finally, we show Part 4 of Lemma~\ref{lem:tkhat_appendix}.
For any $(s,a)\in\St\times\A$, we have 
\begin{align*}
	\qstar(s,a)
	&= r(s,a) + \gamma P_{s,a} \vstar \\
	&= r(s,a) + \gamma \widehat{P}^k_{s,a} \vstar + \gamma \left( P_{s,a} - \widehat{P}^k_{s,a} \right) \vstar \\
	&\leq r(s,a) + \gamma \widehat{P}^k_{s,a} \clipH (M \qstar) + \gamma b_k\left( s, a,  \vstar \right) \\
	&= \widehat{\mathcal{T}}_k \qstar,
\end{align*}
where the inequality holds under $\mathcal{E}$. 

It remains to show that $\mathcal{E}$ occurs with high probability.
Fix $s\in\St$ and $a\in\A$, and suppose $n_k(s,a)=n$ for some $n\geq 2$.
Note that the $n_k(s,a) = 1$ case trivially always holds. 

Observe that $P^k_{s,a}\vstar = \frac{1}{n} \sum_{i=1}^{n} Z_i$, where $Z_1,\dots,Z_n$ are i.i.d. multinoulli random variables that take the value $\vstar(s')$ with probability  $P_{s,a,s'}$ for each $s'$.
Letting $Z$ be i.i.d. from the same distribution as the  $Z_i$'s, we have that  $\mathbb{E}[Z] = P_{s,a}\vstar$ and in the notation of Lemma~\ref{lem:empirical_bernstein},
\[
	\widehat{\text{Var}}_n = \V\left( \widehat{P}^k_{s,a}, \vstar \right)
.\]
Subsequently, Lemma \ref{lem:empirical_bernstein} gives us that with probability at least $1-\delta'$,
\[
	\left| \widehat{P}^k_{s,a}\vstar - P_{s,a}\vstar \right| 
	\leq 
	\sqrt{\frac{2\V \left( \widehat{P}^k_{s,a}, \vstar \right)\log(2/\delta')}{n-1}} + \frac{7H \log(2/\delta')}{3(n-1)} 
	\leq 
	2\sqrt{\frac{\V\left( \widehat{P}^k_{s,a}, \vstar \right) U}{n}} + \frac{14HU}{3n} 
,\]
where the second inequality follows from the fact that $\frac{1}{x-1} \leq \frac{2}{x}$ for $x\geq 2$, as well as the fact that $\log(2/\delta') \leq U$. 

Taking a union bound over all possible $s,a,n$ thus gives us that $\mathcal{E}$ holds with probability at least $1 - SAT\delta'$.
\end{proof}

\subsection{Proof of Lemma~\ref{lem:wseq}} \label{sec:wseq_proof}

\begin{proof}
We start by proving Part 1, that $\sum_{t=1}^{T} \frac{1}{n_{k_t}(s_t,a_t)} \leq SAU$.
Observe that
\[
	\sum_{t=1}^{T} \frac{1}{n_{k_t}(s_t,a_t)}
	=
	\sum_{t=1}^{T} \sum_{s,a} \frac{\ind\left( (s,a)=(s_t,a_t) \right)}{n_{k_t}(s,a)}
	=
	\sum_{s,a}\sum_{t=1}^{T} \frac{\ind\left( (s,a)=(s_t,a_t) \right)}{n_{k_t}(s,a)}.
\]
Now, fix $(s,a)\in\St\times\A$.
We have
\begin{align*} 
	\sum_{t=1}^{T} \frac{\ind\left( (s,a)=(s_t,a_t) \right)}{n_{k_t}(s,a)}
	&= \sum_{t=1}^{T} \sum_{i=0}^{\imax} \frac{\ind\left( (s,a)=(s_t,a_t),2^i\leq n_{k_t}(s,a)\leq 2^{i+1}-1 \right)}{2^i} \\
	&= \sum_{i=0}^{\imax} \sum_{t=1}^{T} \frac{\ind\left( (s,a)=(s_t,a_t),2^i\leq n_{k_t}(s,a)\leq 2^{i+1}-1 \right)}{2^i} \\
	&\leq \sum_{i=0}^{\imax} \frac{2^i}{2^i} \\
	&= \imax +1 \leq U,
\end{align*}
where the first inequality holds due to the doubling trick.
Namely,
\[
	\sum_{t=1}^{T} \ind\left( (s_t,a_t)=(s,a), 2^i \leq n_{k_t}(s,a) \leq 2^{i+1}-1 \right) \leq 2^i
.\] 
Summing over all possible $(s,a)$ gives us the desired bound. 

We proceed to proving Part 2.
Let $w_1,\dots,w_T \in\R$ be nonnegative numbers. 
\begin{align*}
\sum_{t=1}^{T} \sqrt{\frac{w_t}{n_{k_t}(s_t,a_t)}}
&\leq \sqrt{\sum_{t=1}^{T} \frac{1}{n_{k_t}(s_t,a_t)}} \sqrt{\sum_{t=1}^{T} w_t} \\
&\leq \sqrt{SAU \sum_{t=1}^{T} w_t}.
\end{align*}
The first inequality is by Cauchy-Schwarz, and the second inequality is by Part 1 above.

Lastly, we prove Part 3.
We have
\[
	\sum_{k=1}^{m} \sum_{t=t_k}^{t_{k+1}-1} (1-\gamma)\varepsilon_k
	=
	\sum_{k=1}^{m} \sum_{t=t_k}^{t_{k+1}-1} \frac{1}{t_k}
	=
	\sum_{k=1}^{m} \frac{N_k}{t_k}.
\]
Now, for each $k\in[m]$, it holds that $t_k = 1+\sum_{\ell=1}^{k-1} N_\ell$.
Continuing, we subsequently have
\begin{align*}
    \sum_{k=1}^{m} \frac{N_k}{t_k}
	&= \sum_{k=1}^{m} \frac{N_k}{1 + \sum_{\ell=1}^{k-1} N_\ell } \\
	&\leq \sum_{k=1}^{m }\log\left( 1 + \frac{N_k}{1+\sum_{\ell=1}^{k-1}N_\ell}\right) \\
	&= \sum_{k=1}^{m} \log\left( \frac{1 + \sum_{\ell=1}^{k} N_\ell}{1 + \sum_{\ell=1}^{k-1} N_\ell} \right) \\
	&= \sum_{k=1}^{m} \left( \log\left( 1 + \sum_{\ell=1}^{k} N_\ell \right) - \log\left( 1 + \sum_{\ell=1}^{k-1} N_\ell \right) \right) \\
	&= \log\left( 1 + \sum_{\ell=1}^{m} N_\ell \right) - \log( 1 ) \\
	&= \log\left( 1 + T \right) \\
	&\leq U.
\end{align*}
\end{proof}

\section{Proof of Corollary \ref{cor:regret_no_prior}} \label{sec:regret_no_prior_proof}
Here we prove Corollary \ref{cor:regret_no_prior}. 

\begin{proof}

    After applying Theorem \ref{thm:var_dependent_reg} with span bound $H=\sqrt{\frac{T}{S^3A}}$ and combining with Lemma \ref{lem:var_star_bound} and the fact that $\spannorm{\vstar_\gamma}\leq  2\spannorm{\hstar} $ \citep{wei_model-free_2020} to bound the variance parameter (and taking a union bound to get a failure probability of at most $2\delta$ that these do not both hold) all that remains is to confirm that the resulting regret bound of
    \begin{align}
        \reg(T) \leq \tbigo\Big( \sqrt{\left(\spannorm{\hstar} T + \spannorm{\hstar}^2 \right)SA} + \sqrt{SAT}  \Big) \label{eq:cor_7_pf_1}
    \end{align}
    whenever $T \geq \spannorm{\hstar}^2 S^3 A$
    implies the claim that $\reg(T) \leq \tbigo\left(\sqrt{(\tspannorm{\hstar}+1)SAT}\right)$ for $T \geq \spannorm{\hstar}^2 S^3 A$.

    Observe that $T \geq \spannorm{\hstar}^2 S^3 A \geq \spannorm{\hstar}^2$ implies that $\spannorm{\hstar}\sqrt{SA} \leq \sqrt{SAT} $, so~\eqref{eq:cor_7_pf_1} simplifies to a bound of
    \begin{align*}
        \reg(T) \leq \tbigo\Big( \sqrt{\spannorm{\hstar} SAT } + \sqrt{SAT}  \Big)
        =
        \tbigo\Big( \sqrt{\left(\spannorm{\hstar} + 1\right) SAT }  \Big) 
    \end{align*}
    whenever $T \geq \spannorm{\hstar}^2 S^3 A$, as required.
    
    Note that the second regret bound stated in Corollary~\ref{cor:regret_no_prior} follows by considering cases.
    Indeed, for $T\leq \spannorm{\hstar}^2 S^3 A$, the regret can be bounded by $T$, while for $T \geq \spannorm{\hstar}^2 S^3 A$ we have just shown that the regret is bounded by $\tbigo
    \Big(\sqrt{(\tspannorm{\hstar}+1)SAT}\Big)$.
    So for all $T$, the regret is bounded by $\tbigo\Big(\sqrt{(\tspannorm{\hstar}+1)SAT} + \spannorm{\hstar}^2 S^3 A$\Big).
\end{proof}

\section{Proof of Theorem~\ref{thm:burn_in_lb}} \label{sec:burn_in_lb_proof}

\begin{proof}
Let $S \geq 2$ and $A \geq 2$ be integers, and let $D \geq 4\left\lceil \log_A S \right\rceil$.
Suppose that $T \leq \frac{1}{32}DSA$.

Let $S' = \lceil \frac{S-1}{2} \rceil$ and $A' = A-1$, and observe that $T \leq \frac{1}{8}DS'A'$ and $2\left\lceil \log_{A}S \right\rceil \leq \frac{D}{2}$ (we will need these facts later). 
We construct a family of hard MDPs $\{P_{(s,a)} \mid (s,a)\in[S']\times[A'] \}$ such that each MDP has $S$ states, $A$ actions, and diameter at most $D$ (see Figure \ref{fig:lb_construction}).
First, we use an $A$-ary tree structure to connect states $1,\dots,S-1$ by assigning $A$ actions with deterministic transitions at each non-leaf state.
In particular, each action goes to a distinct child node.
If a non-leaf node has fewer than $A$ children, the remaining actions are deterministic self-loops.
Note that we can (and do) choose a tree with depth at most $\lceil \log_A S \rceil$ and at least $S'$ leaf nodes.
Next, we let state $S$ be the ``good'' state, where all actions are deterministic self-loops with reward 1, except for one action which deterministically returns to the root of the tree.
At every other node the reward for any action is 0, so the learner's goal is to reach state $S$.

To reach state $S$, the learner must search for the correct action in the correct leaf node, which is different in each MDP instance.
In the MDP $P_{(s,a)}$, the correct state-action pair is $(s,a)$.
That is, at all leaf nodes except state $s$, $A'$ actions are deterministic self-loops, and the remaining action is a deterministic transition back to the root.
State $s$ is identical except that action $a$ transits to the good state with probability $\frac{2}{D}$, and stays in its current state with probability $1-\frac{2}{D}$.
Note that the diameter of this MDP is $2\left\lceil \log_{A}S \right\rceil + \frac{D}{2} \leq D$ as required.

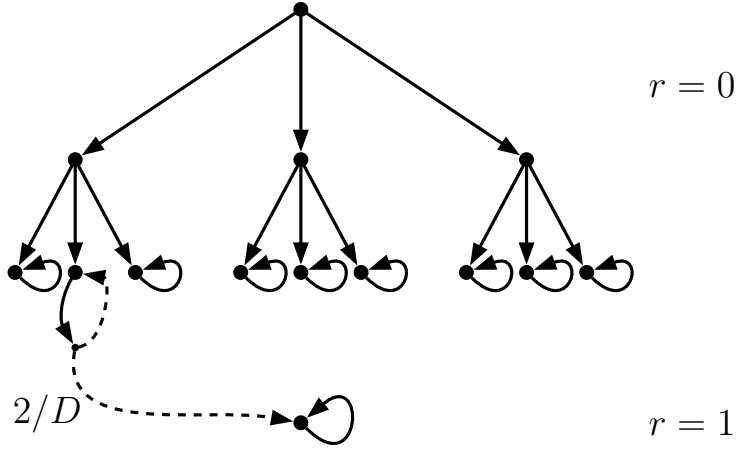
\begin{figure}
\centering
\begin{minipage}{1\textwidth}
\centering
\begin{tikzpicture}[
    scale=1,
    node distance=1.5cm,
    dot/.style={circle, fill=black, inner sep=2pt, outer sep=0pt},
    edge/.style={draw=black, line width=1.2pt, ->, -{Triangle[length=3mm, width=2mm]}, line cap=round},
    loop edge/.style={draw=black, line width=1.2pt, ->, -{Triangle[length=3mm, width=2mm]}, looseness=20, min distance=8mm},
    dashed edge/.style={draw=black, dashed, line width=1.2pt, ->, -{Triangle[length=3mm, width=2mm]}},
    font=\sffamily\bfseries\Large
]

    
    \node[dot] (root) at (0,0) {};
    
    \node[dot] (L1) at (-3, -2) {};
    \node[dot] (L2) at (0, -2) {};
    \node[dot] (L3) at (3, -2) {};
    
    \node[dot] (L1C1) at (-3.8, -3.5) {};
    \node[dot] (L1C2) at (-3.0, -3.5) {}; 
    \node[dot] (L1C3) at (-2.2, -3.5) {};

    \node[dot, inner sep=1pt] (action) at (-3.0, -4.5) {};
    
    \node[dot] (L2C1) at (-0.8, -3.5) {};
    \node[dot] (L2C2) at (0, -3.5) {};
    \node[dot] (L2C3) at (0.8, -3.5) {};
    
    \node[dot] (L3C1) at (2.2, -3.5) {};
    \node[dot] (L3C2) at (3.0, -3.5) {};
    \node[dot] (L3C3) at (3.8, -3.5) {};
    
    \node[dot] (R1) at (0, -5.5) {};

    \draw[edge] (root) -- (L1);
    \draw[edge] (root) -- (L2);
    \draw[edge] (root) -- (L3);
    
    \draw[edge] (L1) -- (L1C1);
    \draw[edge] (L1) -- (L1C2);
    \draw[edge] (L1) -- (L1C3);
    
    \draw[edge] (L2) -- (L2C1);
    \draw[edge] (L2) -- (L2C2);
    \draw[edge] (L2) -- (L2C3);
    
    \draw[edge] (L3) -- (L3C1);
    \draw[edge] (L3) -- (L3C2);
    \draw[edge] (L3) -- (L3C3);

    \foreach \n in {L1C1, L1C3, L2C1, L2C2, L2C3, L3C1, L3C2, L3C3} {
        \draw[loop edge, looseness=18, min distance=5mm] (\n) to [out=315, in=20] (\n);
    }
    
    \draw[loop edge] (R1) to [out=315, in=40] (R1);

    \draw[edge, draw=black] (L1C2) to [out=-120, in=120] (action);
    \draw[dashed edge] (action) to [out=10, in=-10] (L1C2);
    \draw[dashed edge] (action) to [out=-100, in=170] 
        node[midway, left=7mm, black] {$2/D$} (R1);
     
    \node[anchor=west] at (4.5, -1) {$r=0$};
    \node[anchor=west] at (4.5, -5.5) {$r=1$};

\end{tikzpicture}
\end{minipage}

\caption{An example of a hard MDP construction for $S=14$ and $A=3$. 
To avoid clutter, we omit an additional deterministic self-loop at each leaf state.
We also omit the deterministic actions which transit from the leaf states to the root and from the good state to the root, as these actions only serve to keep the diameter bounded by $D$.}
\label{fig:lb_construction}
\end{figure}

Now fix any horizon-$T$ algorithm.
For any $\theta\in[S']\times[A']$, write
$\mathbb{E}_\theta$ to denote the expectation induced by the algorithm when the underlying MDP is $P_\theta$, and write $\P_a$ for the corresponding probability.
For each $(s,a)$, let the random variable $N_{(s,a)}$ be the number of times action $a$ is taken in state $s$, and let $N$ be the total number of visits to the bad states with reward 0.
We observe that $\reg(T) \geq N$, so our goal is to lower bound $\E_\theta[N]$ for some $\theta$.

Let $\mathcal{E}$ denote the event that the learner does \textbf{not} observe a transition to the good state.
Further let $\theta\in[S']\times[A']$.
We start with the simple observation that since,
\[
	T 
    \geq 
    \E_\theta[N \mid \mathcal{E}] 
    \geq 
    \sum_{(s,a)\in[S']\times[A']} \E_{\theta}[N_{(s,a)} \mid \mathcal{E}],
\]
it follows that for some $(s',a')$ we have $\E_\theta[N_{(s',a')} \mid \mathcal{E}] \leq \frac{T}{S'A'} \leq \frac{D}{8}$.
We then claim that 
\[
\E_{(s',a')}[N_{(s',a')}]
\leq
\E_{(s',a')}[N_{(s',a')} \mid \mathcal{E}]
=
\E_{\theta}[N_{(s',a')} \mid \mathcal{E}] 
\leq 
\frac{D}{8}.
\]
The first inequality holds because we can assume WLOG that the algorithm will stay in the good state upon transiting there.
The second inequality holds because the algorithm will behave exactly the same on any of the hard MDPs under the event $\mathcal{E}$.
So, denoting $\theta' \coloneq (s',a')$, we have shown that $\E_{\theta'}[N_{\theta'}] \leq \frac{D}{8}$.

Next, we compute that
\[
\P_{\theta'}(\mathcal{E})
\geq
\P_{\theta'}\left(\mathcal{E} \mid N_{\theta'} < D/4\right) \P_{\theta'}\left(N_{\theta'} < D/4\right)
\geq
\left(1-\frac{2}{D}\right)^{D/4} \frac{1}{2}
\geq
\frac{1}{4}
\]
where the last inequality is due to Bernoulli's inequality (Lemma~\ref{lem:bernoulli}), and the second inequality is due to
\[
\P_{\theta'}(N_{\theta'} \geq D/4) \leq \frac{\E_{\theta'}[N_{\theta'}]}{D/4} \leq \frac{1}{2}.
\]
Since
\[
\E_{\theta'}[N]
\geq
\E_{\theta'}[N\mid \mathcal{E}] \P_{\theta'}(\mathcal{E})
\geq
\frac{T}{4},
\]
we conclude that $\E_{\theta'}[\reg(T)] \geq \frac{T}{4}$.
\end{proof}

\section{Proof of Theorem~\ref{thm:no_prior_knowledge_H2_lb}} \label{sec:no_prior_H2_lb_proof}

In this section we prove the lower bound Theorem \ref{thm:no_prior_knowledge_H2_lb}. First we show the following intermediate result.

\begin{figure}[ht]
    \centering
    \begin{minipage}{0.45\textwidth}
        \centering
        \begin{tikzpicture}[
    scale=0.8,
    node distance=1.5cm,
    dot/.style={circle, fill=black, inner sep=2pt, outer sep=0pt},
    edge/.style={draw=black, line width=1pt, ->, -{Triangle[length=3mm, width=2mm]}, line cap=round},
    loop edge/.style={draw=black, line width=1pt, ->, -{Triangle[length=3mm, width=2mm]}, looseness=20, min distance=8mm},
    dashed edge/.style={draw=black, dashed, line width=1pt, ->, -{Triangle[length=3mm, width=2mm]}},
    font=\sffamily\bfseries
]

    
    \node[dot] (root) at (0,0) {};
    
    \node[dot] (L1) at (-2, -2) {};
    \node[dot] (L2) at (2, -2) {};
    
    \node[dot] (L1C1) at (-2.7, -3.5) {};
    \node[dot] (L1C2) at (-1.3, -3.5) {}; 

    \node[dot, inner sep=1pt] (action) at (-1.3, -4.5) {};
    
    \node[dot] (L2C1) at (1.3, -3.5) {};
    \node[dot] (L2C2) at (2.7, -3.5) {};

    \node[dot] (R1) at (0, -5.5) {};

    \draw[edge] (root) -- (L1);
    \draw[edge] (root) -- (L2);
    
    \draw[edge] (L1) -- (L1C1);
    \draw[edge] (L1) -- (L1C2);
    
    \draw[edge] (L2) -- (L2C1);
    \draw[edge] (L2) -- (L2C2);

    \foreach \n in {L1C1, L2C1, L2C2} {
        \draw[loop edge, looseness=18, min distance=5mm] (\n) to [out=315, in=20] (\n);
    }
    
    \draw[loop edge] (R1) to [out=315, in=40] (R1);

    \draw[edge, draw=black] (L1C2) to [out=-120, in=120] (action);
    \draw[dashed edge] (action) to [out=10, in=-10] (L1C2);
    \draw[dashed edge] (action) to [out=-100, in=170] 
        node[midway, left=2mm, black] {$2/B$} (R1);
     
    \node[anchor=west] at (0.5, 0) {$r=1/2$};
    \node[anchor=west] at (2.5, -2) {$r=1/2$};
    \node[anchor=west] at (4, -3.5) {$r=0$};
    \node[anchor=west] at (1, -5.5) {$r=1$};
    
        \end{tikzpicture}
        \\[1ex] 
        $P_{(1,2,1)}$
    \end{minipage}
    \hfill 
    \begin{minipage}{0.45\textwidth}
        \centering
        \begin{tikzpicture}[
    scale=0.8,
    node distance=1.5cm,
    dot/.style={circle, fill=black, inner sep=2pt, outer sep=0pt},
    edge/.style={draw=black, line width=1pt, ->, -{Triangle[length=3mm, width=2mm]}, line cap=round},
    loop edge/.style={draw=black, line width=1pt, ->, -{Triangle[length=3mm, width=2mm]}, looseness=20, min distance=8mm},
    dashed edge/.style={draw=black, dashed, line width=1pt, ->, -{Triangle[length=3mm, width=2mm]}},
    font=\sffamily\bfseries
]

    
    \node[dot] (root) at (0,0) {};
    
    \node[dot] (L1) at (-2, -2) {};
    \node[dot] (L2) at (2, -2) {};
    
    \node[dot] (L1C1) at (-2.7, -3.5) {};
    \node[dot] (L1C2) at (-1.3, -3.5) {}; 

    \node[dot, inner sep=1pt] (action) at (-1.3, -4.5) {};
    
    \node[dot] (L2C1) at (1.3, -3.5) {};
    \node[dot] (L2C2) at (2.7, -3.5) {};

    \node[dot] (R1) at (0, -5.5) {};

    \draw[edge] (root) -- (L1);
    \draw[edge] (root) -- (L2);
    
    \draw[edge] (L1) -- (L1C1);
    \draw[edge] (L1) -- (L1C2);
    
    \draw[edge] (L2) -- (L2C1);
    \draw[edge] (L2) -- (L2C2);

    \foreach \n in {L1C1, L2C1, L2C2} {
        \draw[loop edge, looseness=18, min distance=5mm] (\n) to [out=315, in=20] (\n);
    }
    
    \draw[edge] (R1) to[out=60, in=-100] (root);
    

    \draw[edge, draw=black] (L1C2) to [out=-120, in=120] (action);
    \draw[dashed edge] (action) to [out=10, in=-10] (L1C2);
    \draw[dashed edge] (action) to [out=-100, in=170] 
        node[midway, left=2mm, black] {$2/B$} (R1);
     
    \node[anchor=west] at (0.5, 0) {$r=1/2$};
    \node[anchor=west] at (2.5, -2) {$r=1/2$};
    \node[anchor=west] at (4, -3.5) {$r=0$};
    \node[anchor=west] at (1, -5.5) {$r=1$};
    
        \end{tikzpicture}
        \\[1ex] 
        $P_{(2,2,1)}$
    \end{minipage}
    \caption{An example of the MDPs used in the proof of Theorem~\ref{thm:lower_bound_prior}. If the transition associated with a state-action pair is deterministic, it is denoted with a solid arrow. If it is stochastic, it is represented as a solid line splitting into multiple dashed arrows to different states, each annotated with the associated probability of that transition. The MDPs are parameterized by $B > 1.$
    Some actions, such as those which transit from leaf states back to the root state, are omitted.}
    \label{fig:lower_bound_prior_mdp}
\end{figure}
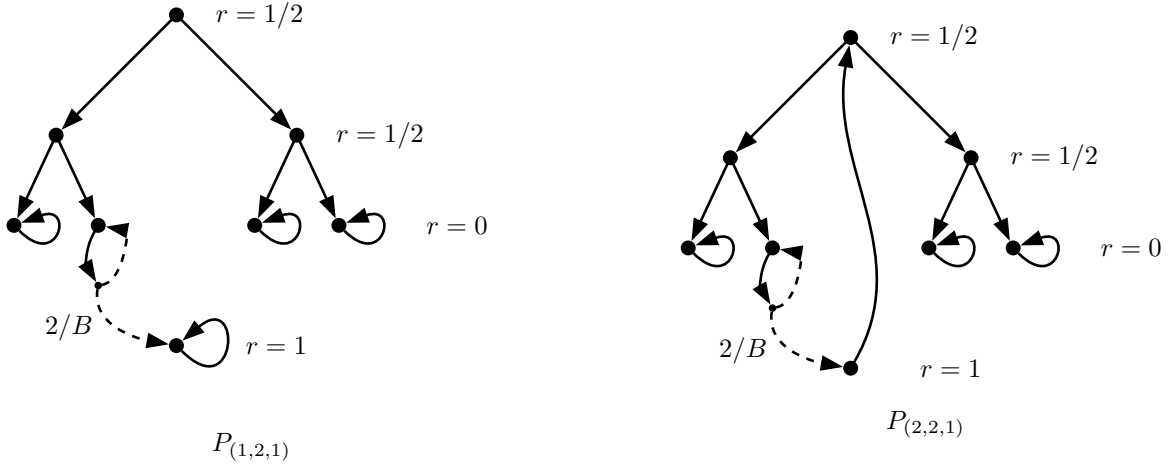

\begin{theorem} \label{thm:lower_bound_prior}
    There exist universal constants $c_1>1$ and $0<c_2<1$ such that the following holds.
    Fix integers $T \geq 1, S\geq 2, A\geq 2$, and fix $B \geq \max\{c_1, 2\lceil\log_A S\rceil\}$. Let $\texttt{Alg}$ be any horizon-$T$ algorithm. Then there exist two communicating MDPs $P_1, P_2$ such that:
    \begin{enumerate}
        \item $P_1$ and $P_2$ both have $S$ states and $A$ actions.
        \item $\spannorm{\hstar_{P_1}} = B$ and $\spannorm{\hstar_{P_2}} = \frac{1}{2}$.
        \item If $\E_{P_1}^\texttt{Alg} [\reg(T)] < T/4$, then $\E_{P_2}^\texttt{Alg} [\reg(T)] \geq c_2 BSA$.
    \end{enumerate}
\end{theorem}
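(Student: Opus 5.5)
We would prove Theorem~\ref{thm:lower_bound_prior} with the tree construction of Figure~\ref{fig:lower_bound_prior_mdp}, reusing the argument from Theorem~\ref{thm:burn_in_lb}. This is a plan; one step (the transfer to $P_2$ in the third paragraph) is not yet settled.

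\textbf{Construction.} Connect states $1,\dots,S-1$ by an $A$-ary tree of depth at most $\lceil\log_A S\rceil$ with at least $S'=\lceil\frac{S-1}{2}\rceil$ leaves.
\begin{itemize}
\item The root and the internal nodes have reward $1/2$, and each has a deterministic self-loop.
\item Leaves have reward $0$. At each leaf, the actions not used to descend are self-loops, except one action that returns to the root.
\item State $S$ is the good state, with reward $1$.
\end{itemize}
For $\theta=(s,a)\in[S']\times[A']$ (with $A'=A-1$), the leaf action $(s,a)$ moves to the good state with probability $2/B$ and otherwise stays put. In $P_{(1,\theta)}$ the good state is absorbing. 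In $P_{(2,\theta)}$ the reward-$1$ action at the good state returns to the root.

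The first routine step is to check the gains and spans.
\begin{itemize}
\item In $P_{(1,\theta)}$ we have $\rhostar=1$, and the bias is driven by the roughly $B/2$ expected zero-reward leaf steps needed to reach the good state. After tuning the constant $2/B$ and the placement of rewards as in Figure~\ref{fig:lower_bound_prior_mdp}, this should give $\spannorm{\hstar_{P_1}}=B$.
\item In $P_{(2,\theta)}$ any excursion through the good state earns at most $1$ reward over about $B/2$ zero-reward steps. So $\rhostar=1/2$ is attained by looping at the root, and $\spannorm{\hstar_{P_2}}=1/2$.
\item The condition $B\ge 2\lceil\log_A S\rceil$ keeps the diameter finite and comparable to $B$, so both MDPs are communicating.
\end{itemize}

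\textbf{Choosing $\theta^*$ and the $P_1$ side.} Introduce a reference MDP $P_0$ that is identical except that no leaf action ever reaches the good state. Fix \texttt{Alg}, let $N_\theta$ be the number of times $\theta$ is played, and choose $\theta^*\in\arg\min_\theta \E_{P_0}[N_\theta]$. Set $P_1=P_{(1,\theta^*)}$ and $P_2=P_{(2,\theta^*)}$.

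The coupling fact is that $P_0$, $P_1$ and $P_2$ generate identical trajectory laws up to the first transition into the good state. Let $\tau$ be that first success time. Under $P_1$, each play of $\theta^*$ succeeds with probability $2/B$, so
\[
\P_{P_1}(\tau\le T)\le \tfrac{2}{B}\,\E_{P_0}[N_{\theta^*}].
\]
Before $\tau$ the reward under $P_1$ is at most $1/2$ per step, while $\rhostar=1$. Hence if $\P_{P_1}(\tau\le T)\le 1/2$, then $\E_{P_1}[\reg(T)]\ge \frac{T}{2}\cdot\frac12=T/4$. Contrapositively, $\E_{P_1}[\reg(T)]<T/4$ forces $\E_{P_0}[N_{\theta^*}]\ge B/4$. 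By the minimality of $\theta^*$, $\E_{P_0}[N_\theta]\ge B/4$ for every $\theta$, so
\[
\E_{P_0}\Big[\textstyle\sum_\theta N_\theta\Big]\ge BS'A'/4=\Omega(BSA).
\]

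\textbf{Transfer to $P_2$ (main obstacle).} In $P_2$ we have $\rhostar=1/2$, and each leaf step earns $0$. So regret is at least $\frac12$ times the number of leaf steps, minus the at most $1/2$ per-visit gain from good-state visits; each such visit requires on average $B/2$ further leaf plays of $\theta^*$, so these corrections are dominated. The difficulty is that the $P_0$ lower bound on leaf visits must be transferred to $P_2$, and the two laws agree only before $\tau$. I would try two things.
\begin{itemize}
\item First, use the truncated count. Since $P_2$ and $P_0$ coincide before $\tau$,
\[
\E_{P_2}\Big[\textstyle\sum_\theta N_\theta\Big]\ge \E_{P_0}\Big[\textstyle\sum_\theta N_\theta(\tau\wedge T)\Big],
\]
where $N_\theta(\tau\wedge T)$ counts plays of $\theta$ before $\tau\wedge T$. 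Then argue that early success costs little, because $\tau$ is stochastically at least a geometric number of $\theta^*$-plays with mean $B/2$.
\item If that fails, choose $\theta^*$ to minimize the truncated quantity $\E_{P_0}[N_\theta\wedge cB]$, and rerun the averaging step with it. Every $\theta$ then contributes at least $cB/4$ leaf visits that occur, with constant probability, before $\tau$.
\end{itemize}
Either route should give $\E_{P_2}[\reg(T)]\ge c_2BSA$. The constants $c_1,c_2$ are fixed at the end so that the Markov-type estimates and $(1-2/B)^{B/4}\ge 1/4$ (Lemma~\ref{lem:bernoulli}) hold.
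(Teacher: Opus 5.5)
\textbf{There is a genuine gap, in exactly the step you flag.} Your $P_1$ half is sound: the coupling bound $\P_{P_1}(\tau\le T)\le\frac{2}{B}\E_{P_0}[N_{\theta^*}]$ holds, and so does the resulting $T/4$ dichotomy. The transfer to $P_2$, however, cannot work with $\theta^*$ chosen by counts alone.

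The regret on $P_{(2,\theta^*)}$ comes from leaf plays made \emph{before} the first success $\tau$. Whether $\tau$ comes early is governed by \emph{when} $\theta^*$ is played relative to the other pairs. Neither $\arg\min_\theta\E_{P_0}[N_\theta]$ nor $\arg\min_\theta\E_{P_0}[N_\theta\wedge cB]$ sees this ordering.

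Here is a concrete failure, with $M=S'A'$ and $L=\lceil B\log M\rceil$. Take a deterministic algorithm that:
\begin{itemize}
\item first plays $\theta_1$ for $L$ rounds, then each other pair for $L+1$ rounds;
\item on reaching the good state, takes the reward-$1$ action, and if it is sent back to the root, stays on reward-$1/2$ actions forever.
\end{itemize}
Then $\theta_1$ is the unique minimizer of $\E_{P_0}[N_\theta]$, and it ties for the truncated version.

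On $P_{(1,\theta_1)}$, success occurs within the first block with probability at least $1-M^{-2}$. So $\E[\reg(T)]\le O(B+\log_A S)+T/M^2<T/4$ for large $T$. On $P_{(2,\theta_1)}$, the algorithm likewise succeeds in the first block with high probability and then stops exploring. So $\E[\reg(T)]\le B/4+O(B\log M/M)=O(B)$, which is far below $c_2BSA$ once $SA$ is large.

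Your route 1 therefore has the sign backwards. Early success is not cheap for the lower bound; it is exactly what cuts exploration short. Route 2's claim also fails: with high probability no pair other than $\theta_1$ is visited before $\tau$.

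\textbf{The missing idea is an order-sensitive choice of $\theta^*$.} In the no-success ($P_0$) sequence of leaf plays, let $t_k(\theta)$ be the index of the $k$-th play of $\theta$. Then
\[
\E_{P_{(2,\theta)}}[\nleave]\ \ge\ \sum_{k\le cB}t_k(\theta)\Bigl(1-\tfrac{2}{B}\Bigr)^{k-1}\tfrac{2}{B},
\]
and you should take $\theta^*$ to maximize the right-hand side, i.e.\ the pair whose first $\Theta(B)$ plays come latest. The paper's Lemma~\ref{lem:sum_lower_bound} shows, by averaging over $\theta$ and applying the rearrangement inequality, that this maximum is $\Omega(BM)$. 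It requires that \emph{every} pair occurs at least $cB$ times.

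Once $\theta^*$ is chosen this way, the single hypothesis on $P_{(1,\theta^*)}$ no longer forces all pairs to be explored. So you also need a case split:
\begin{itemize}
\item If some $P_{(1,\theta)}$ has expected regret at least $T/4$, output that pair; the implication then holds vacuously.
\item Otherwise, apply your coupling bound to every $\theta$. For a deterministic algorithm it gives $N_\theta>B/4$ in the $P_0$ sequence for all $\theta$, which is exactly the input the lemma needs.
\end{itemize}
This is the paper's argument.

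A smaller point: making the good state absorbing in $P_1$ breaks communicativity. Keep the other actions at the good state returning to the root, as the paper does.
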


\begin{proof}
Let $T\geq 1, S\geq 2, A\geq 2$ be integers, and let $B\geq \max\{50, \lceil \log_A S\rceil\}$.
Further define $S' = \lceil \frac{S-1}{2}\rceil$ and $A'=A-1$.
We will construct a family of MDPs $\left\{P_{(i,s,a)} \mid (i,s,a)\in [2]\times[S']\times [A']\right\}$, where each $P_{(i,s,a)}$ will have a tree construction similar to that used in the proof of Theorem~\ref{thm:burn_in_lb} in Section~\ref{sec:burn_in_lb_proof} (see Figure~\ref{fig:lower_bound_prior_mdp}). 
Specifically, we again use an $A$-ary tree structure to connect states $1,\dots,S-1$ by assigning $A$ actions with deterministic transitions at each non-leaf state, with each action going to a distinct child node.
If a non-leaf node has fewer than $A$ children, the remaining actions are deterministic self-loops.
We use a tree with depth at most $\lceil \log_A S \rceil \leq B/2$ and at least $S'$ leaf nodes.

For the MDPs $P_{(1,s,a)}$ and $P_{(2,s,a)}$, in all leaf states other than $s$, actions $1,\dots,A'$ are deterministic self-loops, and action $A$ is a deterministic transition back to the root.
State $s$ is identical except that action $a$ transits to state $S$ outside the tree with probability $2/B$.
In state $S$, actions $2,\dots,A$ are all deterministic transitions back to the root.
The reward is 1/2 for any action in any non-leaf state or any deterministic transition from a tree state back to the root.
The reward for action 1 in state $S$ is 1, and the reward at any other state-action pair is 0.

The only difference between $P_{(1,s,a)}$ and $P_{(2,s,a)}$ is that in $P_{(1,s,a)}$, action 1 is a deterministic self-loop, while in $P_{(2,s,a)}$, action 1 is a deterministic transition back to the root.
Subsequently, it is straightforward to see that the optimal strategy in $P_{(1,s,a)}$ is to take action $(s,a)$ until a transition to state $S$ is observed, then repeatedly take action 1.
On the other hand, in $P_{(2,s,a)}$ it is not worth trying to reach state $S$, so the optimal strategy is to stay in the tree and only take reward 1/2 actions.
For a learner to distinguish between the two MDPs, it must reach state $S$ and take action 1, but that will take many time steps when the correct $(s,a)$ is not known beforehand.
One can easily verify that the span of the optimal bias function is at most $B$ for all $P_{(1,s,a)}$ and $\frac{1}{2}$ for all $P_{(2,s,a)}$, as required.

We now consider any horizon-$T$ algorithm \texttt{Alg}.
WLOG we can assume the following:
\begin{enumerate}[noitemsep,nolistsep]
    \item \texttt{Alg} is deterministic, meaning that given a sequence of past states and actions, the next action is computed by a deterministic function.
    This assumption can be justified via a standard argument that any randomized strategy is equivalent to some random choice from the set of all deterministic strategies (\cite{auer2002nonstochastic}, \cite{auer_near-optimal_2008}).
    \item Once a transition to state $S$ is observed, \texttt{Alg} acts optimally.
    If this were not the case, the expected regret would only increase.
\end{enumerate}
Since we assume that \texttt{Alg} is deterministic, under the event that no transition to state $S$ occurs \texttt{Alg}, will always observe the deterministic sequence of state-action pairs $(s^{(1)},a^{(1)}),\dots,(s^{(T)},a^{(T)})$. For any $(s,a)$, we denote by $t_k(s,a)$ the index of the $k$th occurrence of $(s,a)$ in this sequence, and we denote by $n_t(s,a)$ the number of times $(s,a)$ occurs through index $t$ in this sequence. 

For $\theta\in [2]\times[S']\times[A']$, let $\P_\theta$ and $\E_\theta$ denote the probability and expectation, respectively, induced by \texttt{Alg} when the underlying MDP is $P_\theta$.
For each $(s,a)$, let $N_{(s,a)}$ denote the number of times the learner takes action $a$ in state $s$.
Further write $\nleave = \sum_{(s,a)\in[S']\times[A']} N_{(s,a)}$, and let $\nstay$ be the number of times the learner takes a reward $\frac{1}{2}$ action.
Additionally, let $\mathcal{E}$ denote the event that the algorithm observes a transition to state $S$.

When the underlying MDP is some $P_{(1,s,a)}$ the optimal gain is 1, and hence the algorithm adds 1 to the regret any time it tries to reach state $S$ (i.e. some state-action in $[S']\times[A']$ and adds 1/2 to the regret any time it takes a reward 1/2 action within the tree. In particular,
\[
\reg(T, P_{(1,s,a)},\texttt{Alg}) \geq T - \frac{1}{2} \nstay - N_{(S,1)}.
\]
Similarly, when the underlying MDP is some $P_{(2,s,a)}$, the optimal gain is 1/2, and hence the algorithm adds 1/2 to the regret any time it tries to reach state $S$. It may subtract 1/2 from the regret in the event that it does $(S,1)$. Consequently,
\[
\reg(T, P_{(2,s,a)},\texttt{Alg}) \geq \frac{1}{2}\nleave - \frac{1}{2}N_{(S,1)} \geq \frac{1}{2}\nleave - \frac{1}{2}
,\]
with the second inequality due to the fact that the algorithm will try $(S,1)$ at most once in $P_{(2,s,a)}$. 

Now, let $(s,a)\in[S']\times[A']$.
Assuming that $\E_{(1,s,a)}[\reg(T)] < \frac{T}{4}$, we will work to show a lower bound on $n_T(s,a)$.
Under our assumption, we have
\begin{align*}
&\frac{T}{4} > T - \frac{1}{2}\E_{(1,s,a)}\left[\nstay \right] - \E_{(1,s,a)}\left[N_{(S,1)}\right] \\
& \implies 
\frac{1}{2}\E_{(1,s,a)}[\nstay] + \E_{(1,s,a)}[N_{(S,1)}] > \frac{3T}{4} \\
& \implies
\E_{(1,s,a)}[N_{(S,1)}] > \frac{T}{4},
\end{align*}
with the second implication following from the trivial fact that $\nstay \leq T$.
Furthermore, since $N_{(S,1)} \leq T\ind(\mathcal{E})$, we have
\begin{equation*}
\frac{T}{4} < \E_{(1,s,a)}[N_{(S,1)}] \leq T\P_{(1,s,a)}(\mathcal{E})
\implies
\P_{(1,s,a)}(\mathcal{E}) > \frac{1}{4}.
\end{equation*}
We have shown that there is a constant probability of observing a transition to state $S$, and we will use this fact to show that $N_{(s,a)}$ is large with constant probability.
Towards this end, we compute
\begin{align*}
\P_{(1,s,a)}\left(\mathcal{E}^c\right)
&\geq  \P_{(1,s,a)}\left(\mathcal{E}^c \mid N_{(s,a)} < \frac{B}{10} \right) \P_{(1,s,a)}\left(N_{(s,a)} < \frac{B}{10} \right) \\
&\geq \left( 1-\frac{2}{B} \right)^\frac{B}{10} \P_{(1,s,a)}\left(N_{(s,a)} < \frac{B}{10} \right) \\
&\geq \frac{4}{5} \P_{(1,s,a)}\left(N_{(s,a)} < \frac{B}{10} \right),
\end{align*}
with the last inequality holding due to Bernoulli's inequality (Lemma~\ref{lem:bernoulli}).
Hence,
\[
\frac{1}{4}
<
\P_{(1,s,a)}(\mathcal{E})
=
1 - \P_{(1,s,a)}\left(\mathcal{E}^c\right)
\leq
1- \frac{4}{5} \P_{(1,s,a)}\left(N_{(s,a)} < \frac{B}{10} \right),
\]
which implies that
\[
\P_{(1,s,a)}\left(N_{(s,a)} < \frac{B}{10} \right) < \frac{3}{4} \cdot \frac{5}{4} = \frac{15}{16}.
\]
Therefore,
$\P_{(1,s,a)}(N_{(s,a)} \geq \frac{B}{10} ) \geq \frac{1}{16}$, which implies that $n_T(s,a) \geq \frac{B}{10}$ (otherwise $\P_{(1,s,a)}(N_{(s,a)} \geq \frac{B}{10})$ would be 0). 
Since $(s,a)$ was arbitrary, we have shown that $n_T(s,a) \geq \frac{B}{10}$ for all $(s,a)\in[S']\times[A']$.

Our next step is to derive a lower bound on $\E_{(1,s,a)}[\nleave]$ for some $(s,a)$.
Observe that for any $(s,a)$, we have
\[
\E_{(1,s,a)}[\nleave]
=
\sum_{t=1}^T t \,\P_{(1,s,a)}(\nleave = t).
\]
Furthermore, for $t<T$, $\nleave=t$ occurs precisely when $(s_t,a_t)=(s,a)$, all previous occurrences of $(s,a)$ do not result in a transition to $S$, and this occurrence of $(s,a)$ does result in a transition to $S$.
$\nleave=T$ is similar except that it does not require a transition to $S.$
In other words, we have
\[
\E_{(1,s,a)}[\nleave]
\geq
\sum_{t=1}^T t \, \ind\Big((s_t,a_t)=(s,a)\Big) \left(1 - \frac{2}{B}\right)^{n_t(s,a)-1} \frac{2}{B}.
\]
Writing $t_k(s,a)$ to be the $k$th time $(s,a)$ occurs, we can lower bound this sum by
\[
\sum_{k=1}^{\lfloor B/8 \rfloor} t_k(s,a) \left(1 - \frac{2}{B}\right)^{k-1} \frac{2}{B}.
\]
The following technical lemma, the proof of which we postpone, shows that we can bound this sum for at least one $(s,a)$.
Intuitively, the sum will be large enough when the $t_k(s,a)$ are large, and because all $(s,a)$ occur many times, there is at least one $(s,a)$ whose indices are sufficiently large.
\begin{lemma} \label{lem:sum_lower_bound}
    Let $B>0$ and $c\in(0,1)$ such that $\lceil cB \rceil \geq 2$.
    Let $M$ be a positive integer.
    Let $z_1,z_2,\dots$ be a sequence of integers such that each $i\in\{1,\dots,M\}$ occurs at least $cB$ times.
    Let $t_k(i)$ be the index of the $k$th occurrence of value $i$.
    Then
    \[
    \max_{i\in\{1,\dots,M\}} \sum_{k=1}^{\lfloor cB \rfloor} t_k(i) \left(1-\frac{2}{B}\right)^{k-1} \frac{2}{B} \geq \frac{c^2(1-2c)}{2} BM.
    \]
    
\end{lemma}
An application of Lemma~\ref{lem:sum_lower_bound} with $c = 1/10$ and $M=S'A'$ gives us that there exists some $(s',a')$ satisfying
\[
\E_{(1,s',a')}[\nleave] \geq \frac{1}{250} BS'A'.
\]

Finally, it is not hard to see that $\E_{(1,s',a')}[\nleave] = \E_{(2,s',a')}[\nleave]$, so
\[
\E_{(2,s',a')}[\reg(T,P_{(2,s,a)},\texttt{Alg})] \geq \E_{(2,s',a')}\left[ \frac{1}{2} \nleave - \frac{1}{2}\right] 
\geq
\frac{BS'A'}{250} - \frac{1}{2} 
\geq
\frac{BSA}{2000} - \frac{1}{2}
\geq
\frac{BSA}{4000},
\]
where the final inequality is due to the fact that $B\geq 500 \implies \frac{BSA}{4000} \geq \frac{1}{2}$.
We conclude that the requirements of the theorem hold with $c_1 = 500, c_2=1/4000, P_1 = P_{(1,s',a')}$, and $P_2 = P_{(2,s',a')}$.

\end{proof}

We now prove Lemma~\ref{lem:sum_lower_bound}.
\begin{proof}
Fix arbitrary $B>0$ and $c\in(0,1)$ satisfying $\lceil cB \rceil \geq 2$, let $M\in\mathbb{Z}_{\geq 1}$, and let $z_1,z_2,\dots$ be a sequence of integers such that $\sum_{j=1}^\infty \ind(z_j = i) \geq cB$ for all $i\in\{1,\dots,M\}$.
For ease of presentation, write $B'\coloneq \lceil cB \rceil$ and $w_k \coloneq \left( 1 - \frac{2}{B} \right)^{k-1} \frac{2}{B}$ so that our goal is to bound
\[
\max_{i\in\{1,\dots,M\}} \sum_{k=1}^{B' } t_k(i) w_k.
\]
First, we use that the max is greater the average, so that
\[
\max_{i\in\{1,\dots,M\}} \sum_{k=1}^{B'} t_k(i) w_k
\geq
\frac{1}{M}\sum_{i=1}^M\sum_{k=1}^{B'} t_k(i) w_k.
\]
Observing that $\{t_k(i)\}=\{1,\dots,MB'\}$, we then reindex the sum and write
\[
\frac{1}{M}\sum_{i=1}^M\sum_{k=1}^{B'} t_k(i) w_k
=
\frac{1}{M}\sum_{t=1}^{MB'} t\, w_{n(t)},
\]
where $n(t)$ is defined as the number of times that $z_t$ appears through index $t$.
Furthermore, the rearrangement inequality (Lemma~\ref{lem:rearrangement}) gives us that
\[
\frac{1}{M}\sum_{t=1}^{MB'} t\, w_{n(t)}
\geq
\frac{1}{M}\sum_{t=1}^{MB'} t\, w_{\lceil t/M \rceil}
\geq
\frac{1}{M}\sum_{j=1}^{B'} M((j-1)M+1) w_j
\geq
M\sum_{j=1}^{B'} (j-1) w_j.
\]
It remains to analyze $M\sum_j (j-1)w_j$.
Note that for any $j\in\{1,\dots,B'\}$, Bernoulli's inequality (Lemma~\ref{lem:bernoulli}) gives us
\[
w_j 
= 
\left(1-\frac{2}{B}\right)^{j-1} \frac{2}{B}
\geq
\left(1-\frac{2}{B}\right)^{B'-1} \frac{2}{B}
\geq
\left(1-\frac{2(B'-1)}{B}\right) \frac{2}{B}
\geq
\frac{2-4c}{B}.
\]
Consequently,
\[
M\sum_{j=1}^{B'} (j-1)w_j
\geq
\frac{M(2-4c)}{B} \sum_{j=1}^{B'-1} j
=
\frac{M(2-4c)}{B} \frac{B'(B'-1)}{2}
\geq
M \frac{1}{B} \frac{2-4c}{2} cB \frac{cB}{2}
\geq
\frac{c^2(1-2c)}{2} BM.
\]
\end{proof}

Since we have fully established Theorem~\ref{thm:lower_bound_prior}, we can turn to proving Theorem~\ref{thm:no_prior_knowledge_H2_lb}.
\begin{proof}
Let $S\geq 2$ and $A\geq 2$ be integers.
Fix some $\alpha\in[1,2)$, and let $T > SA(c_4 \beta_T)^{\frac{4}{2-\alpha}}$, where $c_4\geq 1$ is a universal constant to be defined later.
Suppose that some horizon-$T$ algorithm \texttt{Alg} has for all MDPs $P$,
\begin{align*}
    \E[\reg(T,P,\texttt{Alg})] \leq \sqrt{\beta_T \spannorm{h_{P}^\star} SAT} + \beta_T SA \spannorm{h_{P}^\star}^{\alpha}.
\end{align*}
We want to use Theorem \ref{thm:lower_bound_prior} to show a contradiction. Hence we need a choice of $B$ such that
\begin{align}
    \sqrt{\beta_T B SAT} + \beta_T SA B^\alpha  &< T/4 \label{eq:lb_cor_cond_1}\\
    \sqrt{\beta_T SA T/2} + \beta_T SA/2^\alpha & < c_2 BSA. \label{eq:lb_cor_cond_2}
\end{align}
We will set $B$ in terms of $T,S,A$ to be as small as possible such that the second inequality holds, and then show that the first inequality holds under the assumed conditions.

Assuming that $T > \beta_T SA$, we can then derive that 
\begin{align*}
    \sqrt{\beta_T SA T/2} + \beta_T SA/2^\alpha & < 2\sqrt{\beta_T SA T/2},
\end{align*}
so we can satisfy~\eqref{eq:lb_cor_cond_2} by setting $c_2 BSA = 2\sqrt{\beta_T SA T/2} \iff B = \sqrt{\frac{c_3 \beta_T T}{SA}}$, where $c_3\coloneq \frac{2}{c_2\sqrt{2}}$. Now checking that our choice of $B$ admits~\eqref{eq:lb_cor_cond_1}, we calculate the equivalence
\begin{align*}
    &\sqrt{\beta_T B SA T} + \beta_T SA B^\alpha < T/4 \\
    \iff & \left(c_3^2 \beta_T^3 T^3 SA\right)^\frac{1}{4} + \beta_T SA \left(\sqrt{\frac{c_3 \beta_T T}{SA}} \right)^\alpha < T/4. 
\end{align*}
Hence a sufficient condition for~\eqref{eq:lb_cor_cond_1} is that both of the following are true:
\begin{align}
    \left(c_3^2 \beta_T^3 T^3 SA\right)^\frac{1}{4} & < T/8 \label{eq:lb_cor_cond_3}\\
    \beta_T SA \left( \sqrt{\frac{c_3 \beta_T T}{SA}} \right)^\alpha & < T/8. \label{eq:lb_cor_cond_4}
\end{align}
The condition~\eqref{eq:lb_cor_cond_3} is equivalent to 
\begin{align*}
    & \left(c_3^2 \beta_T^3 T^3 SA\right)^\frac{1}{4} < T/8 \\
    \iff &c_3^2 \beta_T^3 T^3 SA < T^4/8^4 \\
    \iff & T > 8^4 c_3^2 \beta_T^3 SA. 
\end{align*}
Defining $c_4\coloneq 8c_3$, for condition~\eqref{eq:lb_cor_cond_4} we compute
\begin{align*}
    & \beta_T SA \left( \sqrt{\frac{c_3 \beta_T T}{SA}} \right)^\alpha < T/8 \\
    \iff & T^\frac{2-\alpha}{2} > 8c_3^{\frac{\alpha}{2}}\beta_T^\frac{2+\alpha}{2} (SA)^\frac{2-\alpha}{2} \\
    \iff & T > SA 8 c_3^{\frac{\alpha}{2-\alpha}} \beta_T^{\frac{2+\alpha}{2-\alpha}}\\
    \impliedby & T> SA(c_4\beta_T)^{\frac{4}{2-\alpha}}
\end{align*}
where the final implication is due to $\alpha < 2$ and $c_3 > 1$.

So the desired contradiction holds as long as $T > \beta_T SA$, $T > 8^4 c_3^2\beta_T^3SA$, and $T > SA(c_4\beta_T)^{\frac{4}{2-\alpha}}$, but all of these conditions are clearly implied by $T > SA(c_4\beta_T)^{\frac{4}{2-\alpha}}$, since $\alpha \geq 1$ implies that $SA(c_4\beta_T)^{\frac{4}{2-\alpha}} \geq SA(8 c_3 \beta_T)^4$.
\end{proof}

\end{document}